\documentclass[11pt]{article}
\pdfoutput=1
\usepackage[authoryear, round]{natbib}

\usepackage{bm}
\usepackage{fullpage}
\usepackage[english]{babel}
\usepackage[utf8x]{inputenc}
\usepackage[T1]{fontenc}
\usepackage{ae} \usepackage{aecompl}
\usepackage{enumitem}
\setlist{nosep}
\usepackage{microtype}
\usepackage{booktabs}
\usepackage{hhline}
\usepackage{makecell}

\usepackage{etoolbox}
\apptocmd{\sloppy}{\hbadness 10000\relax}{}{}

\usepackage{amsmath,amsthm,amssymb,mathtools,thmtools}
\usepackage{graphicx}
\usepackage[textsize=scriptsize,disable]{todonotes}
\usepackage[colorlinks=true, allcolors=blue]{hyperref}
\usepackage{algorithm,algorithmicx,algpseudocode}
\usepackage{xfrac}
\usepackage[normalem]{ulem}

\setlength{\parskip}{6pt}
\setlength{\parindent}{0pt}

\bibliographystyle{plainnat}

\hypersetup{colorlinks,
            breaklinks=true,
            linkcolor=blue,
            citecolor=blue,
            urlcolor=magenta,
            linktocpage,
            plainpages=false,
            bookmarks=false}

\usepackage{notations_arxiv}
\usepackage{comment}
\usepackage{tikz,pgfplots,pgfplotstable}
\usepgfplotslibrary{units}
\usepackage{tabularx} 
\usepackage{subfig}
\usepackage{array}
\usepgfplotslibrary{groupplots}

\title{On the Role of Noise in the Sample Complexity of Learning Recurrent Neural Networks: Exponential Gaps for Long Sequences}

\author{
    Alireza Fathollah Pour\thanks{McMaster University, \texttt{fathola@mcmaster.ca}}
    \and 
    Hassan Ashtiani\thanks{McMaster University, \texttt{zokaeiam@mcmaster.ca}. Hassan Ashtiani is also a faculty affiliate at Vector Institute and supported by an NSERC Discovery Grant.}
}

\numberwithin{equation}{section}

\begin{document}

\maketitle

\begin{abstract}

  We consider the class of noisy multi-layered sigmoid recurrent neural networks with $w$ (unbounded) weights for classification of sequences of length $T$, where independent noise distributed according to $\cN(0,\sigma^2)$ is added to the output of each neuron in the network. Our main result shows that the sample complexity of PAC learning this class can be bounded by $O (w\log(T/\sigma))$. For the non-noisy version of the same class (i.e., $\sigma=0$), we prove a lower bound of $\Omega (wT)$ for the sample complexity. 
  Our results indicate an exponential gap in the dependence of sample complexity on $T$ for noisy versus non-noisy networks. Moreover, given the mild logarithmic dependence of the upper bound on $1/\sigma$, this gap still holds even for numerically negligible values of $\sigma$.
  
\end{abstract}

\section{Introduction}

Recurrent Neural Networks (RNNs) are effective tools for processing sequential data. They are used in numerous applications such as speech recognition~\citep{graves2013speech}, computer vision~\citep{karpathy2015deep}, translation~\citep{sutskever2014sequence}, modeling dynamical systems~\citep{hardt2016gradient} and time series~\citep{qin2017dual}. Recurrent models allow us to design classes of predictors that can be applied to (i.e., take input values from) sequences of arbitrary length. For processing a sequence of $T$ elements, a predictor $f$ (e.g., a neural network) ``consumes'' the input elements one by one, generating an output at each step. This output is then used in the next step (as another input to $f$ along with the next element in the input sequence). Defining recurrent models formally takes some effort, and we relegate it to the next sections. In short, the function $f$ is (recursively) applied $T$ times in order to generate the ultimate outcome.

Let us fix a base class $\cF_w$ of all multi-layered feed-forward sigmoid neural networks with $w$ weights. We can create a recurrent version of this class, which we will denote by $\brmodel{\cF_w}{T}$, for classifying sequences of length $T$. One can study the sample complexity of PAC learning $\brmodel{\cF_w}{T}$ with respect to different loss functions. \citet{koiran1998vapnik} studied the binary-valued version of this class by applying a threshold function at the end, and proved a lower bound of $\Omega(wT)$ for its VC dimension. 

There has also been efforts for proving upper bounds on the sample complexity of PAC learning $\brmodel{\cF}{T}$ for various base classes $\cF$ and different loss functions. Given the above lower bound, a gold standard has been achieving a linear dependence on $T$ in the upper bound. \citet{koiran1998vapnik} proved an upper bound of $O(w^4 T^2)$ on the VC dimension of $\brmodel{\cF_w}{T}$ discussed above. More recent papers have considered the more realistic setting of classification with continuous-valued RNNs, e.g., by removing the threshold function and using a bounded Lipschitz surrogate loss. In this setting, \citet{zhang2018stabilizing} proved an upper bound of $\widetilde{O}(T^4 w\|W\|^{O(T)})$ on the sample complexity\footnote{Ignoring the dependence of the sample complexity on the accuracy and confidence parameters.} where $\|W\|$ is the spectral norm of the network. \citet{chen2020generalization} improved over this result by proving an upper bound of 
$\widetilde{O}(Tw\|W\|^2 \min \{\sqrt{w}, \|W\|^{O(T)} \})$. These bounds get close to the gold standard when the spectral norm of the network satisfies $\|W\|\leq 1$.

The above upper bounds are proved by simply ``unfolding'' the recurrence,  effectively substituting the recurrent class $\brmodel{\cF_w}{T}$ with the (larger) class of $T$-fold compositions $\cF_w\circ \cF_w \ldots \circ \cF_w$. These unfolding techniques do not exploit the fact that the function $f$ (that is applied recursively for $T$ steps to compute the output of the network) is fixed across all the $T$ steps. Consequently, the resulting sample complexity has (super-)linear dependence on $T$. Therefore, we would need a prohibitively large sample size for training recurrent models for classifying very long sequences. Nevertheless, this dependence is inevitable in light of the of lower bound of~\cite{koiran1998vapnik}. Or is it?

In this paper, we consider a related class of \emph{noisy} recurrent neural networks, $\brmodel{\widetilde{\cF_w^\sigma}}{T}$. The hypotheses in this class are similar to those in $\brmodel{\cF_w}{T}$, except that outputs of (sigmoid) activation functions are added with independent Gaussian random variables, $\cN(0,\sigma ^2)$.
Our main result demonstrates that, remarkably, the noisy class can be learned with a number of samples that is only logarithmic with respect to $T$.

\begin{theorem}[Informal version of Theorem~\ref{thm:upper_rnn}]
    The sample complexity of PAC learning the class $\brmodel{\widetilde{\cF_w^\sigma}}{T}$ of noisy recurrent networks with respect to ramp loss is $\widetilde{O}(w\log (T/\sigma))$.
\end{theorem}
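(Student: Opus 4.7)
My plan is to establish the upper bound by a covering-number argument in weight space, using the Gaussian noise as the key ingredient. Each noisy RNN defines a genuine probability distribution over length-$T$ output trajectories, so I can measure the distance between two hypotheses via the KL divergence of their trajectory distributions. Crucially, the chain rule for KL aggregates the per-step contributions \emph{additively} in $T$, and taking the logarithm of the resulting covering number will produce the $\log T$ scaling claimed in the theorem.

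\textbf{KL chain rule and Lipschitz control.} Fix an input sequence $x$ and two weight vectors $W, W'$, and let $P_W, P_{W'}$ denote the induced distributions over the $T$ noisy neuron activations. By the chain rule for KL together with the identity $\mathrm{KL}(\mathcal{N}(\mu,\sigma^2)\,\|\,\mathcal{N}(\mu',\sigma^2)) = (\mu-\mu')^2/(2\sigma^2)$, one obtains
\[
\mathrm{KL}(P_W\,\|\,P_{W'}) \;\le\; \frac{1}{2\sigma^2}\sum_{t=1}^{T}\mathbb{E}_{P_W}\!\left[\|\mu_t(W)-\mu_t(W')\|^2\right],
\]
where $\mu_t$ denotes the pre-noise activation vector at step $t$. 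Because sigmoid is $\tfrac14$-Lipschitz and the inputs fed into the base feedforward block $f$ at each step remain bounded on the support of $P_W$, one shows $\|\mu_t(W)-\mu_t(W')\|\le L\,\|W-W'\|$ for a constant $L$ depending only on the fixed base architecture (independent of $T$). Summing gives $\mathrm{KL}(P_W\|P_{W'})\le L^2 T\|W-W'\|^2/(2\sigma^2)$, and Pinsker's inequality yields $\mathrm{TV}(P_W,P_{W'})\le L\sqrt{T/2}\,\|W-W'\|/\sigma$. Since ramp loss is $[0,1]$-bounded, the expected ramp losses of $W$ and $W'$ differ by at most this TV distance.

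\textbf{Covering and uniform convergence.} Choosing $\delta \asymp \varepsilon\sigma/\sqrt{T}$ guarantees that any two networks whose weights lie within $\delta$ of each other have expected losses within $\varepsilon$. A Euclidean $\delta$-net on a ball of radius $B$ in $\mathbb{R}^w$ has size $(B\sqrt{T}/(\varepsilon\sigma))^w$; applying Hoeffding's inequality at each net point and union-bounding yields a sample complexity of order $\widetilde{O}\bigl(w\log(BT/\sigma)/\varepsilon^2\bigr)$.

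\textbf{Main obstacle: removing the weight radius.} The remaining task is to eliminate $B$, since the theorem places no bound on the weight magnitudes. I would exploit sigmoid saturation: once a pre-activation exceeds roughly $\log(1/\sigma)$ in absolute value, the sigmoid output lies within $\sigma$ of $\{0,1\}$, so increasing the magnitude further changes the pre-noise mean by at most $O(\sigma)$ — a perturbation already dominated by the noise. The plan is to couple every weight configuration $W$ with a clipped version $\widehat W$ whose entries are bounded by a threshold depending only on $\sigma$ and the base architecture, and to show via the same per-step KL/chain-rule argument that the TV distance between the resulting trajectory distributions is negligible. I expect this saturation-coupling step to be the main technical hurdle; once it is in place, the effective cover reduces to weights of magnitude $\mathrm{poly}(1/\sigma)$, and the uniform-convergence step delivers the claimed $\widetilde{O}(w\log(T/\sigma))$ rate.
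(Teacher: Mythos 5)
Your proposal takes a genuinely different route from the paper. The paper never works in weight space at all: it bounds the covering number of the noisy recurrent class directly in a distributional ($d_{TV}$) sense, proves a reduction lemma (Theorem~\ref{thm:cover_recurrent_model}) showing via the data processing inequality that an $\epsilon$-TV cover of the recurrent model follows from an $\epsilon/T$-TV cover of the recurring block, imports the block-level cover from \citet{pour2022benefits} (Theorem~\ref{thm:cover_single}), composes layers with Theorem~\ref{thm:compose_tv}, and finally converts the TV cover to an $\ell_2$ cover of the derandomized class (Theorem~\ref{thm:cover_derandomization}). Your plan instead discretizes the weight vector, controls the trajectory KL via the Gaussian chain rule, applies Pinsker, and argues uniform convergence over a finite net. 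The chain-rule bookkeeping does implicitly exploit the same structural fact the paper highlights — the recurring block is a \emph{fixed} function, so the cover lives in $\mathbb{R}^w$ rather than $\mathbb{R}^{wT}$ — and both routes produce $\log T$ for essentially the same reason ($T$-fold additivity of per-step discrepancies translated into a $\log T$ cost in cover size). This is a legitimate alternative decomposition.

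There are, however, two concrete gaps. First, the intermediate claim that ``the inputs fed into the base feedforward block $f$ at each step remain bounded on the support of $P_W$'' is false: the input at each step is a sigmoid output \emph{plus} Gaussian noise, hence unbounded. The chain-rule bound survives because $\mathbb{E}_{P_W}\|x_t\|^2$ is finite, but the Lipschitz constant $L$ you invoke is random and unbounded, so this step needs to be rewritten as a second-moment bound rather than a pointwise Lipschitz estimate. Second, and more seriously, the weight-radius removal step — which you correctly identify as the main hurdle — is not made to work by the saturation argument you sketch. Saturation of the sigmoid on \emph{fixed} large pre-activations is not enough: for any clipping threshold $\tau$ and any weight $W$ of larger norm, there is a nontrivial region of inputs where $W^\top x$ is near zero and the clipped network's output is genuinely different, so a naive clipping coupling does not control the TV on those inputs. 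What actually rescues the argument is the \emph{Gaussian smoothing of the input}: after convolving with $\mathcal{N}(0,\sigma^2 I)$ the pre-activation $W^\top(\bar x + z)$ has variance $\sigma^2\|W\|^2$, and for large $\|W\|$ the pushforward through $\phi$ converges to a two-point law that depends essentially only on the \emph{direction} of $W$ (and $\bar x$), not its magnitude. That is precisely the content of Theorem~25 of \citet{pour2022benefits}, which the paper cites rather than reproves; your sketch neither states nor uses this fact, so as written the proposal would not close the unbounded-weight case that the theorem statement requires.
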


One challenge of proving the above theorem is that the analysis involves dealing with \emph{random} hypotheses. Therefore, unlike the usual arguments that bound the covering number of a set of deterministic maps with respect to the $\ell_2$ distance, we study the covering number of a class of random maps with respect to the total variation distance. We then invoke some of the recently developed tools in \cite{pour2022benefits} for bounding these covering numbers. Another challenge is deviating from the usual ``unfolding method'' and exploiting the fact that in recurrent models a \emph{fixed} function/network is applied recursively. 

The mere fact that learning $\brmodel{\widetilde{\cF_w^\sigma}}{T}$ requires less samples compared to its non-noisy counterpart is not entirely unexpected. For classification of long sequences, however, the sample complexity gap is quite drastic (i.e., exponential). We argue that a logarithmic dependency on $T$ is actually more realistic in practical situations: for finite precision machines, one can effectively break the $\Omega(T)$ barrier even for non-noisy networks.
To see this, let us choose $\sigma$ to be a numerically negligible number (e.g., smaller than the numerical precision of our computing device). In this case, the class of noisy and non-noisy networks become effectively the same when implemented on a device with finite numerical precision. But then our upper bound shows a mild logarithmic dependence on $1/\sigma$.

One caveat in the above argument is that the lower bound of~\citet{koiran1998vapnik} is proved for the 0-1 loss and perhaps not directly comparable to the setting of the upper bound which uses a Lipcshitz surrogate loss. We address this by showing a comparable lower bound in the same setting.

\begin{theorem}[Informal version of ~Theorem~\ref{thm:lower_rnn}]
The sample complexity of PAC learning $\brmodel{\cF_w}{T}$ with ramp loss is $\Omega\left(wT\right)$.
\end{theorem}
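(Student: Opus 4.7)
The plan is to port the $\Omega(wT)$ VC-dimension lower bound of \citet{koiran1998vapnik} (which is stated for the thresholded recurrent class) to the ramp-loss setting. The bridge is a margin amplification argument: on any finite shattered sample, the underlying (pre-threshold) sigmoid recurrent network can be rescaled so that its raw outputs are separated from the threshold by an arbitrarily large margin, which forces ramp loss and $0/1$ loss to coincide on that sample. A standard PAC lower bound on classes of VC dimension $\Omega(wT)$ then finishes the argument.

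Concretely, \citet{koiran1998vapnik} exhibit a set $S=\{z_1,\dots,z_n\}$ of length-$T$ input sequences with $n=\Omega(wT)$ such that, for every labeling $y \in \{0,1\}^n$, there is a network $f_y \in \brmodel{\cF_w}{T}$ whose thresholded output on $z_i$ equals $y_i$. The first step is to observe that multiplying every weight of $f_y$ by a single scalar $\alpha > 0$ yields another member of $\brmodel{\cF_w}{T}$, because the architecture and weight count are unchanged and weights in this class are unbounded. The second step is to choose $\alpha = \alpha(f_y, S, T, \gamma)$ large enough that the rescaled network's raw output lies within $\gamma$ of the ``ideal'' Boolean value $y_i$ on every $z_i \in S$, where $\gamma$ is the ramp-loss slope. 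This is possible because the sigmoid applied to $\alpha x$ converges pointwise to a step function as $\alpha \to \infty$, and applying this inductively across time steps shows that all hidden states, and hence the final output, saturate to the desired values.

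With such amplified networks in hand, the ramp loss of $f_y$ on the uniform distribution over $S$ labelled by $y$ is exactly $0$, while any other hypothesis $h$ has ramp loss at least proportional to its fraction of disagreements with $y$ on $S$. Since all $2^n$ labelings of $S$ are realized by members of $\brmodel{\cF_w}{T}$, a standard averaging / Yao-type argument (the same template that converts VC dimension into PAC sample-complexity lower bounds) shows that any learner using fewer than $c\, n$ samples must incur constant expected excess ramp loss for a uniformly random target $y$, yielding the $\Omega(wT)$ lower bound on sample complexity.

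The main technical obstacle is carrying out the margin amplification \emph{inside} a recurrent computation. Unlike a feed-forward network, uniformly scaling the weights of an RNN also amplifies the recurrent feedback, so one must verify that the saturation achieved at step $t$ is preserved, rather than corrupted, at step $t+1$. I would handle this by an induction on $t$: assuming all hidden activations at time $t$ lie within $\epsilon_t$ of their targets in $\{0,1\}$, the pre-activations at time $t+1$ lie in a bounded neighborhood of a fixed combination of the scaled weights, so a sufficiently large $\alpha$ pushes the next activations within any prescribed $\epsilon_{t+1} \ll \epsilon_t$. Since $S$, $y$, $T$, and $\gamma$ are all fixed, only finitely many refinement steps are required and a single choice of $\alpha$ per network suffices. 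Once this amplification is in place, the remainder of the proof is a routine translation of Koiran--Sontag into the ramp-loss/PAC language.
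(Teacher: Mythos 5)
Your high-level plan --- import the Koiran--Sontag $\Omega(wT)$ VC-dimension lower bound and bridge from $0$--$1$ loss to ramp loss by rescaling the underlying network so its output clears the margin, then apply the standard VC-to-PAC lower-bound machinery --- is the same strategy the paper follows. The difference is in \emph{which} weights you rescale, and there the proposal has a real gap.

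You propose multiplying \emph{every} weight of $f_y$ by one scalar $\alpha$ and sending $\alpha\to\infty$. This does not merely amplify the output margin: it also saturates every \emph{hidden} sigmoid toward $\pm 1/2$, so the vectors that are fed back into the next time step change value. Your inductive sketch only works under the hidden premise that the Koiran--Sontag shattering network already has all hidden activations $\epsilon$-close to values in $\{-1/2,1/2\}$, so that sharpening the sigmoid moves them \emph{toward} where they already are. You never argue this, and it is not a given: the $\Omega(wT)$ construction can exploit analog (non-saturating) hidden states to store information across time, in which case replacing an intermediate activation like $0.3$ by $\approx 0.5$ changes the sign of downstream pre-activations and destroys the shattering. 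Compounding this over $T$ recurrent steps is not ``routine''; it is exactly the obstacle you name but do not resolve.

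The paper's Lemma~\ref{lemma:ramp_to_zero_one} (via Lemmas~\ref{lemma:first_similar} and~\ref{lemma:last_similar}) avoids the issue by rescaling only the \emph{last row} of the final layer's weight matrix, by the finite factor $c=\phi^{-1}(\gamma)/\phi^{-1}(z)$. In the recurrent model of Definition~\ref{def:rec_application}, only the first $q-1$ output coordinates are fed back; the last coordinate is read out but never re-enters the recurrence. Hence $\first{b^R(U,t)}=\first{\hat b^R(U,t)}$ for every $t$: the recurrent dynamics and therefore the thresholded prediction are preserved \emph{exactly}, while the scalar output is reparameterized by the strictly increasing map $x\mapsto\phi(c\,\phi^{-1}(x))$. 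Choosing $c$ so that the rescaled output rarely lands in $(-\gamma,\gamma)$ gives ramp loss $\le$ $0$--$1$ loss $+\eta$ against \emph{any} distribution, which yields $m^{0-1}_{\cF_w}(\epsilon+\eta,\delta)\le m_{\cH_w}(\epsilon,\delta)$ directly; the paper then just invokes Theorem~\ref{thm:fundamental} with the VC bound of Theorem~\ref{thm:lower_vc}, rather than re-deriving a Yao-type argument for ramp loss on one hard distribution. One further detail you elide: the paper's recurrent architecture only feeds back $\first{\cdot}$, so the Koiran--Sontag network must be modified to duplicate the last output coordinate (an extra node) so the carried prediction survives the recurrence; this is a small but necessary change to make Theorem~\ref{thm:lower_vc} apply.
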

In the next section we introduce our notations and define the PAC learning problem in Section~\ref{sec:prelim}. We state the lower bound in Section~\ref{sec:lower_bound}, and the upper bound in Section~\ref{subsec:upper_bound}. Sections~\ref{sec:covering_classic}, \ref{sec:covering_random}, and \ref{sec:proof_scheme} provide a high-level proof of our upper bound. We conclude our results and discuss future works in Section~\ref{sec:conclusion}. Finally, we discuss additional related work in Section~\ref{app:related}.

\section{Preliminaries}\label{sec:prelim}

\subsection{Notations}

$\|x\|_1,\|x\|_2$, and $\|x\|_{\infty}$ denote the $\ell_1,\ell_2$, and $\ell_{\infty}$ norms of a vector $x\in \bR^d$ respectively. We denote the cardinality of a set $S$ by $|S|$. The set of natural numbers smaller or equal to $m$ is represented by $[m]$. A vector of all zeros is denoted by $\zero{d} =\begin{bmatrix} 0 \, \ldots \, 0\end{bmatrix} \transpose \in \bR^{d}$.
We use $\cX\subseteq \bR^d$ as a domain set. We will study classes of vector-valued functions; a hypothesis is a Borel function $f:\bR^d\rightarrow\bR^p$, and a hypothesis class $\cF$ is a set of such hypotheses.

We find it useful to have an explicit notation---here an overline---for the random versions of the above definitions: $\rv{\cX}$ is the set of all random variables defined over ${\cX}$ that admit a generalized density function\footnote{Both discrete (by using Dirac delta function) and absolutely continuous random variables admit a generalized density function.}. $\rv{x}\in \rv{\cX}$ is a random variable in this set. To simplify this notation, we sometimes just write $\rv{x}\in \bR^d$ rather than  $\rv{x}\in \rv{\bR^d}$.

$\rv{y}=f(\rv{x})$ is the random variable associated with pushforward of $\rv{x}$ under Borel map ${f}:\bR^d\rightarrow\bR^p$. We use $\rv{f}:\bR^d\rightarrow\bR^p$ to indicate that the mapping itself is random. 
Random hypotheses can be applied to both random and non-random inputs---e.g., $\rv{f}(\rv{x})$ and $\rv{f}(x)$\footnote{Technically, we consider $\rv{f}(x)$ to be $\rv{f}(\rv{\delta_x})$, where $\rv{\delta_x}$ is a random variable with Dirac delta measure on $x$.}.
A class of random hypotheses is denoted by $\rv{\cF}$ .
\begin{definition}[Composition of Two Hypothesis Classes]\label{def:composition}
We denote by $h\circ f$ the function $h(f(x))$ (assuming the range of $f$ and the domain of $h$ are compatible).
The composition of two hypothesis classes $\cF$ and $\cH$ is defined by $\cH\circ\cF=\{h\circ f\ \mid h\in \cH, f\in \cF\}$. Composition of classes of random hypotheses is defined similarly by $\rv{\cH}\circ\rv{\cF}=\{\rv{h}\circ \rv{f}\ \mid \rv{h}\in \rv{\cH}, \rv{f}\in \rv{\cF}\}$.
\end{definition}

\subsection{Feedforward neural networks}

We will first define some classes associated with feedforward neural networks. Let $\phi(x)=\frac{1}{1+e^{-x}}-\frac{1}{2}$ be the centered sigmoid function. $\Phi:\bR^p\rightarrow {[-1/2,1/2]}^p$ is the element-wise sigmoid activation function defined by $\Phi((x^{(1)},\ldots, x^{(p)}))=(\phi(x^{(1)}),\ldots, \phi(x^{(p)}))$. 

\begin{definition}[Single-Layer Sigmoid Neural Networks]\label{def:neuralnet}
The class of single-layer sigmoid neural networks with $d$ inputs and $p$ outputs is defined by $\net{d}{p}=\{f_W:\bR^d\to[-1/2,1/2]^p \mid f_W(x)=\Phi(W\transpose x), W\in\bR^{d\times p}\}$.  
\end{definition}

Based on Definition~\ref{def:neuralnet}, we can define the class of multi-layer (feedforward) neural networks (with $w$ weights) as a composition of several single-layer networks. Note that the number of hidden neurons can be arbitrary as long as the total number of weights/parameters is $w$. 
\begin{definition}[Multi-Layer Sigmoid Neural Networks]\label{def:multinet}
A class of multi-layer sigmoid networks with 
$p_0$ inputs, $p_k$ outputs, and $w$ weights that take inputs in $[-1/2,1/2]^{p_0}$ is defined by 
\begin{equation*}
   \mnet{p_0}{p_k}{w} = \bigcup\net{p_{k-1}}{p_k} \circ \ldots \circ \net{p_0}{p_1}
\end{equation*}
where union is taken over all choices of $(p_1, p_2, \ldots, p_{k-1})\in \bN^{k-1}$ that satisfy $\sum_{i=1}^{k} p_i.p_{i-1}=w$. We say $\mnet{p_0}{p_k}{w}$ is well-defined if the union is not empty.
\end{definition}
Well-definedness basically means that $p_0, p_k$, and $w$ are compatible. For simplicity, in the above definition we restricted the input domain to $[-1/2,1/2]^d$. This will help in defining the recurrent versions of these networks (since the input and output domains become compatible). However, our analysis can be easily extended to capture any bounded domain (e.g., $[-B,B]^{d}$).

\subsection{Recursive application of a function and recurrent models}

In this section we define $\brmodel{\cF}{T}$ which is the recurrent version of class $\cF$ for sequences of length $T$. Let $v=\left( a_1,\ldots,a_m\right)\in\cX^m$ for $m\in\bN$. We define $\first{v}=\left( a_{1}, \ldots, a_{m-1}\right)\in\cX^{m-1}$ and $\last{v}=a_m \in \cX$ as functions that return the first $m-1$ and the last dimensions of the vector $v$, respectively. Let $u^{(0)}, u^{(1)}, \ldots, u^{(T-1)}$ be a sequence of inputs, where $u^{(i)}\in \bR^p$, and let $f:\bR^s\to \bR^q$ be a hypothesis/mapping. In the context of recurrent models, it is useful to define the recurrent application of $f$ on this sequence. Note that out of the $q$ dimensions of the range of $f$, $q-1$ of them are recurrent and therefore are fed back to the model. Basically, $f^R\left( U,t \right)$ will be the result of applying $f$ on the first $t$ elements of $U$ (with recurrent feedback). 

\begin{definition}[Recurrent Application of a Function] \label{def:rec_application}
Let $U = \begin{bmatrix}u^{(0)} \ldots u^{(i)} \ldots u^{(T-1)}\, \end{bmatrix}\in\bR^{p\times T}$ be a sequence of inputs of length $T$, where $u^{(i)}\in\bR^{p}$ denotes the $i$-th column of $U$ for $0 \leq i\leq T-1$. Let $f$ be a (random) function from $\bR^s$ to $\bR^q$, where $s = p+q-1$. Moreover, define $f^R\left( U,0 \right) = f\left(  \begin{bmatrix} \zero{q-1} & u^{(0)}\end{bmatrix}\transpose \right)$.
Then, for any $1 \leq t \leq T-1$,  the recursive application of $f$ is denoted by $f^R:\bR^{p\times T} \times [T-1]\rightarrow \bR^q$ and is defined as $f^R\left( U,t \right) = f\left( \begin{bmatrix}  \first{f^R\left( U,t-1 \right)} & u^{(t)}\end{bmatrix}\transpose \right)$.
\end{definition}
Now we are ready to define the (recurrent) hypothesis class $\brmodel{\cF}{T}$. Each hypothesis in this class takes a sequence $U$ of input vectors, and applies a function $f\in\cF$ recurrently on the elements of this sequence. The final output will be a real number. We give the formal definition in the following; also see Figure~\ref{fig:rec_model} for a visualization. 
\begin{definition}[Recurrent Class]\label{def:rnet}
Let $s,p,q\in \bN$ such that $s=p+q-1$.
Let $\cF$ be a class of functions from $\bR^s$ to $\bR^q$. The class of recurrent models with length $T$ that use functions in $\cF$ (which we denote by recurring class) as their recurring block is defined by
$$\brmodel{\cF}{T}=\{h:\bR^{p \times T} \to \bR \mid h(U) = \last{f^R\left( U,T-1 \right)},f\in\cF\}$$ 
\end{definition}
For example, $\brmodel{\mnet{p_0}{p_k}{w}}{T}$ is the class
of (real-valued) recurrent neural networks with length $T$ that use $\mnet{p_0}{p_k}{w}$ as their recurring block.
We say that $\brmodel{\mnet{p_0}{p_k}{w}}{T}$ is \emph{well-defined} if $\mnet{p_0}{p_k}{w}$ is well-defined and also the input/output dimensions are compatible (i.e., $p_0\geq p_k$).

\begin{figure}
\centering    {\usetikzlibrary{fit}
\usetikzlibrary{decorations.pathreplacing}

\begin{tikzpicture}[scale=0.9,>=stealth]\label{Noisy}

\def\xa{0}
\def\xb{6}
\def\xc{12}

\def\za{\footnotesize $\zero{q-1}$}
\def\zb{\tiny  $\first{f^{R}(U,t-1)}$}
\def\zc{\tiny  $\first{f^{R}(U,T-2)}$}

\def\oa{\tiny $f^{R}(U,0)$}
\def\ob{\tiny $f^{R}(U,t)$}
\def\oc{\tiny $f^{R}(U,T-1)$}

\def\foa{\tiny  $\first{f^{R}(U,0)}$}
\def\fob{\tiny  $\first{f^{R}(U,t)}$}
\def\foc{\tiny  $\first{f^{R}(U,T-1)}$}

\def\ia{\footnotesize $u^{(0)}$}
\def\ib{\footnotesize $u^{(t)}$}
\def\ic{\footnotesize $u^{(T-1)}$}

\foreach \x in {\xa,\xb,\xc}{
\def\a{\ifthenelse{\x=\xa}{\za}{\ifthenelse{\x=\xb}{\zb}{\zc}}}
	\node[rectangle,draw, fill = teal!20!white, minimum width = 0.75cm, minimum height=2.3cm,align=center] at  (\x+1.175,1.7) (lay1) {\footnotesize $f$};

    \node[rectangle,draw, fill = blue!20!white, minimum width=0.25cm,minimum height=1.5cm,align=center] at  (\x+0,2.1) (first) {};
    
    \node[rectangle,draw, fill = red!20!white, minimum width=0.25cm,minimum height=0.8cm,align=center] at  (\x+0,0.95) (inp) {};

\node (00) at (\x-1.2,3){};

\ifthenelse{\x=\xa}{\draw [decorate,
    decoration = {brace}] (\x-0.25,1.35) --  (\x-0.25,2.85)
    node[pos=0.75,left =8pt, black, rotate = 90]{};
    \draw [->,color=black,line width = 0.5mm] (00) |- (\x-0.4,2.1)
 node[pos=0, yshift = 5pt,black]{ \footnotesize  \a};}
    {\ifthenelse{\x=\xb}{\draw [decorate,
    decoration = {brace}] (\x-0.25,1.35) --  (\x-0.25,2.85);
    \draw[->,color=black,line width = 0.5mm](\x-1,2.1) -- (\x-0.35,2.1) node[pos=0, xshift = -13pt, yshift = 20pt, black]{\a};}{  \draw [decorate,
    decoration = {brace}] (\x-0.25,1.35) --  (\x-0.25,2.85);
    \draw[->,color=black,line width = 0.5mm](\x-1,2.1) -- (\x-0.35,2.1) node[pos=0, xshift = -13pt, yshift = 20pt, black]{\a};}}
    
\draw [decorate,
    decoration = {brace}] (\x-0.25,0.55) --  (\x-0.25,1.3)
  node[pos=0.5,left = 8pt, black]   (brace1) {};

\node (u0) at (\x-1.2,0.12){};
\draw [->,color=black,line width = 0.5mm] (u0) |- (\x-0.4,0.925)
 node[pos=0,yshift = -5pt, black]{\ifthenelse{\x=\xa}{\ia}{\ifthenelse{\x=\xb}{\ib}{\ic}} };

\draw [decorate,
    decoration = {brace,mirror}] (\x+0.25,0.55) --  (\x+0.25,2.85)
    node[pos=0.5,right = 0pt,black]{};
   
    \draw [->,color=black,line width = 0.5mm] (\x+0.4,1.7) -- (\x+0.8,1.7);
    
     \draw [->,color=black,line width = 0.5mm] (\x+1.6,1.7) -- (\x+2,1.7);
     
   \node[rectangle,draw, fill = blue!20!white, minimum width=0.25cm,minimum height=1.5cm,align=center] at  (\x+2.125,1.8) (first_out) {};
    
    \node[rectangle,draw, fill = black!20!white, minimum width=0.25cm,minimum height=0.2cm,align=center] at  (\x+2.125,0.95) (last) {};


   \ifthenelse{\x=\xc}{

    \draw [->,color=black,line width = 0.5mm] (\x+2.4,0.95) -- (\x+3.4 ,0.95);
    \node[yshift=7pt,xshift = 2pt] at (\x+3.55,0.95) {\tiny $\text{Last}(f^R\left(U,T-1\right))$};
   }
   {
       \draw [decorate,
    decoration = {brace,mirror}] (\x+2.35,1.05) --  (\x+2.35,2.55)
    node[pos=0.5,right = 0pt,black]{};
    \draw [->,color=black,line width = 0.5mm] (\x+2.5,1.8) -- (\x+3.4 ,1.8);
 \node[] at (\x+3.5,2.2) {\ifthenelse{\x=\xa}{\foa}{\ifthenelse{\x=\xb}{\fob}{\foc}}};
 }

    \node[rectangle,dashed,draw, minimum width=3.75cm,minimum height=3.5cm,align=center] at  (\x+1.08,1.7) (lay1) {};
}
 \node at (4.2,1.8) {$\ldots$}; 
 \node at (10.2,1.8) {$\ldots$};
\end{tikzpicture}}
\caption{An example of a recurrent model in $\brmodel{\cF}{T}$. The first $q-1$ dimensions of $f^R(U,t-1)$ is concatenated with $u^{(t)}$ to form the input at time $t$. The last dimension of $f^R(U,T-1)$ is taken to be the final output of the recurrent model.}
\label{fig:rec_model}
\end{figure}
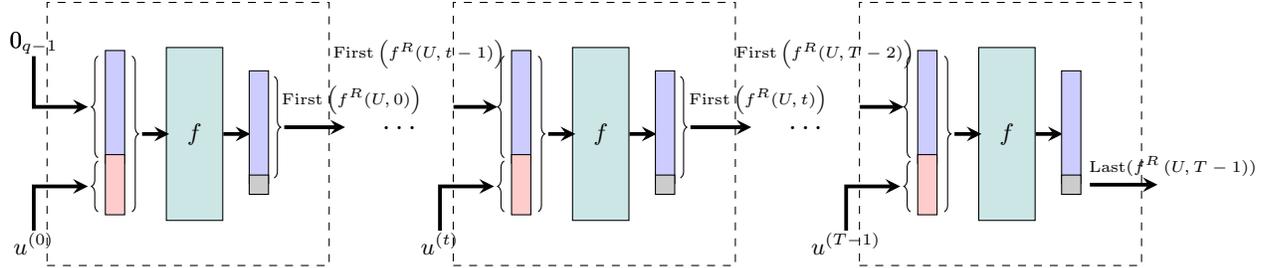


\subsection{PAC learning with ramp loss}
In this section we formulate the PAC learning model for classification with respect to the ramp loss. The use of ramp loss is natural for classification (see e.g.,~\citet{boucheron2005theory,bartlett2006convexity}) and the main features of the ramp loss that we are going to exploit are boundedness and Lipschitzness. We start by introducing the ramp loss.
\begin{definition}[Ramp Loss]
Let $f:\cX\rightarrow \bR$ be a hypothesis and let $\cD$ be a distribution over $\cX\times\cY$. Let $(x,y)\in\cX \times \cY$, where $\cY=\{-1,1\}$. The ramp loss of $f$ with respect to margin parameter $\gamma>0$ is defined as $l_{\gamma}(f,x,y) = {r_{\gamma}\left(-f(x).y\right)}$, where $r_{\gamma}$ is the ramp function defined by
\begin{equation*}
  r_{\gamma}(x)=\begin{cases}
    0 & x < -\gamma,\\
    1 + \frac{x}{\gamma} & -\gamma \leq x \leq 0 \\
    1 & x \geq 0.
    \end{cases}
\end{equation*}
\end{definition}

\begin{definition}[Agnostic PAC Learning with Respect to Ramp Loss]\label{def:pac_learn_ramp}
We say that a hypothesis class $\cF$ of functions from $\cX$ to $\bR$ is agnostic PAC learnable with respect to ramp loss with margin parameter $\gamma>0$ if there exists a learner $\cA$ and a function $m:(0,1)^2\rightarrow \bN$ with the following property: For every distribution $\cD$ over $\cX\times \{-1,1\}$ and every $\epsilon,\delta \in (0,1)$, if $S$ is a set of $m(\epsilon,\delta)$ i.i.d. samples from $\cD$, then with probability at least $1-\delta$ (over the randomness of $S$) we have
\begin{equation*}
      \expects{(x,y)\sim \cD}{l_{\gamma}\left(\cA(S),x,y\right)} \leq \inf_{f \in \cF}\expects{(x,y)\sim \cD}{l_{\gamma}\left(\cA(S),x,y\right)} + \epsilon.
\end{equation*}
\end{definition}
The \textit{sample complexity} of PAC learning $\cF$ with respect to ramp loss is denoted by $m_{\cF}(\epsilon,\delta)$, which is the minimum number of samples required for learning $\cF$ (among all learners $\cA$). The definition of agnostic PAC learning with respect to ramp loss works for any value of $\gamma$ and when we are analyzing the sample complexity we consider it to be a fixed constant.

\section{A lower bound for sample complexity of learning recurrent neural networks}\label{sec:lower_bound}
In this section, we consider the sample complexity of PAC learning sigmoid recurrent neural networks with respect to ramp loss. Particularly, we state a lower bound on the sample complexity of the class $\brmodel{\mnet{p_0}{p_k}{w}}{T}$ of all sigmoid recurrent neural networks with length $T$ that use multi-layer neural networks with $w$ weights as their recurring block.  The main message is that this sample complexity grows at least linearly with $T$. 
\begin{theorem}[Sample Complexity Lower Bound for Recurrent Neural Networks] \label{thm:lower_rnn}
For every $T\geq 3$ and $w\geq 19$ there exists a well-defined class $\cH_w = \brmodel{\mnet{p_0}{p_k}{w}}{T}$ and a universal constant $C>0$ such that for every $\epsilon,\delta \in (0,1/40)$ we have
\begin{equation*}
    m_{\cH_w}(\epsilon,\delta) \geq  C.\left(\frac{wT+\log(1/\delta)}{\epsilon^2}\right).
\end{equation*}
\end{theorem}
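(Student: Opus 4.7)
The plan is to obtain the bound by combining two classical ingredients: a $\gamma$-fat-shattering lower bound of order $\Omega(wT)$ for the class $\cH_w$, and the standard agnostic PAC learning lower bound of order $\Omega((d+\log(1/\delta))/\epsilon^2)$ for any class that $\gamma$-fat-shatters $d$ points with respect to the $\gamma$-ramp loss. Combining these yields the stated inequality.

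The first step, and the main technical work, is to adapt the Koiran-Sontag VC-dimension construction for recurrent sigmoid networks to our exact setting. Their argument produces, for the thresholded output version of $\brmodel{\mnet{p_0}{p_k}{w}}{T}$, a shattered set of size $\Omega(wT)$ by partitioning the $w$ weights of the recurring block into $T$ independently addressable ``modes''. At time step $t$, the input symbol $u^{(t)}$ selects the $t$-th mode, which then realizes one of $\Omega(w/T)$ independent binary decisions stored in that mode's weights (alternatively one can use the more common variant in which $\Omega(w)$ capacity is available per step but only $T$ separate inputs drive it). Feeding a length-$T$ sequence whose symbols pick out a specific setting in each of the $T$ modes yields $\Omega(wT)$ effectively independent binary choices across a carefully chosen family of $\Omega(wT)$ input sequences. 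I would verify that the construction still goes through with the centered sigmoid $\phi$ of Definition~\ref{def:neuralnet} and with the input-domain restriction to $[-1/2,1/2]^{p_0}$, which amounts to a rescaling of the weights together with a suitable choice of the input symbols inside $[-1/2,1/2]^{p_0}$; the technical restrictions $T\ge 3$ and $w\ge 19$ in the statement come from needing enough dimensions to make $\mnet{p_0}{p_k}{w}$ well-defined with $p_0 \geq p_k$.

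The second step upgrades this to $\gamma$-fat-shattering relative to the threshold $r_i = 0$. Every hypothesis in $\cH_w$ ends with a centered sigmoid whose output is in $(-1/2,1/2)$, so for any $\gamma<1/2$ I can scale the weights feeding the final sigmoid to push its argument to arbitrarily large magnitude, which drives the output magnitude arbitrarily close to $1/2$. This rescaling preserves the sign of the output, hence preserves the shattering produced in step one, while guaranteeing that on each of the shattered sequences the output magnitude is at least $\gamma$. Since adding one scaling factor per output costs only a constant number of weights, the budget $w$ is preserved up to an absorbable constant, producing a $\gamma$-fat-shattered set of size $\Omega(wT)$ for $\cH_w$.

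The third and final step invokes the standard agnostic PAC learning lower bound with ramp loss: for any class that $\gamma$-fat-shatters $N$ points, the hard distribution uniform on those $N$ points with a random labelling (a Le Cam / two-point or fingerprinting argument in the style of Anthony-Bartlett or Bartlett-Mendelson) forces any learner to require $\Omega(N/\epsilon^2)$ samples for excess ramp loss $\epsilon$ with constant probability, plus the separate $\Omega(\log(1/\delta)/\epsilon^2)$ term from a single-point confidence argument. Plugging in $N=\Omega(wT)$ yields the bound. The main obstacle is step one: carefully instantiating the Koiran-Sontag shattering construction so that all the parameter and dimension accounting (centered sigmoid, bounded input domain, and the requirement that the recurring block have exactly $w$ weights summing across layers as in Definition~\ref{def:multinet}) works out; the remaining steps are essentially boilerplate.
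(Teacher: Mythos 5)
Your plan is correct and lands on the same two core ingredients as the paper — the Koiran--Sontag recurrent VC construction and a ``scale the last row of the final weight matrix to force margin'' trick — but it packages them differently. The paper's route (Appendix~B) is a \emph{distributional reduction}: it defines the sign-thresholded class $\cF_w$, proves Lemma~\ref{lemma:ramp_to_zero_one} showing that for \emph{every} distribution $\cD$ and every $f\in\cF_w$ there is an $h\in\cH_w$ whose ramp loss is within $\eta$ of the $0$-$1$ loss of $f$ (via the rescaling $c=\phi^{-1}(\gamma)/\phi^{-1}(z)$ applied only to the last output coordinate, which by Lemma~\ref{lemma:first_similar} does not perturb the recurrent feedback), and thereby converts any ramp-loss learner for $\cH_w$ into a $0$-$1$ learner for $\cF_w$; it then invokes the VC lower bound from Theorem~\ref{thm:lower_vc} and the generic $0$-$1$ lower bound Theorem~\ref{thm:fundamental}. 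You instead stay at the level of a \emph{finite shattered set}: turn the VC-shattered set directly into a $\gamma$-fat-shattered set (with witness $0$) by the same rescaling, which is easier to justify because the set is finite, you only need $h(U_i)\neq 0$ on finitely many points, and you can pick one scale $c$ large enough for all of them. This avoids the slightly delicate quantity $z = \min_b \arg\max_{x} \ldots$ that the paper uses to make the margin argument work for arbitrary (possibly continuous) $\cD$. The price you pay is in your third step: a ``$\gamma$-fat-shattering of $N$ points $\Rightarrow$ $\Omega((N+\log(1/\delta))/\epsilon^2)$ for ramp loss'' lower bound is not quite an off-the-shelf citation in Anthony--Bartlett; you would need to spell out the short argument that (i) on the hard random-labelling distribution supported on the fat-shattered set, the best-in-class $h$ achieves ramp loss $\le$ Bayes $0$-$1$ error because $y_ih(U_i)\ge\gamma$ implies $l_\gamma(h,U_i,y_i)=\indicator{y_i\neq b_i}$, and (ii) any output $\hat h$ satisfies $l_\gamma(\hat h,\cdot)\ge l^{0\text{-}1}(\sign{\hat h},\cdot)$, so the standard VC/$0$-$1$ lower bound transfers. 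Once that is written out, your proof is a valid (and arguably slightly simpler) alternative to the paper's; the two are morally the same argument with the paper's Lemma~\ref{lemma:ramp_to_zero_one} replaced by its finite-set shadow.

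One small caution on your step two: you need $h(U_i)\neq 0$ on the shattered points before the rescaling does anything, since $\phi(c\cdot 0)=0$ for every $c$. This does hold in the Koiran--Sontag construction (the pre-sigmoid activations at the shattered inputs are nonzero by design), but it deserves a sentence; it is the finite-set analogue of the paper's choice of $z>0$.
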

The proof of the above lower bound is based on a similar result due to~\cite{sontag1998vc}. However, the argument in~\cite{sontag1998vc} is for PAC learning with respect to 0-1 loss. To extend this result for the ramp loss, we construct a binary-valued class $\cF_w = \{f:f(U) = \text{sign}(h(U)),h\in\cH_w\}$ where $\sign{x}=1$ if $x\geq0$ and $\sign{x}=-1$ if $x<0$. We prove that every function $f\in\cF_w$ can be related to another function $h\in\cH_w$ such that the ramp loss of $h$ is almost equal to the zero-one loss of $f$. This is formalized in the following lemma, which is a key result in proving Theorem~\ref{thm:lower_rnn}. The proof of Theorem~\ref{thm:lower_rnn} and Lemma~\ref{lemma:ramp_to_zero_one} can be found in Appendix~\ref{app:lower_bound}.
\begin{lemma}\label{lemma:ramp_to_zero_one}
    Let $\cH_w = \brmodel{\mnet{p_0}{p_k}{w}}{T}$ be a well-defined class and let $\cF_w = \{f:[-1/2,1/2]^{p\times T}\to \{-1,1
\}\mid f(U) = {\sign{h(U)}},h\in\cH_w\}$. Then, for every distribution $\cD$ over $[-1/2,1/2]^{p\times T}\times \{-1,1\}$, $\eta>0$, and every function $f\in\cF_w$ there exists a function $h\in\cH_w$ such that $\expects{(U,y)\sim \cD}{l_{\gamma}\left(h,U,y\right)} \leq \expects{(U,y)\sim \cD}{l^{0-1}\left(f,U,y\right)} + \eta$ where $l^{0-1}(f,U,y)={\indicator{f(U)\neq y}}$.
\end{lemma}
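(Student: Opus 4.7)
Given $f\in\cF_w$, fix any realizer $h_0\in\cH_w$ with $f(U)=\mathrm{sign}(h_0(U))$, and write $h_0(U)=\phi(z(U))$, where $z(U)$ is the pre-sigmoid value of the output coordinate at the final recurrent time step $t=T-1$. The plan is to sharpen $h_0$ into an $h_c\in\cH_w$ whose magnitude is close to the maximum value $1/2$ on correctly classified inputs. Concretely, if $W_k$ is the top weight matrix of the recurring block of $h_0$ and $v_q$ is the column of $W_k$ producing the output coordinate (the one read off by $\last{\cdot}$), then replacing $v_q$ by $c\,v_q$ for some $c>0$ changes only this one output coordinate of the recurring block; the first $q-1$ coordinates, which are the ones fed back through the recurrence, are untouched. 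Consequently the recurrent dynamics are identical to those of $h_0$ and the weight count is preserved, so the modified network lies in $\cH_w$ and satisfies $h_c(U)=\phi(c\,z(U))$.

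The central analytic observation is that $\phi(c z)\to \tfrac{1}{2}\mathrm{sign}(z)$ as $c\to\infty$, for every $z\neq 0$. Assuming the (fixed) margin parameter satisfies $\gamma\leq 1/2$, this yields the pointwise limit
\begin{equation*}
    l_{\gamma}(h_c,U,y)\;\xrightarrow[c\to\infty]{}\;\indicator{\mathrm{sign}(z(U))\neq y}\;=\;l^{0-1}(f,U,y)
\end{equation*}
on $\{U:z(U)\neq 0\}$. Since $l_\gamma\in[0,1]$, the dominated convergence theorem promotes this to convergence in expectation, so that for all sufficiently large $c$,
\begin{equation*}
    \mathbb{E}_{(U,y)\sim\cD}\bigl[l_{\gamma}(h_c,U,y)\,\indicator{z(U)\neq 0}\bigr]\;\leq\;\mathbb{E}_{(U,y)\sim\cD}\bigl[l^{0-1}(f,U,y)\,\indicator{z(U)\neq 0}\bigr]+\eta/2.
\end{equation*}

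The hard part is the exceptional set $\{z(U)=0\}$, on which $h_c(U)=0$ for every $c$ and hence $l_\gamma(h_c,U,y)=r_\gamma(0)=1$, while (using the convention $\mathrm{sign}(0)=1$) the zero-one loss of $f$ equals $\indicator{y\neq 1}$; the excess thus lives on $\{z(U)=0,\,y=1\}$ and can be an atom of positive $\cD$-mass. I would absorb this excess by introducing a vanishingly small additive shift in the output pre-activation, working with $h_{c,\delta}(U)=\phi\bigl(c(z(U)+\delta)\bigr)$ instead of $h_c$: choose $\delta>0$ outside the at-most-countable set of atoms of the distribution of $z(U)$ and small enough that $\Pr_{\cD}[-\delta<z(U)<0]<\eta/2$, and then take $c$ large enough that $h_{c,\delta}$ is essentially $\tfrac{1}{2}\mathrm{sign}(z(U)+\delta)$ outside that thin slab. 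Implementing an additive pre-activation shift inside $\cH_w$ is the delicate constructive step, because the architecture has no explicit biases; I would realize it by earmarking one hidden coordinate of the penultimate layer to carry an (approximately) constant positive value produced by a saturated sigmoid driven by large weights, and then folding that coordinate linearly into $z$ through the top-layer column, keeping the total weight count equal to $w$. Combining this shift with the dominated-convergence argument above, choosing $c$ and $\delta$ in that order, gives the claimed bound with slack $\eta$.
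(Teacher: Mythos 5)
Your core idea---rescaling the last row of the output weight matrix by a positive constant $c$ so that the sign (and hence the $0$-$1$ loss) is preserved while the ramp loss is pushed toward $0$ or $1$---is exactly the paper's. The difference is only in the finishing step: the paper picks one explicit finite constant $c=\phi^{-1}(\gamma)/\phi^{-1}(z)$, where $z$ is chosen so that the low-margin event $\{|\last{b^R(U,T-1)}|<z\}$ has probability less than $\eta$, and bounds the contribution of that event by $\eta$ directly; you take $c\to\infty$, apply dominated convergence on $\{z(U)\neq 0\}$, and treat the remaining atom $\{z(U)=0\}$ separately. Both proofs rest on the same observation (ramp and $0$-$1$ losses coincide once the output has margin at least $\gamma$), so the route is not genuinely different.

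The genuine gap is the bias construction. You are right to flag that the atom $\{z(U)=0\}$ can carry positive $\cD$-mass: if $U$ is the all-zeros sequence then every network in $\cH_w$ outputs exactly $0$ at every neuron and every time step, since the recurrent state starts at $\zero{q-1}$, $\phi(0)=0$, and the architecture has no bias terms; rescaling cannot reduce the ramp loss on that event. But the remedy you sketch does not live inside $\cH_w$. A dedicated hidden neuron $\phi(\langle w, x\rangle)$ cannot be bounded away from zero uniformly over the input domain, because that domain includes $x=\mathbf{0}$---precisely the problematic all-zeros event---and at $x=\mathbf{0}$ the neuron outputs $0$ regardless of how large $w$ is. So the saturated-constant-neuron trick fails exactly where you need it. For what it is worth, the paper's own proof has the same latent issue: the definition of $z$ presupposes the existence of some $z>0$ with $\prob{|\last{b^R(U,T-1)}|<z}<\eta$, which fails whenever the atom at $0$ has mass at least $\eta$; neither argument handles the distribution concentrated at $(U=\mathbf{0},\,y=+1)$, where $\expects{(U,y)\sim\cD}{l^{0-1}(f,U,y)}=0$ but $\expects{(U,y)\sim\cD}{l_{\gamma}(h,U,y)}=1$ for every $h\in\cH_w$.
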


\section{Noisy recurrent neural networks}

In this section, we will define classes of noisy recurrent neural networks. Let us first define the singleton Gaussian noise class, which contains a single additive Gaussian noise function.
\begin{definition}[The Gaussian Noise Class]
The $d$-dimensional noise class with scale $\sigma\geq 0$ is denoted by $\rv{\cG_{\sigma, d}}=\{\rv{g_{\sigma,d}}\}$. Here, $\rv{g_{\sigma,d}}:\bR^d\rightarrow\bR^d$ is a random function defined by $\rv{g_{\sigma,d}}(\rv{x})=\rv{x}+\rv{z}$, where $\rv{z}\sim \cN(\mathbf{0},\sigma^2 I_d)$. When it is clear from the context we drop $d$ and write $\rv{\cG_{\sigma}}=\{\rv{g_{\sigma}}\}$.
\end{definition}

The following is the noisy version of multi-layer networks in Definition~\ref{def:multinet}. Basically, Gaussian noise is composed (Definition~\ref{def:composition}) before each layer.
\begin{definition}[Noisy Multi-Layer Sigmoid Neural Networks]\label{def:noisy_multinet}
The class of all noisy multi-layer sigmoid networks with $w$ weights that take values in $[-1/2,1/2]^{p_0}$ as input and output values in $[-1/2,1/2]^{p_k}$ is defined by 
\begin{equation*}
   \rvmnet{p_0}{p_k}{w}{\sigma} = \bigcup \net{p_{k-1}}{p_k}    \circ \ldots \circ \rv{\cG_{\sigma}}\circ\net{p_1}{p_2}     \circ    \rv{\cG_{\sigma}}\circ\net{p_0}{p_1}\circ\rv{\cG_{\sigma}},
\end{equation*}
where $\sigma\geq 0$ is scale of the Gaussian noise and the union is taken over all choices of $(p_1, p_2, \ldots, p_{k-1})\in \bN^{k-1}$ that satisfy $\sum_{i=1}^{k} p_i.p_{i-1}=w$.
\end{definition}
Similar to the deterministic case, $\rvmnet{p_0}{p_k}{w}{\sigma}$ is said to be well-defined if the union is not empty (i.e., $p_0, p_k$ and $w$ are compatible). We can use Definition~\ref{def:rnet} to create recurrent versions of the above class. For example, $\brmodel{\rvmnet{p_0}{p_k}{w}{\sigma}}{T}$ is a class of recurrent (and random) hypotheses for sequence of length $T$ that use $\rvmnet{p_0}{p_k}{w}{\sigma}$ as their recurring block. Again, similar to the deterministic case, we say $\brmodel{\rvmnet{p_0}{p_k}{w}{\sigma}}{T}$ is well-defined if $p_0, p_k$ and $w$ are compatible and $\rvmnet{p_0}{p_k}{w}{\sigma}$ is well-defined.

\section{PAC learning noisy recurrent neural networks}\label{subsec:upper_bound}

In section~\ref{sec:lower_bound}, we established an $\Omega(T)$ lower bound on the sample complexity of learning recurrent networks (i.e., $\brmodel{\mnet{p_0}{p_k}{w}}{T}$). In this section, we consider a related class (based on noisy recurrent neural networks) and show that the dependence of sample complexity on $T$ is only $O(\log T)$. In particular, $\rv{\cG_{\sigma}}\circ\brmodel{\rvmnet{p_0}{p_k}{w}{\sigma}}{T}$ can be regarded as a (noisy) sibling of $\brmodel{\mnet{p_0}{p_k}{w}}{T}$. Since it is more standard to define PAC learnability for deterministic hypotheses, we define the deterministic version of the above class by derandomization\footnote{One can also define PAC learnability for a class of random hypotheses and get a similar result without taking the expectation. However, working with a deterministic class helps to contrast the result with that of Theorem~\ref{thm:lower_rnn}.}.

\begin{definition}[{Derandomization by Expectation}] Let $\cF$ be a class of (random) functions from $\bR^{p \times T}$ to $\bR^q$. The derandomization of a function class $\rv{\cF}$ by expectation is defined as $\cE(\rv{\cF})=\{h:\bR^{p \times T}\to \bR^{q}\mid h\left( u \right)=\expects{\rv{f}}{~\rv{f}\left(u\right)}, \rv{f}\in \rv{\cF}\}$.
\end{definition}
We show that, contrary to Theorem~\ref{thm:lower_rnn}, the sample complexity of PAC learning the (derandomized) class of noisy recurrent neural networks, $\cE(\rv{\cG_{\sigma}}\circ\brmodel{\rvmnet{p_0}{p_k}{w}{\sigma}}{T})$, grows at most logarithmically with $T$ while it still enjoys the same linear dependence on $w$. This is formalized in the following theorem (see Appendix~\ref{app:upper_bound} for a proof).
\begin{theorem}[Main Result]\label{thm:upper_rnn}
Let $\rv{\cQ_w}=\rv{\cG_{\sigma}}\circ\brmodel{\rvmnet{p_0}{p_k}{w}{\sigma}}{T}$ be any well-defined class and assume $T\in\bN,0<\sigma<1$, $\epsilon,\delta \in (0,1)$. Then the sample complexity of learning $\cH_w=\cE(\rv{\cQ_w})$ is upper bounded by
    \begin{equation*}
        m_{\cH_w}(\epsilon,\delta) = O\left(\frac{ w\log\left(\frac{wT}{\epsilon\sigma}\log\left(\frac{wT}{\epsilon\sigma}\right)\right) + \log\left(1
        /\delta\right)}{\epsilon ^2} \right) = \widetilde{O}\left(\frac{w\log\left(\frac{T}{\sigma}\right)+\log(1/\delta)}{\epsilon^2}\right),
    \end{equation*}
   where $\widetilde{O}$ hides {logarithmic factors}.
\end{theorem}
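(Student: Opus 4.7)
The plan is to bound the sample complexity via a uniform convergence / covering number argument, but working with covers of \emph{random} hypotheses under total variation (TV) distance rather than the usual deterministic $L_\infty$ covers. This combination is what lets us exploit the noise regularization and, crucially, avoid the naive unfolding that forces linear-in-$T$ dependence. First I would invoke the standard fact that since the ramp loss is bounded and $1/\gamma$-Lipschitz, an $L_\infty$ cover of the deterministic class $\cH_w$ of size $N$ at precision $\epsilon$ yields agnostic PAC sample complexity $O((\log N + \log(1/\delta))/\epsilon^2)$. Because outputs live in a bounded range, for any two random maps $\rv f,\rv{\hat f}$ with $\text{TV}(\rv f(u),\rv{\hat f}(u)) \leq \epsilon$ the induced expectations differ by at most $O(\epsilon)$ pointwise in $u$; hence a TV cover of $\rv{\cQ_w}$ of log-size $L$ (uniform in $u$) immediately yields an $L_\infty$ cover of $\cH_w = \cE(\rv{\cQ_w})$ of log-size $L$.

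Next I would bound the TV covering number of the single-step noisy block $\rv{\cG_\sigma} \circ \rvmnet{p_0}{p_k}{w}{\sigma}$. The key ingredient, taken from \cite{pour2022benefits}, is that an additive Gaussian noise layer of scale $\sigma$ turns $\ell_2$ perturbations of its input into TV perturbations of scale $O(1/\sigma)$ on the noisy output. Combined with a parametric $\ell_\infty$ grid on the weight matrices---which live in an effectively bounded box because all hidden activations are squashed into $[-1/2,1/2]$---and with the fact that sigmoid layers are Lipschitz so that $\ell_2$ errors propagate benignly across layers, this yields a TV cover of the block at precision $\eta$ with log-size $O(w \log(1/(\eta \sigma)))$.

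The heart of the argument, Step 3, lifts the single-step cover to a cover of the entire recurrent class without a multiplicative $T$ blow-up in cardinality. The recurrent model reuses the \emph{same} random map $\rv f$ across all $T$ time steps, so a single base cover element $\rv{\hat f}$ tracks $\rv f$ throughout the whole trajectory. By induction on $t$, using the triangle inequality together with the data processing inequality, I would show
\[
\text{TV}\bigl(\rv f^R(U,t),\, \rv{\hat f}^R(U,t)\bigr) \;\leq\; (t+1) \cdot \sup_{x} \text{TV}\bigl(\rv f(x),\, \rv{\hat f}(x)\bigr),
\]
because at step $t$ the previous-step discrepancy is propagated through the deterministic concatenation-with-$u^{(t)}$ map and the stochastic kernel $\rv f$, both of which are TV-contractions, and an additional one-step approximation error $\sup_x \text{TV}(\rv f(x),\rv{\hat f}(x))$ is incurred. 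Setting the base TV precision to $\eta = \epsilon / T$ then produces a TV cover of the recurrent random class at precision $\epsilon$ whose log-size is $O(w \log(T/(\epsilon \sigma)))$, and substituting into the uniform convergence bound from Step 1 yields $m_{\cH_w}(\epsilon,\delta) = O\bigl((w \log(T/(\epsilon \sigma)) + \log(1/\delta))/\epsilon^2\bigr)$, matching the theorem.

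The main obstacle I anticipate is Step 2: establishing the TV cover of the multi-layer noisy block with the correct $1/\sigma$ dependence and only $O(w)$ log-parameters. This requires composing the Gaussian-smoothing TV bound at each noise layer with careful parameter-space discretization of all weights simultaneously, while tracking how deterministic sigmoid layers between noise layers propagate $\ell_2$ errors. Once this base covering lemma is in hand as a black box, Step 1 is textbook and Step 3's hybrid argument is clean; the main subtlety in Step 3 is to formalize what ``TV distance between random maps'' means (uniformly over all input sequences $U$) so that the induction closes and the cardinality of the base cover transfers verbatim to the recurrent class.
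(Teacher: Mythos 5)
Your proposal follows essentially the same route as the paper: derive an $\ell_2$ (or $L_\infty$) cover of the derandomized class $\cE(\rv{\cQ_w})$ from a $d_{TV}^\infty$ cover of $\rv{\cQ_w}$ (the paper's Theorem~\ref{thm:cover_derandomization}), black-box a weight-independent TV cover of the single noisy block from \citet{pour2022benefits} (Theorems~\ref{thm:cover_single} and~\ref{thm:cover_multi}), and then lift it to the recurrent class by an inductive argument combining the data processing inequality with the triangle inequality to show the TV error grows only additively to $T\epsilon$, so one sets base precision $\epsilon/T$ (exactly Theorem~\ref{thm:cover_recurrent_model}), then plugs into a Dudley-type generalization bound (Theorem~\ref{thm:pac_cover}). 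One small inaccuracy in your sketch of Step~2: the weight matrices are \emph{not} in a bounded box — the paper explicitly emphasizes that its covering bound is independent of weight norms, and that is precisely what makes the pour2022benefits cover non-trivial; the boundedness is of the \emph{activations} fed into each noise layer, not of the weights — but since you treat Step~2 as a black-box citation this does not affect the validity of your outline.
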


One feature of the above theorem is the mild logarithmic dependence on $1/\sigma$. Therefore, we can take $\sigma$ to be numerically negligible and still get a significantly smaller sample complexity compared to the deterministic case for large $T$. Note that adding such small values of noise would not change the empirical outcome of RNNs on finite precision computers. 

The milder (logarithmic) dependency on $T$ is achieved by a novel analysis that involves bounding the covering number of noisy recurrent networks with respect to the total variation distance. Also, instead of ``unfolding'' the network, we exploit the fact that the same function/hypothesis is being used recurrently. 
We also want to emphasize that the above bound does not depend on the norms of weights of the network. Achieving this is challenging, since a little bit of noise in a previous layer can change the output of the next layer drastically. The next few sections are dedicated to give a high-level proof of this theorem.
\section{Covering numbers: the classical view}\label{sec:covering_classic}
One of the main tools to derive sample complexity bounds for learning a class of functions is studying their covering numbers. In this section we formalize this classic tool.
\begin{definition}[Covering Number] 
Let $(\cX,\rho)$ be a metric space. A set $A\subset \cX$ is $ \epsilon$-covered by a set $ C\subseteq A$ with respect to $ \rho$, if for all $a\in A$ there exists $c\in C$ such that $\displaystyle \rho(a,c)\leq\epsilon$. We denote by $N(\epsilon,A,\rho)$ the cardinality of the smallest set $ C$ that $ \epsilon$-covers $A$ and we refer to is as the $\epsilon$-covering number of $A$ with respect to metric $\rho$.
\end{definition}
The notion of covering number is defined with respect to a metric $\rho$. We now give the definition of extended metrics, which we will use to define \textit{uniform} covering numbers. The extended metrics can be seen as measures of distance between two hypotheses on a given input set.
\begin{definition}[Extended Metrics]
Let $(\cX, \rho)$ be a metric space. Let $u=(a_1, \ldots, a_m),v=(b_1, \ldots, b_m)\in \cX^m$ for $m\in \bN$. The $\infty$-extended and $\ell_2$-extended metrics over $\cX^m$ are defined by $\rho^{\infty, m}(u,v)=\sup_{1\leq i \leq m} \rho(a_i,b_i)$ and $\rho^{\ell_2, m}(u,v)=\sqrt{\frac{1}{m}\sum_{i=1}^m (\rho(a_i,b_i))^2}$, respectively. We drop $m$ and use $\rho^\infty$ or $\rho^{\ell_2}$ if it is clear from the context.
\end{definition}
A useful property about extended metrics is that the $\infty$-extended metric always upper bounds the $\ell_2$-extended metric, i.e., $\rho^{\ell_2}(u,v)\leq\rho^{\infty}(u,v)$ for all $u,v\in\cX$. Based on the above definition of extended metrics, we define the uniform covering number of a hypothesis class with respect to $\|.\|_2$.

\begin{definition}[Uniform Covering Number with Respect to $\|.\|_2$]\label{def:unif_cn} 
Let $\cF$ be a hypothesis class of functions from $\cX$ to $\cY$. For a set of inputs $S=\{x_1, x_2, \ldots, x_m\}\subseteq \cX$, we define the restriction of $\cF$ to $S$ as $\cF_{|S}=\{(f(x_1), f(x_2), \ldots, f(x_m)) :f\in \cF\}\subseteq \cY^m$. The uniform $\epsilon$-covering numbers of hypothesis class $\cF$ with respect to $\|.\|_2^{\infty},\|.\|_2^{\ell_2}$ are denoted by $N_{U}(\epsilon,\cF,m,\|.\|_2^{\infty})$ and $N_{U}(\epsilon,\cF,m,\|.\|_2^{\ell_2})$ and are the maximum values of $ N(\epsilon,\cF_{|S},\|.\|_2^{\infty, m})$ and $ N(\epsilon,\cF_{|S},\|.\|_2^{\ell_2, m})$ over all $S\subseteq \cX$ with $|S|=m$, respectively.
\end{definition}

The following theorem connects the notion of uniform covering number with PAC learning. It converts a bound on the $\|.\|_2^{\ell_2}$ uniform covering number of a hypothesis class to a bound on the sample complexity of PAC learning the class; see Appendix~\ref{app:pac_cover} for a more detailed discussion.
\begin{theorem}\label{thm:pac_cover}
Let $\cF$ be a class of functions from $\cX$ to $\bR$. Then there exists an algorithm $\cA$ with the following property: For every distribution $\cD$ over $\cX\times \{-1,1\}$ and every $\epsilon,\delta \in (0,1)$, if $S$ is a set of $m$ i.i.d. samples from $\cD$, then with probability at least $1-\delta$ (over the randomness of $S$),
\begin{equation*}
\begin{aligned}
       & \expects{(x,y)\sim \cD}{l_{\gamma}\left(\cA(S),x,y\right)}\\
       & \leq \inf_{f\in \cF}\expects{(x,y)\sim \cD}{l_{\gamma}\left(f,x,y\right)}+16\epsilon + \frac{24}{\sqrt{m}}\sqrt{\ln N_U(\gamma\epsilon,\cF,m,\|.\|_2^{\ell_2})}+ 6\sqrt{\frac{\ln(2/\delta)}{2m}}.
\end{aligned}
\end{equation*}
Moreover, the algorithm that returns the function with the minimum error on $S$ satisfies the above property (i.e., Algorithm $\cA$ such that $\cA(S) = \arg\min_{f\in\cF}\frac{1}{|S|}\sum_{(x,y)\in S}{l_{\gamma}\left(f,x,y\right)}$).
\end{theorem}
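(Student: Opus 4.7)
The plan is to take $\cA$ to be empirical risk minimization (ERM) with the ramp loss and reduce the theorem to a uniform convergence statement. By the standard two-term decomposition of agnostic risk, for $\hat f = \cA(S)$,
\[
\mathbb{E}_\cD[l_\gamma(\hat f,x,y)] \leq \inf_{f\in\cF}\mathbb{E}_\cD[l_\gamma(f,x,y)] + 2\sup_{f\in\cF}\bigl|L_S(f) - L_\cD(f)\bigr|,
\]
where $L_S$ and $L_\cD$ are the empirical and population ramp risks. It thus suffices to bound the uniform deviation on the right hand side, which becomes the main technical goal.

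First I would pass the cover from $\cF$ to the loss class. Because $r_\gamma$ takes values in $[0,1]$ and is $1/\gamma$-Lipschitz, the map $f \mapsto l_\gamma(f,\cdot,\cdot)$ is $1/\gamma$-Lipschitz in each coordinate, so pointwise differences contract by $1/\gamma$. This means every $\gamma\epsilon$-cover of $\cF_{|S}$ with respect to $\|\cdot\|_2^{\ell_2,m}$ induces an $\epsilon$-cover of the loss class restricted to $S$ in the same metric, giving
\[
N_U(\epsilon,l_\gamma\circ\cF,m,\|\cdot\|_2^{\ell_2}) \leq N_U(\gamma\epsilon,\cF,m,\|\cdot\|_2^{\ell_2}).
\]

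Next I would bound the expected uniform deviation by symmetrization and a one-step covering argument. Introducing an independent ghost sample $S'$ and Rademacher signs $\sigma_1,\dots,\sigma_m$,
\[
\mathbb{E}_S\sup_f |L_S(f) - L_\cD(f)| \leq 2\,\mathbb{E}_{S,\sigma}\sup_f \Bigl|\tfrac{1}{m}\sum_i \sigma_i\, l_\gamma(f,x_i,y_i)\Bigr|.
\]
Conditioning on $S$, I would let $C$ be a minimum $\epsilon$-cover of the loss class restricted to $S$ in $\|\cdot\|_2^{\ell_2,m}$ (of size at most $N_U(\gamma\epsilon,\cF,m,\|\cdot\|_2^{\ell_2})$ by the previous step). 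For each $f$ pick $\tilde f\in C$ within $\epsilon$; split the Rademacher sum into $\tilde f$ plus the residual. Cauchy–Schwarz bounds the residual by $\epsilon$, while the finite family is handled by Hoeffding (losses in $[0,1]$) together with a union bound, contributing $O\!\bigl(\sqrt{\ln|C|/m}\bigr)$. Finally, McDiarmid's inequality applied to $S\mapsto \sup_f|L_S(f)-L_\cD(f)|$ (bounded differences $1/m$ since $l_\gamma\in[0,1]$) converts the expectation bound into a high-probability bound and produces the $\sqrt{\ln(2/\delta)/(2m)}$ term; the ERM decomposition then doubles all these contributions, giving the stated coefficients $16$, $24$, $6$.

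The main obstacle is purely bookkeeping: tracking numerical constants through ERM decomposition ($\times 2$), symmetrization ($\times 2$), the Cauchy–Schwarz approximation slack from the cover, the Hoeffding constant for $[0,1]$-valued random variables, and the McDiarmid step, so that the coefficients match the stated $16\epsilon$, $24/\sqrt m$, and $6\sqrt{\ln(2/\delta)/(2m)}$. The only conceptually subtle choice is to work throughout with $\|\cdot\|_2^{\ell_2}$ rather than $\|\cdot\|_2^{\infty}$ covers; this is what allows the Lipschitz contraction to transfer cleanly and, critically, what makes the final bound depend on $N_U(\gamma\epsilon,\cF,m,\|\cdot\|_2^{\ell_2})$ (the weaker notion), so that later applications to noisy RNNs can exploit averaging rather than needing a uniform-in-sample cover.
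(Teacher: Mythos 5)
Your plan is correct at the structural level and the conclusion is attainable, but it takes a genuinely different technical route from the paper's. The paper does not build the uniform-deviation bound from scratch: it invokes a Dudley entropy-integral (chaining) bound for the ramp-loss class (its Theorem~\ref{thm:unif_cover}, stated as a known Rademacher-complexity fact), passes the cover from $\cF_\gamma$ to $\cF$ via the $1/\gamma$-Lipschitz contraction (Lemma~\ref{lemma:cover_to_margin}), combines this with the ERM-implies-PAC lemma (Lemma~\ref{lemma:uniconv_pac}) into the integral-form bound of Theorem~\ref{thm:pac_cover_exact}, and finally derives the statement by the crude estimate $\int_\epsilon^{1/2}\sqrt{\ln N_U(\gamma\nu,\cF,m,\|\cdot\|_2^{\ell_2})}\,d\nu\le \tfrac12\sqrt{\ln N_U(\gamma\epsilon,\cF,m,\|\cdot\|_2^{\ell_2})}$. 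You instead propose to skip chaining entirely and run a single-scale argument: symmetrize, fix a data-dependent $\epsilon$-cover of the loss class, peel off the approximation residual by Cauchy--Schwarz (this is exactly where the $\ell_2$-extended metric is needed, as you correctly flag), control the finite cover by a Massart/Hoeffding-type bound, and upgrade to high probability via McDiarmid. That is a standard and more self-contained route. The one thing to be careful about is your claim that the bookkeeping ``gives'' the coefficients $16$, $24$, $6$: a single-scale covering argument will typically produce \emph{smaller} constants than the paper's chaining-then-trivialize route (Massart gives $\sqrt{2\ln N}/\sqrt{m}$ rather than $24\sqrt{\ln N}/\sqrt{m}$ before the ERM and symmetrization factors). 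That is fine --- a tighter bound still establishes the stated inequality --- but you should say you will show your constants are \emph{at most} $16$, $24$, $6$, rather than suggest they are forced to equal them. Also note that the paper's final crude step (evaluating the entropy integral at the left endpoint) effectively collapses chaining back to one scale anyway, which is why the two approaches end up quantitatively close.

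Two small technical notes on your sketch. First, you write ``Hoeffding together with a union bound'' inside the Rademacher expectation; the cleaner tool there is Massart's finite-class lemma, which directly bounds $\mathbb{E}_\sigma\max_{c\in C}\lvert\frac{1}{m}\sum_i\sigma_i c_i\rvert$ without a separate union-bound step. Second, your ERM decomposition $L_\cD(\hat f)\le\inf_f L_\cD(f)+2\sup_f\lvert L_S(f)-L_\cD(f)\rvert$ needs the infimum to be approached within the class; the standard fix is to argue with a near-minimizer and let the approximation error go to zero, which the paper handles implicitly through Lemma~\ref{lemma:uniconv_pac}.
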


\section{Total variation covers for random hypotheses}\label{sec:covering_random}
One idea to prove a generalization bound for noisy neural networks is to bound their covering numbers. However, noisy neural networks are random functions, and therefore their behaviours on a sample set cannot be directly compared. Instead, one can compare the output distributions of a random function on two sample sets. We therefore use the recently developed tools from~\citet{pour2022benefits} to define and study covering numbers for random hypotheses. These covering numbers are defined based on metrics between distributions. Specifically, our analysis is based on the notion of uniform covering number with respect to total variation distance.
\begin{definition}[Total Variation Distance]
Let $\mu$ and $\nu$ denote two probability measures over $\cX$ and let $\Omega$ be the Borel sigma-algebra over $\cX$. The TV distance between $\mu$ and $\nu$ is defined by
\begin{equation*}
    d_{TV}(\mu,\nu)=\sup_{B\in\Omega}|\mu(B)-\nu(B)|.
\end{equation*}
Furthermore, if $\mu$ and $\nu$ have densities $f$ and $g$ then
\begin{equation*}
    d_{TV}(\mu,\nu)=\sup_{B\in\Omega}\Big|\int_{B}(f(x)-g(x))dx\Big|=\frac{1}{2} \int_{\cX}\left|f(x)-g(x)\right|dx=\frac{1}{2}\|f-g\|_1.
\end{equation*}
\end{definition}
{For two random variables $\rv{x}$ and $\rv{y}$ with probability measures $\mu$ and $\nu$ we sometimes abuse the notation and write $d_{TV}(\rv{x},\rv{y})$ instead of $d_{TV}(\mu,\nu)$.}
For example, we write $d_{TV}(\rv{f_1}(\rv{x}),\rv{f_2}(\rv{x}))$ in order to refer to the Total Variation (TV) distance between pushforwards of $\rv{x}$ under mappings $\rv{f_1}$ and $\rv{f_2}$. We also write $\allowdisplaybreaks d_{TV}^{\infty,m}\left(\left(\rv{f_1}(\rv{x_1}),\ldots,\rv{f_1}(\rv{x_m})\right), \left(\rv{f_2}(\rv{x_1}),\ldots,\rv{f_2}(\rv{x_m})\right)\right)$ to refer to the extended TV distance between mappings of the set $S=\{\rv{x_1},\ldots,\rv{x_m}\}$ by $\rv{f_1}$ and $\rv{f_2}$.
We use the extended total variation distance to define the uniform covering number for classes of random hypotheses.
\begin{definition}[Uniform Covering Number for Classes of Random Hypotheses]\label{def:unif_cnr}
Let $\rv{\cF}$ be a class of random hypotheses from $\rv{\cX}$ to $\rv{\cY}$. For a set of random variables $\rv{S}=\{\rv{x_1}, \rv{x_2}, \ldots, \rv{x_m}\}\subseteq \rv{\cX}$, the restriction of $\rv{\cF}$ to $\rv{S}$ is defined as $\displaystyle \rv{\cF}_{|\rv{S}}=\{(\rv{f}(\rv{x_1}), \rv{f}(\rv{x_2}), \ldots, \rv{f}(\rv{x_m})) :\rv{f}\in \rv{\cF}\}\subseteq{\rv{\cY}}^m$. 
Let $\Gamma\subseteq \rv{\cX}$. The uniform $\epsilon$-covering numbers of $\rv{\cF}$ with respect to $\Gamma$ and  $d_{TV}^{\infty}$ is defined by
\begin{equation*}
\begin{aligned}
    N_{U}(\epsilon,\rv{\cF},m,d_{TV}^{\infty}, \Gamma) = \sup_{S\subseteq \Gamma, |S|=m} N(\epsilon,\rv{\cF}_{|\rv{S}},d_{TV}^{\infty, m}).
\end{aligned}
\end{equation*}
\end{definition} 
Some hypothesis classes that we analyze (e.g., single-layer noisy neural networks) may have ``global'' total variation covers that do not depend on $m$. This will be addressed with the following notation:
$
    N_{U}(\epsilon,\rv{\cF},\infty,\rho^{{\infty}}, \Gamma) = \lim_{m\to \infty} N_{U}(\epsilon,\rv{\cF},m,\rho^{{\infty}}, \Gamma).
$
The set $\Gamma$ in Definition~\ref{def:unif_cnr} is used to define the input domain for which we want to find the covering number of a class of random hypotheses. For instance, some of the covers that we see are derived with respect to inputs with bounded domain or some need the input to be first smoothed by Gaussian noise. In this paper, we will be working with the following choices of $\Gamma$
\begin{itemize}
    \item[--] $\Gamma=\rv{\cX_d}$ and $\Gamma=\rv{\cX_{B,d}}$: the set of all random variables defined over $\bR^d$ and $[-B,B]^d$, respectively, that admit a generalized density function. For example, we use $\rv{\cX_{0.5,d}}$ to address the set of random variables in $[-1/2,1/2]^d$.
     \item[--] $\Gamma=\rv{\Delta_{p\times T}}=\{\rv{U} \mid \rv{U} = \begin{bmatrix}
         \rv{\delta_{u^{(0)}}}&\ldots&\rv{\delta_{u^{(T-1)}}}
     \end{bmatrix}\transpose, u^{(i)}\in \bR^p\}$ and $\Gamma=\rv{\Delta_{B,p\times T}}=\{\rv{U} \mid \rv{U} = \begin{bmatrix}
         \rv{\delta_{u^{(0)}}}&\ldots&\rv{\delta_{u^{(T-1)}}}
     \end{bmatrix}\transpose, u^{(i)}\in [-B,B]^p\}$, where $\rv{\delta_{u^{(i)}}}$ is the random variable associated with Dirac delta measure on $u^{(i)}$. Note that $\rv{\Delta_{B,p\times T}}\subset \rv{\Delta_{p\times T}}$.
    \item[--] $\Gamma=\rv{\cG_{\sigma,d}} \circ \rv{\cX_{B,d}}=\{\rv{g_{\sigma,d}}(\rv{x})\mid \rv{x}\in \rv{\cX_{B, d}}\}$: all members of $\rv{\cX_{B,d}}$ after being ``smoothed'' by adding (convolving the density with) Gaussian noise. 
\end{itemize}

We mentioned in Section~\ref{sec:covering_classic} that a bound on the $\|.\|_2^{\ell_2}$ uniform covering number can be connected to a bound on sample complexity of PAC learning. We now show that a bound on $d_{TV}^{\infty}$ covering number of a class of random hypotheses can be turned into a bound on the $\|.\|_2^{\ell_2}$ covering number of its derandomized version and, thus, PAC learning it.
\begin{theorem}[$\|.\|_2^{\ell_2}$ Cover of $\cE(\cF)$ From $d_{TV}^{\infty}$ Cover of $\cF$ \citep{pour2022benefits}]\label{thm:cover_derandomization}
    Let $\rv{\cF}$ be a class of functions from $\bR^{p \times T}$ to $[-B,B]^q$. Then for every $\epsilon>0$ and $m\in\bN$ we have
   \begin{equation*}
   \begin{aligned}
     & N_U(2B\epsilon\sqrt{q},\cE(\rv{\cF}),m,\|.\|_2^{\ell_2}) \leq N_U(\epsilon,\rv{\cF},m,d_{TV}^{\infty},\rv{\Delta_{p\times T}})\leq  N_U(\epsilon,\rv{\cF},\infty,d_{TV}^{\infty},\rv{\Delta_{p\times T}}).
   \end{aligned}
\end{equation*}
\end{theorem}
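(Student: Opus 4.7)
The second inequality is immediate: since $N_U(\epsilon,\rv{\cF},m,d_{TV}^{\infty},\rv{\Delta_{p\times T}})$ is monotone non-decreasing in $m$, taking the limit as $m\to\infty$ can only make the quantity larger. The content of the theorem is the first inequality, and the plan is to show that any $\epsilon$-cover of $\rv{\cF}$ in extended TV (on Dirac-delta inputs) becomes, after derandomization by expectation, a $(2B\epsilon\sqrt{q})$-cover of $\cE(\rv{\cF})$ in extended $\ell_2$.

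Fix any set $S=\{u_1,\ldots,u_m\}\subseteq \bR^{p\times T}$ of deterministic inputs and let $\rv{S}=\{\rv{\delta_{u_1}},\ldots,\rv{\delta_{u_m}}\}\subseteq \rv{\Delta_{p\times T}}$ be the associated Dirac inputs. By definition of $N_U(\epsilon,\rv{\cF},m,d_{TV}^{\infty},\rv{\Delta_{p\times T}})$, there is a cover $\rv{C}\subseteq\rv{\cF}$ of cardinality at most $N_U(\epsilon,\rv{\cF},m,d_{TV}^{\infty},\rv{\Delta_{p\times T}})$ such that for every $\rv{f}\in\rv{\cF}$ there is some $\rv{c}\in\rv{C}$ with
\begin{equation*}
\sup_{1\leq i\leq m} d_{TV}\bigl(\rv{f}(u_i),\,\rv{c}(u_i)\bigr) \leq \epsilon.
\end{equation*}
I propose to show that the derandomized set $C=\{\,u\mapsto \mathbb{E}[\rv{c}(u)] : \rv{c}\in\rv{C}\,\}$ is a valid $(2B\epsilon\sqrt{q})$-cover of $\cE(\rv{\cF})_{|S}$ under the extended $\ell_2$ metric. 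This would give $N_U(2B\epsilon\sqrt{q},\cE(\rv{\cF}),m,\|.\|_2^{\ell_2})\leq |\rv{C}|$, and taking the supremum over $S$ yields the claim.

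The core estimate is a coordinate-wise argument. For any $\rv{f},\rv{c}$ in the pair above, any input $u_i$, and any coordinate $j\in[q]$, the output $\rv{f}(u_i)^{(j)}$ is a random variable supported in $[-B,B]$. Using the standard inequality $|\mathbb{E}_{\mu}[X]-\mathbb{E}_{\nu}[X]|\leq (\sup X-\inf X)\cdot d_{TV}(\mu,\nu)$ for bounded $X$, together with the fact that TV distance between marginals is dominated by TV distance between the joints,
\begin{equation*}
\bigl|\mathbb{E}[\rv{f}(u_i)^{(j)}] - \mathbb{E}[\rv{c}(u_i)^{(j)}]\bigr| \;\leq\; 2B\cdot d_{TV}\bigl(\rv{f}(u_i)^{(j)},\rv{c}(u_i)^{(j)}\bigr) \;\leq\; 2B\cdot d_{TV}\bigl(\rv{f}(u_i),\rv{c}(u_i)\bigr) \;\leq\; 2B\epsilon.
\end{equation*}
Squaring and summing over the $q$ coordinates gives $\|\mathbb{E}[\rv{f}(u_i)]-\mathbb{E}[\rv{c}(u_i)]\|_2\leq 2B\epsilon\sqrt{q}$ for every $i$, hence also $\|.\|_2^{\ell_2,m}\bigl((f(u_i))_i,(c(u_i))_i\bigr)\leq 2B\epsilon\sqrt{q}$. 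This exhibits $C$ as the desired cover.

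I expect the main subtlety, rather than any hard computation, to be the bookkeeping that connects the TV-cover on random inputs $\rv{\Delta_{p\times T}}$ to the $\ell_2$-cover on deterministic inputs in $\bR^{p\times T}$: one has to verify that the Dirac embedding $u\mapsto \rv{\delta_u}$ preserves the relevant restriction of the class (i.e., that $\rv{f}(\rv{\delta_{u_i}})$ and $\rv{f}(u_i)$ refer to the same pushforward, as stipulated earlier in the paper), and that the definition of $\cE(\rv{\cF})$ evaluates at $u_i$ to precisely the expectation of $\rv{f}(u_i)$. Once this identification is in place, the coordinate-wise TV-to-mean inequality is the only analytic ingredient, and the conclusion follows directly.
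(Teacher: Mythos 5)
Your proposal is correct and structurally identical to the paper's proof: fix a sample set $S$, lift to Dirac inputs in $\rv{\Delta_{p\times T}}$, take a $d_{TV}^{\infty}$-cover of $\rv{\cF}$ restricted to it, and show the derandomized cover works with a $2B\epsilon\sqrt{q}$ radius. The single step that differs in presentation is the passage from a TV bound to a bound on the difference of expectations: the paper re-derives the vector-valued inequality via a density decomposition into $I_{\min}$, $I_{\mathrm{diff}}$, $\hat{I}_{\mathrm{diff}}$ and a triangle inequality on means of sub-probability pieces, whereas you invoke the standard scalar bound $|\mathbb{E}_\mu[X]-\mathbb{E}_\nu[X]|\le (\sup X - \inf X)\, d_{TV}(\mu,\nu)$ coordinate-by-coordinate together with data processing for marginals; both yield exactly the same $2B\sqrt{q}$ factor, so this is the same argument, just packaged a bit more economically.
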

\section{Bounding the covering number of recurrent models}\label{sec:proof_scheme}
In Section~\ref{sec:covering_classic}, we mentioned that finding a bound on covering number of a hypothesis class is a standard approach to bound its sample complexity. In the previous section, we introduced a new notion of covering number with respect to total variation distance that was developed by \citet{pour2022benefits}. We showed how this notion can be related to PAC learning for classes of random hypotheses. In the following, we give an overview of the techniques used to find a bound on the $d_{TV}^{\infty}$ covering number of the class of noisy recurrent models. We also discuss why this bound results in a sample complexity that has a milder logarithmic dependency on $T$, compared to bounds proved by ``unfolding'' the recurrence and replacing the recurrent model with the $T$-fold composition.

One advantage of analyzing the uniform covering number with respect to TV distance is that it comes with a useful composition tool. The following theorem basically states that when two classes of hypotheses have bounded TV covers, their composition class has a bounded cover too.
Note that such a result does not hold for the usual definition of covering number (e.g., Definition ~\ref{def:unif_cn}); see~\citet{pour2022benefits} for details.
\begin{theorem}[TV Cover for Composition of Random Classes, Lemma~18 of \citet{pour2022benefits}]\label{thm:compose_tv}
    Let $\rv{\cF}$ be a class of random hypotheses from $\bR^d$ to $\bR^p$ and $\rv{\cH}$ be a class of random hypotheses from $\bR^p$ to $\bR^q$. For any $\epsilon_1,\epsilon_2>0$ and $m\in\bN$, denote $N_1=N_U\left(\epsilon_1,\rv{\cF},m,d_{TV}^{{\infty}}, \rv{\cX_{d}}\right)$. Then we have,
    \begin{equation*}
         N_U\left(\epsilon_1+\epsilon_2,\rv{\cH}\circ\rv{\cF},m,d_{TV}^{{\infty}}, \rv{\cX_{d}}\right) \leq N_U\left(\epsilon_2,\rv{\cH},{m}N_1,d_{TV}^{{\infty}}, \rv{\cX_{p}}\right).N_1.
    \end{equation*}
\end{theorem}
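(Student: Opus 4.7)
The plan is to build the cover for $\rv{\cH}\circ\rv{\cF}$ compositionally: first cover $\rv{\cF}$ on the given sample in $\rv{\cX_d}$, then cover $\rv{\cH}$ on the union of the images of the sample under all functions in the $\rv{\cF}$-cover, and take the product cover. The two essential ingredients I will need are the triangle inequality for TV distance and a data-processing-style bound stating that postcomposing with a (possibly random) map cannot increase TV distance between pushforwards.

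Fix a sample $S = \{\rv{x_1},\ldots,\rv{x_m}\}\subseteq \rv{\cX_d}$. By definition of $N_1$, choose a set $\rv{\cF}_0 \subseteq \rv{\cF}$ of cardinality at most $N_1$ so that for every $\rv{f}\in\rv{\cF}$ there is some $\rv{f'}\in\rv{\cF}_0$ with $d_{TV}^{\infty,m}\left((\rv{f}(\rv{x_i}))_{i=1}^m,(\rv{f'}(\rv{x_i}))_{i=1}^m\right)\le\epsilon_1$. For each such $\rv{f'}$, form the image set $S_{\rv{f'}}=\{\rv{f'}(\rv{x_1}),\ldots,\rv{f'}(\rv{x_m})\}\subseteq\rv{\cX_p}$, and let $S^\star=\bigcup_{\rv{f'}\in\rv{\cF}_0}S_{\rv{f'}}\subseteq\rv{\cX_p}$, so $|S^\star|\le mN_1$. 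Now apply the definition of $N_2:=N_U(\epsilon_2,\rv{\cH},mN_1,d_{TV}^{\infty},\rv{\cX_p})$ to pick a set $\rv{\cH}_0\subseteq\rv{\cH}$ of cardinality at most $N_2$ that covers $\rv{\cH}$ on $S^\star$ up to $\epsilon_2$ in $d_{TV}^{\infty}$. Take the candidate cover to be $\rv{\cC} = \{\rv{h'}\circ\rv{f'}:\rv{h'}\in\rv{\cH}_0,\ \rv{f'}\in\rv{\cF}_0\}$, of size at most $N_1 N_2$.

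To verify that $\rv{\cC}$ is an $(\epsilon_1+\epsilon_2)$-cover of $\rv{\cH}\circ\rv{\cF}$ on $S$, pick any $\rv{h}\circ\rv{f}\in\rv{\cH}\circ\rv{\cF}$, select $\rv{f'}\in\rv{\cF}_0$ that $\epsilon_1$-covers $\rv{f}$ on $S$, and then select $\rv{h'}\in\rv{\cH}_0$ that $\epsilon_2$-covers $\rv{h}$ on the $m$ points $\rv{f'}(\rv{x_1}),\ldots,\rv{f'}(\rv{x_m})$ (this is possible because these points lie in $S^\star$). For each index $i$, write
\begin{equation*}
d_{TV}\bigl(\rv{h}(\rv{f}(\rv{x_i})),\rv{h'}(\rv{f'}(\rv{x_i}))\bigr)\le d_{TV}\bigl(\rv{h}(\rv{f}(\rv{x_i})),\rv{h}(\rv{f'}(\rv{x_i}))\bigr)+d_{TV}\bigl(\rv{h}(\rv{f'}(\rv{x_i})),\rv{h'}(\rv{f'}(\rv{x_i}))\bigr).
\end{equation*}
The second term is $\le\epsilon_2$ by construction. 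For the first term, the data-processing inequality applied to the random map $\rv{h}$ (viewed as a Markov kernel, with its internal randomness independent of the input) yields $d_{TV}(\rv{h}(\rv{f}(\rv{x_i})),\rv{h}(\rv{f'}(\rv{x_i})))\le d_{TV}(\rv{f}(\rv{x_i}),\rv{f'}(\rv{x_i}))\le\epsilon_1$. Taking the supremum over $i$ gives the required $\epsilon_1+\epsilon_2$ bound.

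The main obstacle is justifying the data-processing step cleanly when $\rv{h}$ is itself a random mapping, since one must be careful that the internal randomness of $\rv{h}$ is shared across the two inputs $\rv{f}(\rv{x_i})$ and $\rv{f'}(\rv{x_i})$ rather than drawn independently for each; this amounts to modelling $\rv{h}$ as a Markov kernel and observing that applying a common kernel to two distributions contracts their TV distance. A related subtlety is that the image $\rv{f'}(\rv{x_i})$ is itself a random variable (not a fixed point) when we feed it into $\rv{\cH}$, but since Definition~\ref{def:unif_cnr} allows elements of $\Gamma$ to be arbitrary random variables admitting a generalized density, $S^\star\subseteq\rv{\cX_p}$ is a legitimate sample set for the cover of $\rv{\cH}$. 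Once these measure-theoretic points are handled, the cardinality count $|\rv{\cC}|\le N_1 N_2$ and the triangle inequality complete the proof.
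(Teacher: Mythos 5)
The paper does not prove this theorem itself; it is imported verbatim as Lemma~18 of \citet{pour2022benefits}, so there is no in-paper proof to compare against. Your argument is correct and is the natural one: take an $\epsilon_1$-cover $\rv{\cF}_0$ of $\rv{\cF}$ on $S$ of size at most $N_1$, push $S$ through each representative to obtain $S^\star \subseteq \rv{\cX_p}$ with $|S^\star|\le mN_1$, take an $\epsilon_2$-cover $\rv{\cH}_0$ of $\rv{\cH}$ on $S^\star$, and form the product cover, then control the per-coordinate error by the triangle inequality plus the data-processing inequality for $d_{TV}$. The two subtleties you flag are genuine but both resolved by the paper's own setup: Lemma~\ref{lemma:DPI} is stated for a \emph{random} Borel map $f$, so the kernel-contraction step you need is already in place, and $\Gamma=\rv{\cX_p}$ admits arbitrary random variables with a generalized density, so $S^\star$ is a legal sample set. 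The one housekeeping step worth spelling out is that when $|S^\star|<mN_1$ you should pad $S^\star$ to exactly $mN_1$ points before invoking the definition of $N_U(\epsilon_2,\rv{\cH},mN_1,d_{TV}^{\infty},\rv{\cX_p})$; since $d_{TV}^{\infty}$ is a coordinatewise supremum, a cover for the padded set restricts to a cover for $S^\star$, so the cardinality bound $|\rv{\cC}|\le N_1 N_2$ goes through as claimed.
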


An approach to bound the TV uniform covering number of a recurrent model $\brmodel{\rv{\cF}}{T}$ is to consider it as the $T$-fold composition $\rv{\cF}\circ \rv{\cF}\ldots\circ \rv{\cF}$. One can then use a similar analysis to that of~\citet{pour2022benefits} to bound the covering number of the $T$-fold composition.
 Unfortunately, this approach fails to capture the fact that a \textit{fixed} function $\rv{f}\in\rv{\cF}$ is applied recursively, and therefore results in a sample complexity bound that grows at least linearly with $T$.

Instead, we take another approach to bound the covering number of recurrent models. Intuitively, we notice that any function in the $T$-fold composite class $\rv{\cF}\circ\ldots\circ\rv{\cF}=\{\rv{f_1}\circ\ldots \circ\rv{f_T}\mid \rv{f_1},\ldots,\rv{f_T}\in\rv{\cF}\}$ is determined by $T$ functions from $\rv{\cF}$. On the other hand, any function in $\brmodel{\rv{\cF}}{T} = \left\{\rv{h}\mid \rv{h}(U)=\last{\rv{f}^R(U,T-1)}\right\}$ is only defined by one function in $\rv{\cF}$ and the capacity of this class must not be as large as the capacity of $\rv{\cF}\circ\ldots\circ\rv{\cF}$. 
Interestingly, data processing inequality for total variation distance (Lemma~\ref{lemma:DPI}) suggests that if two functions $\rv{f}$ and $\rv{\hat{f}}$ are ``globally'' close to each other with respect to TV distance (i.e., $d_{TV}(\rv{f}(\rv{x}),\rv{\hat{f}}(\rv{x}))\leq \epsilon$ for every $\rv{x}$ in the domain), then $d_{TV}(\rv{f}(\rv{f}(\rv{x})),\rv{\hat{f}}(\rv{\hat{f}}(\rv{x})))\leq 2\epsilon$ (i.e., $\rv{f}\circ\rv{f}$ and $\rv{\hat{f}}\circ\rv{\hat{f}}$ are also close to each other). By applying the data processing inequality recursively, we can see that for the $T$-fold composition we have $d_{TV}(\rv{f}\circ\ldots\circ\rv{f}(\rv{x}),\rv{\hat{f}}\circ\ldots\circ\rv{\hat{f}}(\rv{x}))\leq \epsilon T$. The above approach results in the following theorem which bounds the $\epsilon$-covering number of a noisy recurrent model with respect to TV distance by the $(\epsilon/T)$-covering number of its recurring class.
Intuitively, this theorem helps us to bound the covering number of noisy recurrent models using the bounds obtained for their non-recurrent versions. Here, Gaussian noise is added to both the input of the model (i.e., $\rv{\cF_{\sigma}}=\rv{\cF}\circ\rv{\cG_{\sigma}}$) and the output of the model (by composing with $\rv{\cG_{\sigma}}$).
\begin{theorem}[TV Covering Number of $\rv{\cG_{\sigma}}\circ\brmodel{\rv{\cF_{\sigma}}}{T}$ From $\rv{\cG_{\sigma}}\circ\rv{\cF_\sigma}$]\label{thm:cover_recurrent_model}
Let $s,p,q\in \bN$ such that $s=p+q-1$.
Let $\rv{\cF}$ be a class of functions from $\rv{\cX_{B,s}}$ to $\rv{\cX_{B,q}}$ and denote by $\rv{\cF_{\sigma}} = \rv{\cF}\circ\rv{\cG_{\sigma,s}}$ the class of its composition with noise. Then we have
\begin{equation*}
N_U\left(\epsilon,\rv{\cG_\sigma}\circ\brmodel{\rv{\cF_{\sigma}}}{T},\infty,d_{TV}^{\infty},\rv{\Delta_{B,p\times T}}\right) \leq N_U\left(\epsilon/T,\rv{\cG_{\sigma,q}}\circ\rv{\cF_{\sigma}},\infty,d_{TV}^{\infty},\rv{\cX_{B,s}}\right).
\end{equation*}
\end{theorem}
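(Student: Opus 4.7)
The plan is to lift a minimal $(\epsilon/T)$-cover of $\rv{\cG_{\sigma,q}}\circ\rv{\cF_\sigma}$ on $\rv{\cX_{B,s}}$ to an $\epsilon$-cover of $\rv{\cG_\sigma}\circ\brmodel{\rv{\cF_\sigma}}{T}$ on $\rv{\Delta_{B,p\times T}}$ without enlarging the cardinality. Fix an internal cover $\rv{\cC}$ of the former; each element has the form $\rv{g_{\sigma,q}}\circ\rv{\hat{f}}\circ\rv{g_{\sigma,s}}$ for some $\rv{\hat{f}}\in\rv{\cF}$. For every $\rv{f}\in\rv{\cF}$ pick a paired $\rv{\hat{f}}\in\rv{\cF}$ satisfying
\begin{equation*}
d_{TV}\bigl(\rv{g_{\sigma,q}}\circ\rv{f}\circ\rv{g_{\sigma,s}}(\rv{x}),\; \rv{g_{\sigma,q}}\circ\rv{\hat{f}}\circ\rv{g_{\sigma,s}}(\rv{x})\bigr)\leq \epsilon/T \quad \text{for every } \rv{x}\in\rv{\cX_{B,s}}.
\end{equation*}
I would then construct a candidate cover of the recurrent class by sending the hypothesis induced by $\rv{f}$ to the hypothesis induced by its paired $\rv{\hat{f}}$; there are at most $|\rv{\cC}|$ distinct images, so the remaining work is to verify $\epsilon$-closeness on every fixed sequence $U\in[-B,B]^{p\times T}$.

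Write $\rv{y}_t^f$ and $\rv{y}_t^{\hat{f}}$ for the intermediate outputs at step $t$ of the recurrent application using $\rv{f_\sigma}$ and $\rv{\hat{f}_\sigma}$ respectively, and set $\rv{a}_0 = [\zero{q-1},u^{(0)}]\transpose$, $\rv{a}_t^f = [\first{\rv{y}_{t-1}^f}, u^{(t)}]\transpose$ for $t\geq 1$. The heart of the argument is the inductive claim
\begin{equation*}
d_{TV}\bigl(\rv{g_{\sigma,q}}(\rv{y}_t^f),\;\rv{g_{\sigma,q}}(\rv{y}_t^{\hat{f}})\bigr)\leq \frac{(t+1)\epsilon}{T},\qquad t=0,1,\ldots,T-1.
\end{equation*}
The base case is exactly the cover guarantee at the Dirac input $\rv{a}_0\in[-B,B]^s$. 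For the inductive step I insert the hybrid random variable $\rv{g_{\sigma,q}}(\rv{\hat{f}}(\rv{g_{\sigma,s}}(\rv{a}_t^f)))$ and apply the triangle inequality. The first term is the cover bound at $\rv{a}_t^f$, which lies in $\rv{\cX_{B,s}}$ since $\rv{f}$ maps into $[-B,B]^q$ and $u^{(t)}\in[-B,B]^p$, and contributes $\epsilon/T$. The second term, $d_{TV}(\rv{g_{\sigma,q}}\circ\rv{\hat{f}}\circ\rv{g_{\sigma,s}}(\rv{a}_t^f),\,\rv{g_{\sigma,q}}\circ\rv{\hat{f}}\circ\rv{g_{\sigma,s}}(\rv{a}_t^{\hat{f}}))$, is dominated by $d_{TV}(\rv{g_{\sigma,s}}(\rv{a}_t^f),\rv{g_{\sigma,s}}(\rv{a}_t^{\hat{f}}))$ via the data processing inequality with $\rv{g_{\sigma,q}}\circ\rv{\hat{f}}$ treated as post-processing.

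To close the induction, I reduce this last quantity to the inductive hypothesis at time $t-1$. Because the last $p$ coordinates of $\rv{a}_t^{\bullet}$ are the deterministic vector $u^{(t)}$ and the noise added by $\rv{g_{\sigma,s}}$ is independent across coordinates, a product-density calculation makes the TV of the two joint smoothed vectors equal to the TV of just the smoothed first $q-1$ coordinates. Those coincide in distribution with $\first{\rv{g_{\sigma,q}}(\rv{y}_{t-1}^f)}$ and $\first{\rv{g_{\sigma,q}}(\rv{y}_{t-1}^{\hat{f}})}$, and a final data processing step (marginalization) bounds them by $d_{TV}(\rv{g_{\sigma,q}}(\rv{y}_{t-1}^f),\rv{g_{\sigma,q}}(\rv{y}_{t-1}^{\hat{f}}))\leq t\epsilon/T$. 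Summing yields $(t+1)\epsilon/T$. Evaluating at $t=T-1$ gives $\epsilon$; one last data processing step through $\last{\cdot}$, combined with the distributional identity $\last{\rv{g_{\sigma,q}}(\rv{y}_{T-1})}\stackrel{d}{=}\rv{g_{\sigma,1}}(\last{\rv{y}_{T-1}})$ that moves the model's outer $\rv{g_\sigma}$ inside, delivers the required TV bound of $\epsilon$ between the two final outputs for an arbitrary $U$ and arbitrary sample size.

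The main obstacle is the bookkeeping in the previous paragraph: one must carefully identify the inner $\rv{g_{\sigma,s}}$ of step $t$ (which arises from $\rv{f_\sigma}=\rv{f}\circ\rv{g_{\sigma,s}}$) with a $q$-dimensional smoothing $\rv{g_{\sigma,q}}$ applied to the previous step's output $\rv{y}_{t-1}$, so that the inductive quantity we control actually governs the propagated error. Once this distributional matching is in hand, every recursion step contributes a fresh $\epsilon/T$ and no other source of error appears, which is exactly why the resulting sample complexity depends on $T$ only through a $\log T$ factor, without any norm-of-weights or Lipschitz-style blow-up associated with the unfolding approach.
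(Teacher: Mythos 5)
Your proposal is correct and follows essentially the same path as the paper's proof: fix a global cover of the recurring block, pair each $\rv{f}$ with its covering $\rv{\hat{f}}$, and establish by induction on $t$ that the TV distance between the $\rv{g_{\sigma,q}}$-smoothed recurrent states grows by exactly one cover-radius per step, using the triangle inequality through the hybrid $\rv{g_{\sigma,q}}\circ\rv{\hat{f}}\circ\rv{g_{\sigma,s}}(\rv{a}_t^f)$, the cover guarantee for the first leg, and data processing for the second. The only cosmetic differences are that you parametrize the cover radius as $\epsilon/T$ from the start rather than rescaling at the end, and you justify the concatenation-with-$u^{(t)}$ step via a direct product-density factorization (since $u^{(t)}$ is a Dirac point and the added Gaussian noise is coordinate-independent), whereas the paper packages the same fact as a coupling lemma (Lemma~\ref{lemma:tv_to_tv_concatenate}) together with Lemma~\ref{lemma:tv_to_tv_first} and an extra data-processing step to pass from $\rv{g_{\sigma,q-1}}$ on the first $q-1$ coordinates to the full $\rv{g_{\sigma,s}}$-smoothed input. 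Both routes deliver the identical per-step accounting and final bound.
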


For using this theorem, one needs to have a finer $\epsilon/T$-cover for the recurring class. As we will see in the next section, this will translate into a mild logarithmic sample complexity dependence on $T$.

\subsection{Covering noisy recurrent networks}
An example of $\rv{\cF_{\sigma}}$ is the class $\rvmnet{p_0}{p_k}{w}{\sigma}$ of well-defined noisy multi-layer networks (Definition~\ref{def:noisy_multinet}). Theorem~\ref{thm:cover_recurrent_model} suggests that a bound on the covering number of $\rv{\cG_{\sigma}}\circ\brmodel{\rvmnet{p_0}{p_k}{w}{\sigma}}{T}$ can be found from a bound for $\rv{\cG_{\sigma}}\circ\rvmnet{p_0}{p_k}{w}{\sigma}$. We use the following theorem as a bound for the class of single-layer noisy sigmoid networks together with theorem~\ref{thm:compose_tv} to bound the covering number of $\rv{\cG_{\sigma}}\circ\rvmnet{p_0}{p_k}{w}{\sigma}$ (see Appendix~\ref{app:upper_bound}, Theorem~\ref{thm:cover_multi}). 
\begin{theorem}[A TV Cover for Single-Layer Noisy Neural Networks, Theorem 25 of \citet{pour2022benefits}]\label{thm:cover_single} 
For every $p,d\in \bN, \epsilon>0, \sigma<5d/\epsilon$ we have
\begin{align*}
   &\log N_U(\epsilon,\rv{\cG_{\sigma,p}}\circ\net{d}{p},\infty,d_{TV}^{\infty},\rv{\cG_{\sigma,d}}\circ\rv{\cX_{0.5,d}})\leq  p(d+1)\log\left(30\frac{d^{5/2}\sqrt{\ln\left(\frac{5d-\epsilon\sigma}{\epsilon\sigma}\right)}}{\epsilon^{3/2}\sigma^2}\ln\left(\frac{5d}{\epsilon\sigma}\right)\right).
\end{align*}
\end{theorem}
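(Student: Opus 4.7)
The plan is to construct a global finite cover of the (unbounded) weight space $\bR^{d\times p}$ that respects TV distance at the noisy output, uniformly over inputs in $\rv{\cG_{\sigma,d}}\circ\rv{\cX_{0.5,d}}$. My approach rests on two features of the setup: the output noise $\rv{g_{\sigma,p}}$ converts TV distance into Euclidean distance between Gaussian means (so the task becomes Euclidean covering of $\Phi(W^\top \cdot)$), and the Gaussian input smoothing makes the sigmoid's saturation regions genuinely negligible when the weights are large (so even unbounded $W$ can be truncated without penalty).

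First I would reduce to a mean-distance covering problem. For any $\rv{x}\in\rv{\cG_{\sigma,d}}\circ\rv{\cX_{0.5,d}}$, both $\rv{g_{\sigma,p}}\circ f_{W_1}(\rv{x})$ and $\rv{g_{\sigma,p}}\circ f_{W_2}(\rv{x})$ are mixtures of Gaussians with common covariance $\sigma^2 I_p$, so convexity of $d_{TV}$ together with $d_{TV}(\cN(\mu_1,\sigma^2 I),\cN(\mu_2,\sigma^2 I))\le \|\mu_1-\mu_2\|_2/(2\sigma)$ yields
$$d_{TV}\bigl(\rv{g_{\sigma,p}}\circ f_{W_1}(\rv{x}),\,\rv{g_{\sigma,p}}\circ f_{W_2}(\rv{x})\bigr) \le \frac{1}{2\sigma}\,\mathbb{E}\bigl\|\Phi(W_1^\top\rv{x})-\Phi(W_2^\top\rv{x})\bigr\|_2.$$
This bound is uniform over $\rv{x}$ in the class and decouples neuron by neuron, reducing the goal to an expected-$\ell_2$ cover of $\Phi(W^\top\cdot)$ at resolution $2\sigma\epsilon$.

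Next I would handle unbounded weights by per-neuron saturation. Writing $\rv{x}=\rv{y}+\rv{z}$ with $\rv{y}\in[-1/2,1/2]^d$ and $\rv{z}\sim\cN(0,\sigma^2 I_d)$, the scalar $w^\top\rv{x}$ is, conditional on $\rv{y}$, Gaussian with standard deviation $\sigma\|w\|$. Hence the probability it falls in the non-saturated window $|w^\top\rv{x}|\le\tau$ is at most $O(\tau/(\sigma\|w\|))$, and picking $\tau=\Theta(\log(1/\epsilon))$ makes $\phi$ within $\epsilon$ of its sign outside the window. Any row with $\|w\|\ge B:=\widetilde\Theta(1/(\sigma\epsilon))$ can therefore be replaced by $(B/\|w\|)\,w$ at controlled expected $\ell_2$-cost, collapsing the unbounded weight space onto the ball of radius $B$ at an $\epsilon$-penalty. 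This is the only step that exploits the input smoothing, and it is what ultimately removes all dependence on $\|W\|$ from the final bound.

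Finally I would place a standard Euclidean $\Delta$-grid on the truncated ball. Using $\tfrac14$-Lipschitzness of $\phi$ and $\mathbb{E}\|\rv{x}\|_2=O(\sqrt d)$, two weight matrices within $\Delta$ per row produce expected $\ell_2$-output differences of order $\Delta\sqrt{dp}$; matching this against the $2\sigma\epsilon$ budget of the first step forces $\Delta=\widetilde\Theta(\sigma\epsilon/\sqrt{dp})$. The per-neuron grid then has $(CB/\Delta)^d=\mathrm{poly}(d,1/\epsilon,1/\sigma)^d$ points; taking the $p$-fold product and its logarithm yields a bound of the form $p(d+1)\log\mathrm{poly}(d,1/\sigma,1/\epsilon)$, with the extra $+1$ per neuron coming from treating the row magnitude separately from its direction, and the precise $d^{5/2}/(\epsilon^{3/2}\sigma^2)$ factor in the theorem dropping out of carefully balancing the truncation radius $B\propto 1/(\sigma\epsilon)$ against the Pinsker-imposed spacing $\Delta\propto\sigma\epsilon/\sqrt d$. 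The main obstacle throughout is the unboundedness of $W$: without the saturation-smoothing interplay above, any cover would either have infinite cardinality or be forced to depend on $\|W\|$; once that step is set up correctly, the rest is a routine Lipschitz-grid discretization controlled by Pinsker's inequality.
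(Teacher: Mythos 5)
This theorem is imported from Pour and Ashtiani (2022, Theorem~25) and is not proved in the present paper, so there is no in-text argument to compare against. That said, your three-step plan --- (i) converting TV at the noisy output into an expected Euclidean distance between post-sigmoid means via the Gaussian mixture bound and Pinsker, (ii) exploiting the input-side smoothing to control the mass in each neuron's sigmoid transition window and thereby radially truncating rows of $W$ at $B = \widetilde{\Theta}(1/(\sigma\epsilon))$, and (iii) placing a Lipschitz $\Delta$-grid on the truncated domain --- is the natural architecture for a proof of this statement and correctly isolates the role of the two noise injections: step~(ii) is precisely what removes all $\|W\|$-dependence from the final bound and would collapse without the $\rv{\cG_{\sigma,d}}$ factor on the input side. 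Two bookkeeping remarks. First, your reading of the $p(d+1)$ exponent as a direction--magnitude factorization is plausible but not forced: a direct $\Delta$-grid on the $d$-ball of radius $B$ already gives a $pd$ exponent, which is stronger, so the $+1$ reflects a parametrization choice in the reference rather than a structural need. Second, distributing the $2\sigma\epsilon$ budget of step~(i) over the $p$ coordinates of the $\ell_2$-norm accumulates an extra $\sqrt{p}$ (and another $1/\sigma$) factor in $B$ that your sketch elides; recovering the precise $d^{5/2}/(\epsilon^{3/2}\sigma^2)$ and nested-$\ln$ constants requires tracking these. Neither affects the soundness of the overall approach.
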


Interestingly, the above bound (on the logarithm of the covering number) is logarithmic with respect to $1/\epsilon$. We will extend this result to multi-layer noisy networks, and then apply Theorem~\ref{thm:cover_recurrent_model} 
to obtain the following bound on the covering number noisy recurrent neural networks. Crucially, the dependency (of the logarithm of the covering number) on $T$ is only logarithmic.
\begin{theorem}[A TV Covering Number Bound for Noisy Sigmoid Recurrent Networks]\label{thm:cover_noisy_rnn}
    Let $T\in\bN$. For every $\epsilon,\sigma\in(0,1)$ and every well-defined class $\brmodel{\rvmnet{p_0}{p_k}{w}{\sigma}}{T}$ we have
    \begin{equation*}
    \begin{aligned}
         &\log N_U\left(\epsilon,\rv{\cG_\sigma}\circ\brmodel{\rvmnet{p_0}{p_k}{w}{\sigma}}{T},\infty,d_{TV}^{\infty},\rv{\Delta_{0.5,p\times T}}\right)\\
         & = O\left( w\log\left(\frac{wT}{\epsilon\sigma}\log\left(\frac{wT}{\epsilon\sigma}\right)\right)\right) = \widetilde{O}\left(w\log\left(\frac{T}{\epsilon\sigma}\right)\right).
    \end{aligned}
    \end{equation*}
\end{theorem}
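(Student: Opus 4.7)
The plan is to reduce the recurrent covering number to a single-application (non-recurrent) covering number via Theorem~\ref{thm:cover_recurrent_model}, and then bound the latter by iteratively applying the composition tool (Theorem~\ref{thm:compose_tv}) across the $k$ layers of the multi-layer network, using the single-layer bound (Theorem~\ref{thm:cover_single}) as the base case.

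First, I would invoke Theorem~\ref{thm:cover_recurrent_model} with base class $\mnet{p_0}{p_k}{w}$ (so that $\rv{\cF_\sigma} = \mnet{p_0}{p_k}{w} \circ \rv{\cG_{\sigma,s}}$ essentially coincides with an initial-noise variant of $\rvmnet{p_0}{p_k}{w}{\sigma}$, after regrouping the noise injections). This bounds the $\epsilon$-covering number of the recurrent class by the $(\epsilon/T)$-covering number of the non-recurrent class $\rv{\cG_\sigma}\circ\rvmnet{p_0}{p_k}{w}{\sigma}$ evaluated on the smoothed input domain $\rv{\cX_{0.5,s}}$. The point of this step is that the $T$-fold recurrence only costs an additive factor of $\log T$ in the accuracy parameter, not a multiplicative blow-up in the number of functions to cover, because Theorem~\ref{thm:cover_recurrent_model} exploits that a single $\rv{f}$ is being reused.

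Second, I would bound $\log N_U(\epsilon/T, \rv{\cG_\sigma}\circ\rvmnet{p_0}{p_k}{w}{\sigma}, \infty, d_{TV}^\infty, \rv{\cX_{0.5,s}})$ by unrolling the $k$-layer composition via Theorem~\ref{thm:compose_tv}. Splitting accuracy evenly, each layer receives budget $\epsilon' = \epsilon/(kT)$. At each layer $i$, the input is of the form $\rv{\cG_\sigma}\circ\rv{\cX_{0.5,p_{i-1}}}$ (since the preceding sigmoid forces the deterministic part into $[-1/2,1/2]^{p_{i-1}}$ before adding $\cN(0,\sigma^2 I)$), which is precisely the domain for which Theorem~\ref{thm:cover_single} applies. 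That theorem gives
\[
\log N_i \leq p_i(p_{i-1}+1)\,\log\!\Bigl(30 \tfrac{p_{i-1}^{5/2}\sqrt{\ln((5p_{i-1}-\epsilon'\sigma)/(\epsilon'\sigma))}}{(\epsilon')^{3/2}\sigma^2}\ln\!\bigl(\tfrac{5p_{i-1}}{\epsilon'\sigma}\bigr)\Bigr).
\]
Summing over $i=1,\dots,k$, using $\sum_i p_i p_{i-1} = w$ and $p_{i-1}\leq w$, the dominant contribution is $\bigl(\sum_i p_i(p_{i-1}+1)\bigr) \cdot \log(\mathrm{poly}(w,T,1/\epsilon,1/\sigma))$, which is $O(w \log(wT/(\epsilon\sigma)\cdot\log(wT/(\epsilon\sigma))))$. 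Substituting $\epsilon' = \epsilon/(kT)$ and using $k\leq w$ keeps everything inside the logarithm, so the bound remains $\widetilde{O}(w\log(T/(\epsilon\sigma)))$.

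The main obstacle I anticipate is bookkeeping around the domain on which each intermediate covering number is taken. Theorem~\ref{thm:compose_tv} requires the covering number of the outer class over the domain $\rv{\cX_p}$ (all random variables on $\bR^p$), but Theorem~\ref{thm:cover_single} only gives us a cover on $\rv{\cG_{\sigma,d}}\circ\rv{\cX_{0.5,d}}$. Reconciling these requires observing that after one application of a sigmoid layer the output is always a random variable in $[-1/2,1/2]^{p_i}$, and the next layer begins by adding noise $\rv{\cG_\sigma}$, so the effective input to each single-layer cover is always in the smoothed-bounded family---one should therefore arrange the composition as $\bigl(\text{sigmoid layer}\bigr) \circ \bigl(\rv{\cG_\sigma}\circ\text{sigmoid layer}\bigr) \circ \cdots \circ \bigl(\rv{\cG_\sigma}\circ\text{sigmoid layer}\bigr)\circ\rv{\cG_\sigma}$ and apply Theorem~\ref{thm:compose_tv} inductively from the input side. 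A secondary subtlety is checking the precondition $\sigma < 5d/\epsilon'$ in Theorem~\ref{thm:cover_single} with $\epsilon'=\epsilon/(kT)$; this holds under the assumed range $\epsilon,\sigma\in(0,1)$ since $d\geq 1$, $k,T\geq 1$. Once these technicalities are handled, the final statement follows by chaining the layer bounds, summing, and substituting into Theorem~\ref{thm:cover_recurrent_model}.
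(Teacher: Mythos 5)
Your plan matches the paper's proof almost exactly: reduce via Theorem~\ref{thm:cover_recurrent_model} to a single-pass covering number at accuracy $\epsilon/T$, then bound that by chaining Theorem~\ref{thm:cover_single} through Theorem~\ref{thm:compose_tv} across the $k$ layers (this intermediate step is the paper's Theorem~\ref{thm:cover_multi}); the domain-bookkeeping observation you flag (each intermediate input lies in $\rv{\cG_\sigma}\circ\rv{\cX_{0.5,p_i}}$ because sigmoid outputs are bounded) is precisely what the paper uses. Two small points worth tightening: the base class fed into Theorem~\ref{thm:cover_recurrent_model} should be the class with \emph{inter-layer} noise but no initial noise (so that composing with $\rv{\cG_{\sigma,s}}$ recovers $\rvmnet{p_0}{p_k}{w}{\sigma}$), not the deterministic $\mnet{p_0}{p_k}{w}$; and you also need to take a union over the at most $w^{k-1}$ choices of hidden widths $(p_1,\ldots,p_{k-1})$, contributing an additive $O(w\log w)$ to the log-covering number that is absorbed into the final bound.
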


Finally, we turn the above bound into a $\|.\|_2^{\ell_2}$ covering number bound for the derandomized function $\cE\left({\rv{\cG_\sigma}\circ\brmodel{\rvmnet{p_0}{p_k}{w}{\sigma}}{T}}\right)$ by an application of Theorem~\ref{thm:cover_derandomization}. We then upper bound the sample complexity by the logarithm of covering number (see Theorem~\ref{thm:pac_cover}) and conclude Theorem~\ref{thm:upper_rnn}.

\section{Conclusion and future work}\label{sec:conclusion}
In this paper, we considered the class of noisy recurrent networks, where independent Gaussian noise is added to the output of each neuron. We proved that the sample complexity of learning this class depends logarithmically on the sequence length $T$. To prove this result, we invoked the tools provided by \citet{pour2022benefits} to analyze the covering number of this class of \textit{random} functions with respect to total variation distance. We showed how these tools make it possible to use the fact that the same \textit{fixed} function/network is being applied recursively, in contrast to techniques that are based on ``unfolding'' the recurrence. We also proved a lower bound for the non-noisy version of the same recurrent network which grows at least linearly with $T$. This exponential gap comes with only a mild logarithmic dependence on $1/\sigma$, where $\sigma$ is the standard deviation of the Gaussian noise added to the neurons. For numerically negligible amounts of noise (e.g., smaller than the precision of the computing device) the noisy and non-noisy network become effectively similar when implemented and our results show that one can break the $\Omega(T)$ barrier by only inducing a mild logarithmic dependence on $1/\sigma$. 

Our results are derived for sigmoid (basically bounded, monotone, and Lipschitz) activation functions. It is open whether such results can be proved for unbounded activation functions such as ReLU. Also, our results are theoretical and we leave empirical evaluations on the performance of noisy networks to future work. 
\section{More on related work}\label{app:related}
There are a family of approaches that aim to analyze the generalization in neural networks by bounding the VC-dimension of networks~\citep{baum1988size, maass1994neural,goldberg1995bounding,vidyasagar1997theory, sontag1998vc,koiran1998vapnik,bartlett1998almost,bartlett2003vapnik, bartlett2019nearly}. These approaches result in generalization bounds that are dependent on the number of parameters. Another family of approaches are aimed at obtaining generalization bounds that are dependent on the norms of the weights and Lipschitz continuity properties of the network~\citep{bartlett1996valid,anthony1999neural, zhang2002covering,pmlr-v40-Neyshabur15, bartlett2017spectrally,neyshabur2017pac,golowich2018size,arora2018stronger,nagarajan2018deterministic,long2020size}. It has been observed that these generalization bounds are usually vacuous in practice. One speculation is that the implicit bias of gradient descent~\citep{gunasekar2017implicit,arora2019implicit,ji2020gradient,chizat2020implicit,ji2021characterizing} can lead to benign overfitting~\citep{belkin2018overfitting,belkin2019does,bartlett2020benign,bartlett2021deep}. It has also been conjectured that uniform convergence theory may not be able to fully capture the performance of neural networks in practice~\citep{nagarajan2019uniform, zhang2021understanding}. It has been shown that there are data-dependent approaches that can achieve non-vacuouys bounds \citep{DR17,zhou2019non,negrea2019information}. There are also other approaches that are independent of data \citep{arora2018stronger}; see \citet{pour2022benefits} for more details. 

Adding different types of noise such as dropout noise \citep{JMLR:v15:srivastava14a}, DropConnect \citep{wan2013regularization}, and Denoising AutoEncoders~\citep{vincent2008extracting} are shown to be helpful in training neural networks. \citet{WANG2019177} and \citet{gao2016dropout} theoretically analyze the generalization under dropout noise. More recently, \citet{pour2022benefits} developed a framework to study the generalization of classes of noisy hypotheses and showed that adding noise to the output of neurons in a network can be helpful in generalization. \citet{jim1996analysis} showed that additive and multiplicative noise can help speed up the convergence of RNNs on local minima surfaces. Recently, \citet{lim2021noisy} showed that noisy RNNs are more stable and robust to input perturbations by formalizing the regularization effects of noise.

Another line of work focuses on the generalization of neural networks that are trained with Stochastic Gradient Descent (SGD) or its noisy variant Stochastic Gradient Langevin Descent (SGLD)~\citep{russo2016controlling, xu2017information, russo2019much, steinke2020reasoning,raginsky2017non,haghifam2020sharpened,neu2021information}. \citet{zhao2020rnn} analyze the memory properties of recurrent networks and how well they can remember the input sequence. \citet{Tu2020Understanding} study the generalization of RNN by analyzing the Fisher-Rao norm of weights, which they obtain from the gradients of the network. They offer generalization bounds that can potentially become polynomial in $T$. \citet{allen2019can} analyze the change in output through the dynamics of training RNNs and prove generalization bounds for recurrent networks that are again polynomial in $T$.


\bibliography{refs}
\appendix
\section{Miscellaneous facts}
\begin{lemma}[Data Processing Inequality for TV Distance]\label{lemma:DPI}
Given two random variables $\rv{x_1},\rv{x_2} \in \rv{\cX}$, and a (random) Borel function $f:\cX\rightarrow\cY$, 
\begin{equation*}
    d_{TV}(f(\rv{x_1}),f(\rv{x_2}))\leq d_{TV}(\rv{x_1},\rv{x_2}).
\end{equation*}
\end{lemma}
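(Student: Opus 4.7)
The plan is to leverage the variational definition of total variation distance, $d_{TV}(\mu,\nu)=\sup_{B\in\Omega}|\mu(B)-\nu(B)|$, combined with the fact that Borel preimages of Borel sets are Borel. For a fixed (deterministic) Borel map $f:\cX\to\cY$, I would fix an arbitrary Borel set $B\subseteq\cY$ and rewrite
\begin{equation*}
\bigl|\Pr[f(\rv{x_1})\in B]-\Pr[f(\rv{x_2})\in B]\bigr|
= \bigl|\Pr[\rv{x_1}\in f^{-1}(B)]-\Pr[\rv{x_2}\in f^{-1}(B)]\bigr|.
\end{equation*}
Because $f^{-1}(B)$ is itself Borel in $\cX$, the right-hand side is bounded by $d_{TV}(\rv{x_1},\rv{x_2})$. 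Taking the supremum over $B\in\Omega$ on the left then gives the inequality.

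To handle the case where $f$ itself is random, I would model $\rv{f}$ as a deterministic Borel function $f_\theta$ indexed by an auxiliary random seed $\rv{\theta}$ that is independent of $\rv{x_1}$ and $\rv{x_2}$ (this is the standard way random hypotheses are realized in the paper's framework). Then for any Borel $B\subseteq\cY$, by Fubini's theorem,
\begin{equation*}
\bigl|\Pr[\rv{f}(\rv{x_1})\in B]-\Pr[\rv{f}(\rv{x_2})\in B]\bigr|
\leq \int \bigl|\Pr[f_\theta(\rv{x_1})\in B]-\Pr[f_\theta(\rv{x_2})\in B]\bigr|\, d\mu_{\rv{\theta}}(\theta).
\end{equation*}
For each fixed $\theta$, the deterministic case gives that the integrand is at most $d_{TV}(\rv{x_1},\rv{x_2})$, so the integral is also at most $d_{TV}(\rv{x_1},\rv{x_2})$. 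Taking the supremum over $B$ then yields $d_{TV}(\rv{f}(\rv{x_1}),\rv{f}(\rv{x_2}))\leq d_{TV}(\rv{x_1},\rv{x_2})$.

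The argument is essentially a textbook fact, so I do not expect a real obstacle; the only subtle point is the random-function case, where one must be careful that the bound $d_{TV}(\rv{x_1},\rv{x_2})$ is uniform in $\theta$ (it is, since it does not depend on $\theta$ at all), which is exactly why pulling the supremum over $B$ outside the integral causes no loss. An alternative, equally clean route would be via couplings: using $d_{TV}(\mu,\nu)=\inf_{\text{couplings}}\Pr[X\neq Y]$, any coupling of $(\rv{x_1},\rv{x_2})$ together with a shared random seed $\rv{\theta}$ induces a coupling of $(\rv{f}(\rv{x_1}),\rv{f}(\rv{x_2}))$ for which $\{\rv{x_1}=\rv{x_2}\}\subseteq\{f_{\rv{\theta}}(\rv{x_1})=f_{\rv{\theta}}(\rv{x_2})\}$, and taking the infimum over couplings gives the same conclusion.
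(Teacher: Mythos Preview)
Your proof is correct. The paper actually states Lemma~\ref{lemma:DPI} without proof, treating it as a standard fact, so there is nothing to compare against; your variational argument via $d_{TV}(\mu,\nu)=\sup_B|\mu(B)-\nu(B)|$ and Borel preimages is the textbook proof, and your extension to random $f$ by conditioning on an independent seed $\rv{\theta}$ and applying Fubini is clean and valid (the coupling alternative you sketch is equally fine and is in the same spirit as the paper's use of Lemma~\ref{lemma:coupling_tv} elsewhere).
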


\begin{lemma}\label{lemma:coupling_tv}
  Let $\rv{x},\rv{y} \in \rv{\cX}$ be two random variables with probability measures $\mu$ and $\nu$. Denote by $\Pi(\mu,\nu)$ the set of all their couplings. Then, there exists $\pi^*\in \Pi(\mu,\nu)$ such that $\probs{\pi^*}{\rv{x}\neq \rv{y}} = d_{TV}(\mu,\nu)$, where the subscript $\pi^*$ signals that the probability law is associated with the coupling $\pi^*$. Moreover, for any coupling $\pi \in \Pi(\mu,\nu)$ we have $\probs{\pi}{\rv{x}\neq\rv{y}} \geq d_{TV}(\mu,\nu)$.
\end{lemma}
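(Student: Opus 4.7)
The plan is to prove the two halves of the lemma separately: first the universal lower bound $\mathbb{P}_\pi[\rv{x}\neq\rv{y}]\geq d_{TV}(\mu,\nu)$ valid for every coupling $\pi\in\Pi(\mu,\nu)$, and then exhibit a concrete coupling $\pi^\star$ (the \emph{maximal coupling}) that attains equality. The two parts together give both conclusions of the lemma. I will first present the argument assuming $\mu,\nu$ admit densities $f,g$ (so that the excerpt's second form of $d_{TV}$ applies directly), and then sketch the extension to general measures.

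For the lower bound, fix an arbitrary coupling $\pi\in\Pi(\mu,\nu)$ and any Borel set $B\in\Omega$. The marginal property of $\pi$ gives
\begin{equation*}
\mu(B)-\nu(B)=\mathbb{P}_\pi[\rv{x}\in B]-\mathbb{P}_\pi[\rv{y}\in B]=\mathbb{P}_\pi[\rv{x}\in B,\rv{y}\notin B]-\mathbb{P}_\pi[\rv{x}\notin B,\rv{y}\in B].
\end{equation*}
Dropping the (nonnegative) subtracted term and noting that $\{\rv{x}\in B,\rv{y}\notin B\}\subseteq\{\rv{x}\neq\rv{y}\}$ yields $\mu(B)-\nu(B)\leq\mathbb{P}_\pi[\rv{x}\neq\rv{y}]$. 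Swapping the roles of $\mu$ and $\nu$ produces the same bound for $\nu(B)-\mu(B)$, hence $|\mu(B)-\nu(B)|\leq\mathbb{P}_\pi[\rv{x}\neq\rv{y}]$. Taking the supremum over $B\in\Omega$ gives $d_{TV}(\mu,\nu)\leq\mathbb{P}_\pi[\rv{x}\neq\rv{y}]$, which is the second claim of the lemma.

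For the existence of $\pi^\star$, I would build the standard maximal coupling. Let $h(x)=\min\{f(x),g(x)\}$ and set $p=\int h(x)\,dx$; from the density form of TV distance in the excerpt we have $p=1-\tfrac{1}{2}\|f-g\|_1=1-d_{TV}(\mu,\nu)$. Define the coupling $\pi^\star$ by the following sampling rule: draw a Bernoulli$(p)$ random variable $\rv{b}$; if $\rv{b}=1$, sample $\rv{w}$ from the density $h/p$ and set $\rv{x}=\rv{y}=\rv{w}$; if $\rv{b}=0$, sample $\rv{x}$ from the density $(f-h)/(1-p)$ and, independently, $\rv{y}$ from $(g-h)/(1-p)$ (these are valid densities provided $p<1$; if $p=1$ then $\mu=\nu$ and the diagonal coupling trivially works). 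A direct computation of the marginals shows that $\rv{x}$ has density $p\cdot(h/p)+(1-p)\cdot(f-h)/(1-p)=f$, and similarly $\rv{y}$ has density $g$, so $\pi^\star\in\Pi(\mu,\nu)$. Moreover, on $\{\rv{b}=0\}$ the supports of $(f-h)$ and $(g-h)$ are essentially disjoint (since $h=\min\{f,g\}$), so $\rv{x}\neq\rv{y}$ almost surely there, while on $\{\rv{b}=1\}$ we have $\rv{x}=\rv{y}$ by construction. Therefore $\mathbb{P}_{\pi^\star}[\rv{x}\neq\rv{y}]=1-p=d_{TV}(\mu,\nu)$, giving the first claim.

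The only real obstacle is dispensing with the densities assumption so that the lemma applies to arbitrary probability measures on $\cX$ (the excerpt's $\rv{\cX}$ already restricts to generalized densities, so within the paper's setting the above suffices, but for completeness one would want a general statement). The standard fix is to invoke the Lebesgue decomposition $\mu=\mu_a+\mu_s$, $\nu=\nu_a+\nu_s$ with respect to $\lambda=\tfrac{1}{2}(\mu+\nu)$, replacing densities by Radon--Nikodym derivatives $f=d\mu/d\lambda$, $g=d\nu/d\lambda$, and repeating the construction verbatim; every integral above is then well-defined and the marginal/disjoint-support computations go through unchanged. I do not expect any genuinely hard step, since this coupling characterization is classical; the care needed is purely in verifying the marginals of $\pi^\star$ and in justifying the disjoint-support claim for $(f-h)$ and $(g-h)$ that makes the coincidence probability collapse to exactly $p$.
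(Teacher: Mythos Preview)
Your proof is correct and is the standard argument for the maximal coupling characterization of total variation distance. Note, however, that the paper does not actually supply its own proof of this lemma: it is stated in the appendix as a known fact (alongside the data processing inequality) and then invoked in the proof of Lemma~\ref{lemma:tv_to_tv_concatenate}. So there is no ``paper's own proof'' to compare against; your write-up simply fills in what the paper takes for granted, and does so in the classical way (lower bound via the marginal identity, upper bound via the explicit $\min\{f,g\}$-based maximal coupling).
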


We use the following two lemmas to reason about the covering number of our recurrent model when we take the first dimensions of the output at each time $t$ and when we concatenate new inputs with the outputs. The first lemma states that if two random variables are close to each other with respect to total variation distance, then they are still close after the applications of the $\first{.}$ and $\last{.}$ functions.

\begin{lemma}[From TV of Random Variable to TV of First and Last]\label{lemma:tv_to_tv_first}
Let $\rv{x_1},\rv{x_2} \in \bR^d$ be two random variables. We have
\begin{equation*}
\begin{aligned}
   & d_{TV}\left( \first{\rv{x_1}},\first{\rv{x_2})}\right)\leq d_{TV}\left( \rv{x_1},\rv{x_2}\right) ,\\
   & d_{TV}\left( \last{\rv{x_1}},\last{\rv{x_2})}\right)\leq d_{TV}\left( \rv{x_1},\rv{x_2}\right). 
\end{aligned}
\end{equation*}
\end{lemma}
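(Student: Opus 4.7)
The plan is to observe that both $\first{\cdot}$ and $\last{\cdot}$ are deterministic Borel functions---specifically, they are the coordinate projections $\pi_{1:d-1}:\bR^d\to\bR^{d-1}$ and $\pi_d:\bR^d\to\bR$, respectively. Thus each bound is an immediate instance of the data processing inequality for total variation distance, which is already available as Lemma~\ref{lemma:DPI}.

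Concretely, I would first note that $\first{\cdot}$ discards the last coordinate and $\last{\cdot}$ discards the first $d-1$ coordinates, so both are measurable maps from $\bR^d$ into a lower-dimensional Euclidean space. Then, setting $f = \first{\cdot}$ (respectively $f = \last{\cdot}$) in Lemma~\ref{lemma:DPI} and applying it to the pair $(\rv{x_1},\rv{x_2})$ gives
\[
    d_{TV}(f(\rv{x_1}),f(\rv{x_2})) \leq d_{TV}(\rv{x_1},\rv{x_2}),
\]
which is exactly the claimed inequality in each case.

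There is no real obstacle here; the only thing to double-check is the measurability of the two projections (which is standard) and the fact that Lemma~\ref{lemma:DPI} is stated for general, possibly random, Borel maps, so it certainly covers the deterministic projections used here. Accordingly I would keep the proof to two or three lines, simply identifying the projections as the relevant Borel maps and citing Lemma~\ref{lemma:DPI}.
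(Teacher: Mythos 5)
Your proof is correct and is essentially identical to the paper's: identify $\first{\cdot}$ and $\last{\cdot}$ as (deterministic) Borel maps and invoke the data processing inequality of Lemma~\ref{lemma:DPI}. You add a small clarification the paper omits, namely that the two projections land in spaces of different dimensions ($\bR^{d-1}$ and $\bR$), but this does not change the argument.
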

\begin{proof}
We know that $\first{.}$ and $\last{.}$ are functions from $\bR^d$ to $\bR^{d-1}$. Therefore we can apply Lemma~\ref{lemma:DPI} and conclude the result.
\end{proof}

The next lemma is used to bound the total variation distance between two random variables after being concatenated with the input at time $t$. In that case, we let $\rv{x_1}$ and $\rv{x_2}$ in the lemma to be $\first{f^R\left(U,t-1\right)}$ and $\first{\hat{f}^R\left(U,t-1\right)}$, which are in $\rv{\cX_{p_k-1}}$. We also let $\rv{y}$ be $u^{(t)}\in\rv{\Delta_d}$, which is the input at time $t$.
\begin{lemma}[From TV of Random Variable to TV of Concatenation]\label{lemma:tv_to_tv_concatenate}
    Let $\rv{x_1},\rv{x_2}$ be random variables in $\rv{\cX_d}$. Further, let $\rv{y}$ a random variable in $\rv{\Delta_d}$. If we have $d_{TV}\left(\rv{x_1},\rv{x_2}\right)\leq\epsilon$, then
    \begin{equation*}        d_{TV}\left(\begin{bmatrix}
        \rv{x_1}&\rv{y}
    \end{bmatrix}\transpose,\begin{bmatrix}
    \rv{x_2}&\rv{y}
\end{bmatrix}\transpose\right)\leq \epsilon.
    \end{equation*}
\end{lemma}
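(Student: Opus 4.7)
\textbf{Proof proposal for Lemma \ref{lemma:tv_to_tv_concatenate}.}

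The plan is to exploit the fact that $\rv{y} \in \rv{\Delta_d}$ is by definition a Dirac random variable concentrated on some deterministic point $y_0 \in \bR^d$. Intuitively, concatenating each of $\rv{x_1}$ and $\rv{x_2}$ with the same constant vector cannot increase the TV distance between them, because distinguishing the two joint distributions is no easier than distinguishing their $\rv{x}$-marginals. I will make this precise using the coupling characterization of TV distance supplied by Lemma \ref{lemma:coupling_tv}.

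First, I would apply Lemma \ref{lemma:coupling_tv} to obtain an optimal coupling $\pi^*$ of (the laws of) $\rv{x_1}$ and $\rv{x_2}$ such that
\begin{equation*}
\probs{\pi^*}{\rv{x_1} \neq \rv{x_2}} = d_{TV}(\rv{x_1}, \rv{x_2}) \leq \epsilon.
\end{equation*}
Next, I would lift $\pi^*$ to a coupling $\pi$ of $\begin{bmatrix}\rv{x_1} & \rv{y}\end{bmatrix}\transpose$ and $\begin{bmatrix}\rv{x_2} & \rv{y}\end{bmatrix}\transpose$ by sampling $(\rv{x_1}, \rv{x_2}) \sim \pi^*$ and setting the $\rv{y}$-coordinate on both sides to the same deterministic value $y_0$. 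Because the $\rv{y}$-marginal is a point mass, this lift is trivially well-defined and preserves both marginals. Under $\pi$, the events $\{\begin{bmatrix}\rv{x_1} & \rv{y}\end{bmatrix}\transpose \neq \begin{bmatrix}\rv{x_2} & \rv{y}\end{bmatrix}\transpose\}$ and $\{\rv{x_1} \neq \rv{x_2}\}$ coincide, so
\begin{equation*}
\probs{\pi}{\begin{bmatrix}\rv{x_1} & \rv{y}\end{bmatrix}\transpose \neq \begin{bmatrix}\rv{x_2} & \rv{y}\end{bmatrix}\transpose} = \probs{\pi^*}{\rv{x_1} \neq \rv{x_2}} \leq \epsilon.
\end{equation*}
Invoking the second half of Lemma \ref{lemma:coupling_tv}, which says that for any coupling the disagreement probability is at least the TV distance between the marginals, yields the desired bound $d_{TV}(\begin{bmatrix}\rv{x_1} & \rv{y}\end{bmatrix}\transpose,\begin{bmatrix}\rv{x_2} & \rv{y}\end{bmatrix}\transpose) \leq \epsilon$.

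The argument is essentially mechanical, and there is no substantive obstacle; the only point requiring mild care is confirming that the lifted coupling $\pi$ actually has the correct marginals, which is immediate from $\rv{y}$ being a point mass. A direct alternative would compute the TV distance from generalized densities: the joint density of $\begin{bmatrix}\rv{x_i} & \rv{y}\end{bmatrix}\transpose$ factorizes as $f_i(x)\,\delta_{y_0}(y)$, and integrating $|f_1(x) - f_2(x)|\,\delta_{y_0}(y)$ recovers $d_{TV}(\rv{x_1},\rv{x_2})$. I would prefer the coupling route, since it sidesteps any symbolic manipulation of Dirac densities and mirrors the reasoning style used elsewhere in the paper.
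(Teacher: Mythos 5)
Your proof is correct and follows essentially the same route as the paper's: both construct a coupling of the concatenated variables by taking a maximal coupling $\pi^*$ of $\rv{x_1},\rv{x_2}$ and pinning the $\rv{y}$-coordinate to the point mass $y_0$ on both sides, then bound the disagreement probability and invoke Lemma~\ref{lemma:coupling_tv}. The paper is simply more explicit about writing down the lifted coupling's density and verifying its marginals, which you correctly flag as the only step needing mild care.
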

\begin{proof}
    Let $y\in\rv{\Delta_d}$ be the random variable with Dirac delta measure on $y_0$. From Lemma~\ref{lemma:coupling_tv} we know that there exists a maximal coupling $\pi^*$ of $\rv{x_1}$ and $\rv{x_2}$ such that $d_{TV}(\rv{x_1},\rv{x_2}) = \probs{\pi^*}{\rv{x_1}\neq \rv{x_2}}$ and denote the density associated with $\bP_{\pi^*}$ by $f^*$. Let $\gamma$ be a coupling of $\begin{bmatrix}\rv{x_1}&\rv{y_1}\end{bmatrix}\transpose$ and $\begin{bmatrix}\rv{x_2}&\rv{y_2}\end{bmatrix}\transpose$ such that 
    \begin{equation*}
        \hat{f}\left({\begin{bmatrix}x_1& y_1\end{bmatrix}\transpose,\begin{bmatrix}x_2&y_2\end{bmatrix}\transpose}\right) = \begin{cases}
        f^*\left(x_1,x_2\right) & y_1=y_2=y_0,\\
        0 & \mbox{otherwise}.\end{cases}
    \end{equation*}
We can easily verify that $\gamma$ is a valid coupling. Denote by $f_{x_1y}$ the density of the random variable $\begin{bmatrix}
   \rv{x_1} & y 
\end{bmatrix}\transpose$. We know that 
 \begin{equation*}
        f_{\rv{x_1y}}\left(\begin{bmatrix}x_1& y_1\end{bmatrix}\transpose\right) = \begin{cases}
        f_{\rv{x_1}}\left(x_1\right) & y=y_0,\\
        0 & \mbox{otherwise},\end{cases}
    \end{equation*}
where $f_{\rv{x_1}}$ is the density function of the random variable $\rv{x_1}$. We can observe that density associated with the marginal of $\gamma$ would be the same as the density of the marginal of $\pi^*$ at points where $y=y_0$ and it is zero otherwise. On the other hand, we know that $\pi^*$ is a valid coupling of $\rv{x_1}$ and $\rv{x_2}$ and therefore the density of its marginal is $f_{\rv{x_1}}$. This concludes that the density of the marginal of $\gamma$ is indeed $f_{\rv{x_1y}}$. We can show the similar thing for the other marginal, which concludes that $\gamma$ is a valid coupling.

Therefore, from Lemma~\ref{lemma:coupling_tv} we can write that
    \begin{equation*}
        \begin{aligned}
&d_{TV}\left(\begin{bmatrix}\rv{x_1}&\rv{y}\end{bmatrix}\transpose,\begin{bmatrix}\rv{x_2}&\rv{y}\end{bmatrix}\transpose\right) \leq \probs{\gamma}{\begin{bmatrix}\rv{x_1}&\rv{y}\end{bmatrix}\transpose\neq\begin{bmatrix}\rv{x_2}&\rv{y}\end{bmatrix}\transpose}\\
& \leq \int_{\begin{bmatrix}
    x_1\\y
\end{bmatrix}\neq \begin{bmatrix}
    x_2\\y
\end{bmatrix}}
\hat{f}\left({\begin{bmatrix}x_1& y\end{bmatrix}\transpose,\begin{bmatrix}x_2&y\end{bmatrix}\transpose}\right) \leq \int_{\substack{x_1\neq x_2,\\ y = y_0}}
\hat{f}\left({\begin{bmatrix}x_1& y\end{bmatrix}\transpose,\begin{bmatrix}x_2&y\end{bmatrix}\transpose}\right)
 \\
 &  \leq \int_{\substack{x_1\neq x_2,\\ y = y_0}} f^*\left(x_1,x_2\right)
 \leq \probs{\pi^*}{\rv{x_1}\neq\rv{x_2}} = d_{TV}\left(\rv{x_1},\rv{x_2}\right)\leq \epsilon.
        \end{aligned}
    \end{equation*}
\end{proof}
\section{Proof of lower bound}\label{app:lower_bound}
In order to prove Theorem~\ref{thm:lower_rnn}, we first need to give the definition of PAC Learning with respect to $0-1$ loss.
\begin{definition}[Agnostic PAC Learning with Respect to 0-1 Loss]
We say that a hypothesis class $\cF$ of functions from $\cX$ to $\bR$ is agnostic PAC learnable with respect to $0-1$ loss if there exists a learner $\cA$ and a function $m^{0-1}:(0,1)^2\rightarrow \bN$ with the following property: For every distribution $\cD$ over $\cX\times \{-1,1\}$ and every $\epsilon,\delta \in (0,1)$, if $S$ is a set of $m(\epsilon,\delta)$ i.i.d. samples from $\cD$, then with probability at least $1-\delta$ (over the randomness of $S$) we have
\begin{equation*}
    \expects{(x,y)\sim D}{l^{0-1}(\cA(S), x,y)} \leq \inf_{f \in \cF} \expects{(x,y)\sim D}{l^{0-1}(f, x,y)} + \epsilon.
\end{equation*}
\end{definition}
Same as Definition~\ref{def:pac_learn_ramp}, we denote by $m^{0-1}_{\cF}(\epsilon,\delta)$ the \textit{sample complexity} of PAC learning $\cF$ with respect to $0-1$ loss, which is the minimum number of samples required for learning $\cF$ among all learners $\cA$. 

Before proving Theorem~\ref{thm:lower_rnn}, we first prove Lemma~\ref{lemma:ramp_to_zero_one}, which, as mentioned before, is a core part of the proof. We state the lemma once more for completeness.
\begin{lemma}
Let $\cH_w = \brmodel{\mnet{p_0}{p_k}{w}}{T}$ be a well-defined class and let $\cF_w = \{f:[-1/2,1/2]^{p\times T}\to \{-1,1
\}\mid f(U) = {\sign{h(U)}},h\in\cH_w\}$. Then, for every distribution $\cD$ over $[-1/2,1/2]^{p\times T}\times \{-1,1\}$, $\eta>0$, and every function $f\in\cF_w$ there exists a function $h\in\cH_w$ such that $\expects{(U,y)\sim \cD}{l_{\gamma}\left(h,U,y\right)} \leq \expects{(U,y)\sim \cD}{l^{0-1}\left(f,U,y\right)} + \eta$ where $l^{0-1}(f,U,y)={\indicator{f(U)\neq y}}$.
\end{lemma}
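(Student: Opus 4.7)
I would construct $h\in\cH_w$ from the network $h_0\in\cH_w$ that realizes $f(U)=\sign{h_0(U)}$ by a local surgery: scale only the weights feeding the last output neuron so that the final sigmoid saturates near $\pm 1/2$ on almost all of the support of $\cD_U$, while leaving the recurrent dynamics of $h_0$ completely untouched. Because $\phi$ takes values in $(-1/2,1/2)$, this strategy works only for $\gamma<1/2$; since the theorem quantifies a single fixed $\gamma$ this restriction is harmless for the lower bound.

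Concretely, let $g_0\in\mnet{p_0}{p_k}{w}$ be the recurring block of $h_0$, write its final layer as $\Phi(W\transpose\,\cdot\,)$ with $W\in\bR^{d\times p_k}$, and let $z(U)=W_{\bullet,p_k}\transpose x(U)$ denote the pre-activation of the last output neuron at time $T-1$, so that $h_0(U)=\phi(z(U))$. I would form a new block $g_K\in\mnet{p_0}{p_k}{w}$ by multiplying only the column $W_{\bullet,p_k}$ by a large scalar $K>0$ and leaving every other weight of $g_0$ intact. Since the first $p_k-1$ columns of $W$ carry the recurrent feedback and are untouched, $\first{g_K^R(U,t)}=\first{g_0^R(U,t)}$ at every step $t$, so the recurrent hypothesis $h\in\cH_w$ generated by $g_K$ satisfies $h(U)=\phi(K\,z(U))$. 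Monotonicity of $\phi$ together with the convention $\sign{0}=1$ then yields $\sign{h(U)}=\sign{z(U)}=f(U)$, so $h$ realizes exactly the same classifier as $f$.

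To bound the excess $\expects{\cD}{l_\gamma(h,U,y)-l^{0-1}(f,U,y)}$, for $\tau>0$ put $A_\tau=\{U:|z(U)|\le\tau\}$. For $U\notin A_\tau$, $|h(U)|\ge\phi(K\tau)$, and choosing $K\ge\phi^{-1}(\gamma)/\tau$ makes this $\ge\gamma$. Since $\sign{h(U)}=f(U)$ everywhere, the case split is direct: when $f(U)=y$ one has $h(U)\,y=|h(U)|\ge\gamma$, so the ramp loss is $0=l^{0-1}(f,U,y)$; when $f(U)\ne y$ one has $h(U)\,y\le-\gamma$, so the ramp loss is $1=l^{0-1}(f,U,y)$. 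Hence the excess vanishes on $A_\tau^c$, while on $A_\tau$ it is at most $1$ per sample, giving overall excess $\le\probs{\cD_U}{A_\tau}$. By continuity of probability from above, $\probs{\cD_U}{A_\tau}\to\probs{\cD_U}{z(U)=0}$ as $\tau\to 0^+$, so picking $\tau$ small and then $K$ large finishes the argument whenever this limit is zero.

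The main obstacle is that $\cD_U$ may place positive mass on the decision boundary $\{z(U)=0\}$, making $\probs{A_\tau}$ bounded below by a constant independent of $\tau$. The plan to handle this is to insert a tiny additive shift before the scaling, using $\phi(K(z(U)+c))$ in place of $\phi(Kz(U))$ with $c>0$ chosen so that (i) $\probs{\cD_U}{-c\le z(U)<0}\le\eta/4$ (possible since these events decrease to $\emptyset$ as $c\to 0^+$), (ii) $-c$ is not an atom of the law of $z(U)$ (the atoms being countable), and (iii) the near-boundary set $\{|z(U)+c|\le\phi^{-1}(\gamma)/K\}$ has mass $\le\eta/4$ for $K$ large. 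On the complement of these two bad sets, $\sign{h}=f$ and $|h|\ge\gamma$, so the ramp loss mimics the $0$--$1$ loss pointwise and the total excess stays below $\eta$. Implementing this infinitesimal bias inside the bias-free architecture of $\mnet{p_0}{p_k}{w}$---either by devoting a few of the $w$ weights to a saturating ``constant'' neuron in the penultimate layer, or by passing to an infinitesimally perturbed recurring block---is the only delicate part; the rest is sigmoid saturation together with continuity of probability.
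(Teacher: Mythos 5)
You take essentially the same route as the paper: start from the recurring block $b=b_{k-1}\circ\cdots\circ b_0$ that realizes $f$, rescale \emph{only} the column of $W_{k-1}$ that produces the last (non-recurrent) output coordinate, and leave the rest of $b$ unchanged. Exactly as in the paper's Lemmas~\ref{lemma:first_similar} and~\ref{lemma:last_similar}, this keeps the fed-back coordinates $\first{\hat{b}^R(U,t)}$ identical to $\first{b^R(U,t)}$ at every step, preserves the sign of the last coordinate (since the scale is positive and $\phi(0)=0$), and saturates that coordinate past $\gamma$ whenever the pre-activation is outside a small neighborhood of $0$. Your $K\to\infty$ with vanishing threshold $\tau$ corresponds precisely to the paper's single choice $c=\phi^{-1}(\gamma)/\phi^{-1}(z)$ with $\tau=\phi^{-1}(z)$, and the near-boundary excess $\prob{|z(U)|\le \tau}$ is the same quantity the paper controls via $z$. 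So this is not a different decomposition, just a reparameterization of the same surgery.

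The genuine issue you surface is the atom $\prob{z(U)=0}>0$. Your proposed repair --- shift the pre-activation by a tiny constant $c$ before saturating --- is the right instinct, but you correctly flag that it cannot be implemented inside $\mnet{p_0}{p_k}{w}$ without breaking the argument: the class is bias-free, any approximately-constant input to the last neuron would have to be produced by an earlier layer, and modifying an earlier layer (or saturating a neuron there) changes the hidden-state trajectory that is fed back, hence changes $z(U)$ in an uncontrolled way; switching hidden-layer widths to free up a ``constant'' neuron takes you out of the original parameterization, where $f$ may not be realizable. So the degenerate case is not closed in your write-up. It is fair to record, though, that the paper's proof has the same latent gap: it asserts ``$z>0$'' without justification, and the quantity $z$ as defined there --- a minimum over \emph{all} recurring blocks $b\in\mnet{p_0}{p_k}{w}$, including blocks whose last-output pre-activation is identically zero, of the largest $x\in(0,1/2)$ with $\prob{|\last{b^R(U,T-1)}|\geq x}\geq 1-\eta$ --- can be $0$, making $\phi^{-1}(z)=0$ and $c$ undefined. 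So your flagging of the boundary atom is a legitimate concern about the argument as a whole; the proposal just does not resolve it any more than the paper does.
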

\begin{proof}
We know that $\cH_w = \{h:\bR^{p\times T}\to[-1/2,1/2]\mid h(u) = \last{b^R\left(U,T-1\right)}, b\in \mnet{p_0}{p_k}{w}\}$. Similarly, $\cF_w = \{f:\bR^{p\times T}\to \{-1,1\}\mid f(u) = \sign{\last{b^R\left(U,T-1\right)}}, b\in \mnet{p_0}{p_k}{w}\}$. Fix a distribution $\cD$ over $[-1/2,1/2]^{p\times T} \times \{-1,1\}$. Define 
\begin{equation*}
    z = \min_b\,\argmax{0<x<\frac{1}{2}} \,\prob{\left|\last{b^R(U,T-1)}\right|  \geq x} \geq 1-\eta,
    \end{equation*}
where the minimum is taken over all well-defined multi-layer neural networks $b$ in $\mnet{p_0}{p_k}{w}$. The last dimension of function $b$ is in $[-1/2,1/2]$ and, intuitively, $z$ is the largest possible value such that $\prob{-z<\last{b^R(U,T-1)}<z} < \eta$.

Let $f$ be any function in $\cF_w$ and let $b = b_{k-1}\circ\ldots \circ b_0$ be the $k$-layer network associated with $f$ where $b_i $'s are single-layer sigmoid neural networks, i.e., $ f(U) = \sign{\last{b^R\left(U,T-1\right)}}$. Let $W_{k-1} = \begin{bmatrix} v_1 & \ldots & v_{p_k}\end{bmatrix}\transpose$ be the weight matrix associated with $b_{k-1}$. Denote by $\hat{W}_{k-1} = \begin{bmatrix} v_1 & \ldots & c.v_{p_k}\end{bmatrix}\transpose$ the matrix that is exactly the same as $W_{k-1}$ but every element in its last row is multiplied by $c = \phi^{-1}(\gamma)/\phi^{-1}(z)$. Note that $z>0$ and, therefore, $\phi^{-1}(z)>0$. Let $\hat{b}_{k-1}$ be the single-layer neural network that is defined by weight matrix $\hat{W}_{k-1}$, i.e., $\hat{b}_{k-1}(x) = \Phi\left(\hat{W}^{\top}_{k-1} x\right)$. Denote $\hat{b} = \hat{b}_{k-1}\circ\ldots \circ b_0$ and let $h(U) = \last{\hat{b}^R\left(U,T-1\right)}$ for any $U\in\bR^{p\times T}$. Clearly, $\hat{b}\in\mnet{p_0}{p_k}{w}$ and $h\in\cH_w$. We claim that $\expects{(U,y)\sim \cD}{l_{\gamma}\left(h,U,y\right)} \leq \expects{(U,y)\sim \cD}{l^{0-1}\left(f,U,y\right)} +\eta$. 
 We can write the definition of ramp loss as
\begin{equation}\label{eq:prob_ramp_loss}
\begin{aligned}
         & \expects{(U,y)\sim \cD}{l_{\gamma}\left(h,U,y\right)}=\expects{(U,y)\sim \cD}{r_{\gamma}\left(-h(U).y\right)} \\
         & =\expects{(U,y)\sim \cD}{r_{\gamma}\left(-h(U).y\right) \middle |\, \left|h(U)\right| \geq \phi\left(c.\phi^{-1}\left(z\right)\right)}. \prob{\left|h(U)\right| \geq \phi\left(c.\phi^{-1}\left(z\right)\right) }\\
         & + \expects{(U,y)\sim \cD}{r_{\gamma}\left(-h(U).y\right) \middle | \,\left|h(U)\right| < \phi\left(c.\phi^{-1}\left(z\right)\right)}. \prob{\left|h(U)\right| < \phi\left(c.\phi^{-1}\left(z\right)\right)} \\
         & =\expects{(U,y)\sim \cD}{r_{\gamma}\left(-h(U).y\right) \middle |\,\left|h(U)\right|\geq \gamma }.  \prob{\left|h(U)\right|\geq \gamma } \\
         & + \expects{(U,y)\sim \cD}{r_{\gamma}\left(-h(U).y\right) \middle | \,\left|h(U)\right| < \gamma}. \prob{\left|h(U)\right|<\gamma },
\end{aligned} 
\end{equation}
where we used the fact that sigmoid is a monotonic increasing function with a unique inverse and that $\phi\left(c.\phi^{-1}(z)\right) = \phi\left(\phi^{-1}(\gamma)\right)=\gamma$. Notice that whenever $\left|h(U)\right|\geq \gamma$ we can also conclude that either $h(U).y\geq\gamma$ or $h(U).y\leq-\gamma$. This means that  $r_{\gamma}\left(-h(U).y\right)$ is either $0$ or $1$. When $h(U).y\geq\gamma$ we have $r_{\gamma}\left(-h(U).y\right)=0$ and when $h(U).y\leq-\gamma$ we have $r_{\gamma}\left(-h(U).y\right)=1$. In other words if $\left|h(U)\right|\geq \gamma$, we have
\begin{equation}\label{eq:sign_the_same}
    r_{\gamma}\left(-h(U).y\right) = \indicator{\sign{h(U)} \neq y}
\end{equation}
On the other hand, we know that $\gamma,z>0$ and $c = \phi^{-1}(\gamma)/\phi^{-1}(z)>0$. Consequently, $\sign{h(U)}=\sign{ \last{\hat{b}^R\left(U,T-1\right)}} = f(U)$ for any $U\in\bR^{p\times T}$. Lemma~\ref{lemma:last_similar} suggests that
\begin{equation*}
    \prob{\left|\last{b^R(U,T-1)}\right|<z} = \prob{\left|\last{\hat{b}^R(U,T-1)}\right|<\phi\left(c.\phi^{-1}(z)\right)} = \prob{\left|h(U)\right|< \gamma}.
\end{equation*}
Moreover, we know that $z$ is chosen such that $\prob{\left|\last{b^R(U,T-1)}\right|<z} < \eta$ and the ramp loss is at most 1. Taking this and Equations~\ref{eq:prob_ramp_loss} and \ref{eq:sign_the_same} into account we can write that
\begin{equation*}
\begin{aligned}
         &\expects{(U,y)\sim \cD}{l_{\gamma}\left(h,U,y\right)}
         = \expects{(U,y)\sim \cD}{\indicator{\sign{h(U)} \neq y} \middle| \,\left|h(U)\right|\geq \gamma }. \prob{\left|h(U)\right|\geq \gamma  }\\
         & + \expects{(U,y)\sim \cD}{r_{\gamma}\left(-h(U).y\right) \middle | \left|h(U)\right| < \gamma}. \prob{\left|\last{b^R(U,T-1)}\right|<z}\\
         & \leq  \expects{(U,y)\sim \cD}{\indicator{\sign{h(U)} \neq y} \middle| \,\left|h(U)\right|\geq \gamma }. \prob{\left|h(U)\right|\geq \gamma  } +\eta\\
         & \leq  \expects{(U,y)\sim \cD}{\indicator{\sign{h(U)} \neq y} \middle| \,\left|h(U)\right|\geq \gamma }. \prob{\left|h(U)\right|\geq \gamma  }\\ 
         & +  \expects{(U,y)\sim \cD}{\indicator{\sign{h(U)} \neq y} \middle| \,\left|h(U)\right|< \gamma }. \prob{\left|h(U)\right|< \gamma  } +\eta\\
         & \leq  \expects{(U,y)\sim \cD}{\indicator{\sign{h(U)} \neq y} } +\eta\\
         & \leq \expects{(U,y)\sim \cD}{l^{0-1}\left(f,U,y\right)}+ \eta.
\end{aligned} 
\end{equation*}

\end{proof}

{\bf Proof of Theorem~\ref{thm:lower_rnn}.}\begin{proof}
    Define $\cF_w = \{f:[-1/2,1/2]\to \{-1,1\}\mid f(U) = \sign{h(U)},h\in\cH_w\}$ as the class of all sigmoid recurrent networks with $w$ weights that output binary values. Let $\cD$ be a distribution over $[-1/2,1/2]^{p \times T} \times \{-1,1\}$. From Lemma~\ref{lemma:ramp_to_zero_one} we know that for every $f\in\cF_w$ there exists a function $h\in\cH_w$ such that $\expects{(U,y)\sim \cD}{l_{\gamma}\left(h,U,y\right)} \leq \expects{(U,y)\sim \cD}{l^{0-1}\left(f,U,y\right)} + \eta$, where $\eta>0$ is any small value. Therefore, we can write that
\begin{equation}\label{eq:inf_ramp_zero_one}
    \inf_{h\in\cH_w}\expects{(U,y)\sim \cD}{l_{\gamma}\left(h,U,y\right)} \leq \inf_{f\in\cF_w}\expects{(U,y)\sim \cD}{l^{0-1}\left(f,U,y\right)} + \eta.
\end{equation}
Let $m_{\cH_w}(\epsilon,\delta)$ denote the sample complexity of PAC learning $\cH_w$ with respect to ramp loss. Therefore, there exists an algorithm $\cA$ that receives a set $S$ of $m\geq m_{\cH_w}(\epsilon,\delta)$ i.i.d. samples from $\cD$ and returns $\hat{h}=\cA(S)$ such that with probability at least $1-\delta$ we have
\begin{equation*}
   \expects{(U,y)\sim \cD}{l^{\gamma}\left(\hat{h},U,y\right)} \leq \inf_{h \in \cH_w}\expects{(U,y)\sim \cD}{l^{\gamma}\left(h,U,y\right)} + \epsilon.
\end{equation*}
Let $\hat{f}$ be a function in $\cF_w$ such that $\hat{f}(U) = \sign{\hat{h}(U)}$ for every $U\in[-1/2,1/2]^{p\times T}$. Given the definitions of $0-1$ loss and ramp loss, it is easy to verify that $  \expects{(U,y)\sim \cD}l^{0-1}(\hat{f},U,y) \leq \expects{(U,y)\sim \cD}l^{\gamma}(\hat{h},U,y)$. Taking this and Equation~\ref{eq:inf_ramp_zero_one} into account, we can define a new algorithm $\cA'$ that, given the set $S$, returns $\hat{f}\in \cF_w$ such that with probability at least $1-\delta$ we have
\begin{equation*}
   \expects{(U,y)\sim \cD}l^{0-1}(\hat{f},U,y) \leq \inf_{h \in \cH_w}\expects{(U,y)\sim \cD}l^{\gamma}(h,U,y)+ \epsilon \leq \inf_{f\in\cF_w}\expects{(U,y)\sim \cD}l^{0-1}(f,U,y) + \epsilon +\eta.
\end{equation*}
This means that we have
\begin{equation}\label{eq:sample_complexities}
    m_{\cF_w}^{0-1}(\epsilon+\eta,\delta) \leq m_{\cH_w}(\epsilon,\delta).
\end{equation}
On the other hand, from Theorem~\ref{thm:lower_vc} we now that the VC-dimension of $\cF_w$ is $\Omega(wT)$. Moreover, Theorem~\ref{thm:fundamental} suggests that
\begin{equation*}
    m_{\cF_w}^{0-1}(\epsilon,\delta) = \Omega\left(\frac{wT+\log(1/\delta)}{\epsilon^2}\right).
\end{equation*}
Taking the above equation and Equation~\ref{eq:sample_complexities} into account, by setting $\eta=O(\epsilon)$, we can write that
\begin{equation*}
    m_{\cH_w}(\epsilon,\delta) =  \Omega\left(\frac{wT+\log(1/\delta)}{\epsilon^2}\right),
\end{equation*}
which concludes our result.
\end{proof} 

The following theorem states that we can find a lower bound on the sample complexity of PAC learning $\cF$ with respect to $0-1$ loss based on its VC-dimension. For a proof see Theorems 5.2, and 5.10 in \citet{anthony1999neural}.
\begin{theorem}[Lower Bound on the Sample Complexity of PAC Learning \citep{anthony1999neural}]\label{thm:fundamental}
Let $\cF$ be a class of functions from a domain $\cX$ to $\{-1,1\}$ and let $d = \mbox{VC}(\cF)$ be the VC-dimension of the class $\cF$. Assume $d<\infty$. Then there exists an absolute constant $C$ such that for every $(\epsilon,\delta)\in(0,1/40)$ we have
\begin{equation*}
    m_{\cF}^{0-1}(\epsilon,\delta) \geq C\frac{d+\log(1/\delta)}{\epsilon^2}.
\end{equation*}
\end{theorem}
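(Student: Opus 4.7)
The final statement is the classical agnostic PAC sample-complexity lower bound of Anthony and Bartlett, so the plan is to reconstruct its two canonical pieces, an $\Omega(d/\epsilon^2)$ bound from the shattered set and an $\Omega(\log(1/\delta)/\epsilon^2)$ bound from binary hypothesis testing, and then combine them using $\max(a,b) \geq (a+b)/2$ (absorbing the factor of $1/2$ into the constant $C$). Throughout I would fix a set $\{x_1, \ldots, x_d\} \subseteq \cX$ shattered by $\cF$, so every labeling $\tau \in \{-1,1\}^d$ is realized by some $f_\tau \in \cF$.

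For the $\Omega(d/\epsilon^2)$ piece, I would construct an adversarial family $\{\cD_\tau\}_{\tau \in \{-1,1\}^d}$ by placing mass $1-\beta$ with $\beta = \Theta(\epsilon)$ on a ``buffer'' point whose label is deterministic and on which all $f_\tau$ agree (so that it contributes uniformly to the loss and cancels out of excess risk), and spreading the remaining $\beta$ uniformly over $\{x_1, \ldots, x_d\}$ with conditional label bias $\Pr[y = \tau_i \mid x_i] = 1/2 + \Theta(\epsilon)$. Under $\cD_\tau$, the Bayes-optimal hypothesis is $f_\tau$ and the excess risk of any learner is proportional to $(\epsilon/d)$ times the number of shattered indices on which its output disagrees with $\tau$. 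By Yao's minimax principle, I switch from worst-case $\tau$ to uniformly random $\tau$. A standard LeCam two-point argument between Bernoulli$(1/2 \pm \Theta(\epsilon))$ then shows that any index $x_i$ observed fewer than $c/\epsilon^2$ times during training has, in expectation over $\tau$, disagreement probability at least $1/2 - O(\epsilon)$. A counting argument forces at least $\Omega(d)$ indices to be under-sampled when $m \ll d/\epsilon^2$, which makes the expected excess risk exceed $\epsilon$ and contradicts the PAC guarantee.

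For the $\Omega(\log(1/\delta)/\epsilon^2)$ piece, I would collapse to a single shattered point $x_1$ and two distributions $\cD_+, \cD_-$ that differ only by a $\pm \Theta(\epsilon)$ shift in the label bias at $x_1$; both are realized by hypotheses $f_\tau, f_{\tau'} \in \cF$ whose only disagreement is at $x_1$. Any $(\epsilon,\delta)$-PAC learner is required to identify the majority label at $x_1$ with probability at least $1-\delta$ under each of $\cD_+$ and $\cD_-$, which induces a hypothesis test distinguishing $\cD_+^{\otimes m}$ from $\cD_-^{\otimes m}$ with total failure probability at most $2\delta$. Tensorizing $\mathrm{KL}(\cD_+ \Vert \cD_-) = \Theta(\epsilon^2)$ and invoking the Bretagnolle-Huber inequality (equivalently, Neyman-Pearson applied to the product likelihood ratio) then delivers $m = \Omega(\log(1/\delta)/\epsilon^2)$.

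The main obstacle is purely constant bookkeeping: the regime $(\epsilon,\delta) \in (0, 1/40)$ is exactly what guarantees that the label bias $\Theta(\epsilon)$ stays strictly inside $(0, 1/2)$, the buffer mass $1 - \beta$ is a valid probability, the Yao-style expected-risk step converts into a high-probability bound (via Markov on the excess risk), and the two-point testing inequality yields a nonvacuous lower bound for the prescribed $\delta$. All of this calibration is performed explicitly in Theorems~5.2 and~5.10 of \citet{anthony1999neural}, whose combination gives the stated inequality with an absolute constant $C$.
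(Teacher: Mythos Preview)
Your proposal is correct and matches the paper's treatment: the paper does not prove this theorem at all but simply refers the reader to Theorems~5.2 and~5.10 of \citet{anthony1999neural}, exactly the sources you invoke and whose arguments you sketch. Your reconstruction of the two pieces (the $\Omega(d/\epsilon^2)$ bound via a randomized labeling on a shattered set and the $\Omega(\log(1/\delta)/\epsilon^2)$ bound via a two-point test) is the standard one found there, so nothing further is needed.
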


We now introduce a lower bound on the VC-dimension of sigmoid recurrent neural networks with binary outputs which is based on a result due to \citet{koiran1998vapnik}.
\begin{theorem}[A Lower Bound on VC-Dimension of Sigmoid Recurrent Neural Networks]\label{thm:lower_vc}
For every $T\geq 3$ and $w\geq 19$ there exists a well-defined class $\cH_w = \brmodel{\mnet{p_0}{p_k}{w}}{T}$ with the following property: The VC-dimension of $\cF_w = \{f:[-1/2,1/2]^{p\times T}\to \{-1,1
\}\mid f(U) = \sign{h(U)},h\in\cH_w\}$ is $\Omega\left(wT\right)$.
\end{theorem}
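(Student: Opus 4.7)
The plan is to lift the classical $\Omega(wT)$ VC-dimension lower bound for threshold-activation recurrent networks (the result underlying~\citet{koiran1998vapnik}) to sigmoid recurrent networks composed with a $\sign$ output at the end. First, I would invoke the Koiran--Sontag construction to obtain a choice of $(p_0,\ldots,p_k)$ making $\mnet{p_0}{p_k}{w}$ well-defined (the hypothesis $w\geq 19$ is used here, possibly after padding the architecture with a few inert neurons), together with a family of weight assignments such that, when the sigmoid activations are replaced by the threshold $\sign(\cdot)$, the resulting recurrent class shatters a set $S\subseteq[-1/2,1/2]^{p\times T}$ of size $\Omega(wT)$. The construction is robust in the sense that every pre-activation of every neuron at every time step, evaluated on inputs in $S$, is bounded away from zero by some universal margin $\gamma>0$, and all threshold hidden states take values in $\{-1/2,1/2\}$.

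Next, I would simulate each threshold network by a sigmoid network with the same architecture and scaled weights $M\theta$. Because $\phi(Mz)\to \frac{1}{2}\sign(z)$ exponentially fast in $M|z|$, any pre-activation with margin at least $\gamma$ yields a sigmoid output within $e^{-M\gamma}$ of its threshold counterpart. The key technical step is then to control how this single-step approximation error compounds through the $k$ layers within one recurrent step and through the $T$ recurrent steps. Letting $\eta_t$ denote the $\ell_\infty$ distance between the sigmoid and threshold hidden states at time $t$, a Lipschitz argument (using $\phi$ has derivative at most $1/4$) produces a recursion of the form $\eta_t\leq L\eta_{t-1}+e^{-M\gamma/2}$, where $L$ depends polynomially on $M$ and on the fixed weights. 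Iterating gives $\eta_T\leq (2L)^T e^{-M\gamma/2}$, and because the tail is exponential in $M$ while the amplification is only geometric in $T$, it suffices to choose $M$ polynomial (in fact $\Theta(T\log(\|\theta\|T)/\gamma)$) in $T$ and $w$. With this scaling, the final sigmoid pre-activation differs from the threshold pre-activation by less than $\gamma/2$ on every input in $S$, hence has the same sign. Consequently every dichotomy of $S$ realized by the threshold class is realized by $\cF_w$, so $\mathrm{VCdim}(\cF_w)\geq |S|=\Omega(wT)$.

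The main obstacle is precisely this error-propagation analysis through the recurrent composition: a naive bound could require $M$ exponential in $T$, which would still work to show a lower bound but one has to verify it carefully. What rescues us is that the sigmoid tail is \emph{exponential} in $M\gamma$, whereas the Lipschitz amplification through $T$ applications grows only as a power of $M$ (i.e., as $M^T$), so choosing $M$ polynomial in $T$ is enough. Everything else is routine: the reduction from a shattering of $S$ by the threshold class to a shattering of $S$ by $\cF_w$, the verification of well-definedness of $\mnet{p_0}{p_k}{w}$, and the observation that all coordinates involved remain in $[-1/2,1/2]$ so the domain constraints in the definition of $\brmodel{\cdot}{T}$ are respected.
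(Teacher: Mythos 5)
Your proposal has a genuine gap at its very first step. You plan to invoke a Koiran--Sontag construction that, with threshold activations replacing the sigmoids, shatters a set of size $\Omega(wT)$, and then recover the sigmoid result by scaling weights and carefully controlling error propagation. But no such threshold construction exists: when the recurring block uses threshold (sign) activations, the hidden state lives in the finite set $\{-1/2,1/2\}^{h}$, the recurrent dynamics are those of a finite-state automaton, and the resulting VC dimension is only $\Theta(w\log T)$ --- this is in fact one of the results of \citet{koiran1998vapnik} themselves, alongside the $\Omega(wT)$ lower bound for the sigmoid case. The $\Omega(wT)$ sigmoid lower bound relies essentially on the \emph{continuum} of sigmoid hidden states (a real-valued hidden coordinate carrying on the order of $T$ bits of information that are extracted one per time step), and this mechanism collapses once the hidden states are quantized to two values. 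Consequently your error-propagation argument, while technically sound as a ``large-weight'' simulation of thresholds by sigmoids (the recursion $\eta_t\leq L\eta_{t-1}+e^{-M\gamma/2}$ with $L$ polynomial in $M$ and $M=\Theta(T)$ is correct), can lift at best the $\Omega(w\log T)$ threshold bound to sigmoids, which is exponentially weaker in $T$ than what Theorem~\ref{thm:lower_vc} asserts. The simulation goes the wrong way: approximating a threshold network by a steep sigmoid cannot add expressive power that the threshold network lacked.

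The paper's own proof sidesteps this entirely by invoking the Koiran--Sontag \emph{sigmoid} construction directly. The only nontrivial adaptation needed is architectural: in Definition~\ref{def:rec_application} only the first $q-1$ output coordinates of $f^R(U,t-1)$ are fed back, while the last coordinate is reserved for the scalar output $\last{f^R(U,T-1)}$. The Koiran--Sontag network must therefore be modified so that the quantity eventually output is also duplicated into the fed-back part of the state vector (``the last two dimensions of the output must be similar''), which costs only a constant factor in the number of weights and does not change the $\Omega(wT)$ order. Your proposal would need to be rewritten to use that continuous-state construction from the start, at which point the weight-scaling and error-propagation machinery becomes unnecessary.
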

The proof of the above theorem is essentially the same as the proof of the result in \citet{koiran1998vapnik}. The only difference is that the we should construct our network in a way that the last two dimensions of the output of $\mnet{p_0}{p_k}{w}$ must be similar to each other in order to feed back the value of $\last{b^{R}(f,t-1)}$ with an extra node. Therefore, we only need a network that has a constant factor more weights than the network that is proposed in \citet{koiran1998vapnik} which does not change the order of sample complexity.
\subsection{Lemmas used in the proof of Lemma~\ref{lemma:ramp_to_zero_one}}
In the following, we state two lemmas that will help in proving Lemma~\ref{lemma:ramp_to_zero_one}. 
\begin{lemma}\label{lemma:first_similar}
Let $W_{k-1} = \begin{bmatrix} v_1 \ldots v_{p_k}\end{bmatrix}\in\bR^{p_{k-1}\times p_k}$ and $\hat{W}_{k-1} = \begin{bmatrix} v_1 & \ldots & c.v_{p_k}\end{bmatrix}\transpose$ for a constant $c>0$. Define two single-layer networks $b_{k-1}(x) = \Phi\left(W_{k-1}\transpose x\right)$ and $\hat{b}_{k-1}(x) = \Phi\left(\hat{W}_{k-1}\transpose x\right)$. Then, for any two multi-layer networks $b = b_{k-1}\circ\ldots \circ b_0$ and $\hat{b} = \hat{b}_{k-1}\circ\ldots \circ b_0$ in a well-defined class $\mnet{p_0}{p_k}{w}$, every $U\in[-1/2,1/2]^{p\times T}$, and every $t\in[T-1]$ we have
\begin{equation*}
 \first{b^R\left( U,t \right)} = \first{\hat{b}^R\left( U,t\right)}.
\end{equation*}
\end{lemma}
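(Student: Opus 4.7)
The plan is to do a short induction on $t \in \{0, 1, \ldots, T-1\}$, exploiting a single structural observation: since $\hat{W}_{k-1}$ differs from $W_{k-1}$ only in its last column (which is $c \cdot v_{p_k}$ rather than $v_{p_k}$) and $\Phi$ is applied coordinate-wise, the first $p_k - 1$ output coordinates of $\hat{b}_{k-1}$ coincide with those of $b_{k-1}$. That is, for every $y \in \bR^{p_{k-1}}$ we have $\first{b_{k-1}(y)} = \first{\hat{b}_{k-1}(y)}$. Because the preceding layers $b_0, \ldots, b_{k-2}$ are shared between $b$ and $\hat{b}$, this upgrades immediately to $\first{b(x)} = \first{\hat{b}(x)}$ for every $x \in \bR^{s}$.

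With this observation, the base case $t = 0$ is direct: both $b^R(U,0)$ and $\hat{b}^R(U,0)$ arise from applying $b$ and $\hat{b}$ respectively to the same input $\begin{bmatrix}\zero{q-1} & u^{(0)}\end{bmatrix}\transpose$, so the observation gives $\first{b^R(U,0)} = \first{\hat{b}^R(U,0)}$. For the inductive step, suppose $\first{b^R(U,t-1)} = \first{\hat{b}^R(U,t-1)}$. Then, by Definition~\ref{def:rec_application}, at time $t$ the networks $b$ and $\hat{b}$ receive the inputs $\begin{bmatrix}\first{b^R(U,t-1)} & u^{(t)}\end{bmatrix}\transpose$ and $\begin{bmatrix}\first{\hat{b}^R(U,t-1)} & u^{(t)}\end{bmatrix}\transpose$, which are equal by the inductive hypothesis. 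Applying the observation above to this common input yields $\first{b^R(U,t)} = \first{\hat{b}^R(U,t)}$ and closes the induction.

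There is no serious obstacle in this argument; the whole thing is bookkeeping on which output coordinates depend on which weights, combined with the fact that the recurrent loop only reads back the first $q-1$ coordinates (via $\first{\cdot}$) and therefore never exposes the one coordinate where $b$ and $\hat{b}$ can disagree. The only mildly delicate point is keeping the notation for the concatenation with $u^{(t)}$ straight, which is handled by Definition~\ref{def:rec_application} directly.
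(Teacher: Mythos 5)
Your proof is correct and takes essentially the same approach as the paper's: an induction on $t$ exploiting the fact that $b$ and $\hat{b}$ differ only in the weights producing the last output coordinate, which is discarded by $\first{\cdot}$ and hence never enters the recurrent feedback. The paper re-derives this observation explicitly at both the base case and the inductive step, whereas you factor it out once up front as $\first{b(x)} = \first{\hat{b}(x)}$; this is a purely stylistic streamlining, not a different argument.
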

\begin{proof}
We prove by induction. Denote $r = b_{k-2}\circ\ldots\circ b_o$. Therefore, we have $b = b_{k-1} \circ r$ and $\hat{b} = \hat{b}_{k-1}\circ r$. For $t=0$, we can denote $x^{(0)} = r\left(\begin{bmatrix}
        \zero{q-1} &
        u^{(0)}
    \end{bmatrix}\transpose\right)$ and write that 
\begin{equation*}
\begin{aligned}
    &    \first{b^R\left(U,0\right)} = \first{b_{k-1}\left(r\left(\begin{bmatrix}
        \zero{q-1}\\
        u^{(0)}
    \end{bmatrix}\right)\right)} = \first{b_{k-1}\left(x^{(0)}\right)} \\ 
    &= \first{\begin{bmatrix}
        \phi\left(\langle v_1,x^{(0)} \rangle\right) \\
        \vdots \\
        \phi\left(\langle v_{p_{k}-1},x^{(0)} \rangle\right)
    \end{bmatrix}}  = \first{\begin{bmatrix}
        \phi\left(\langle v_1,x^{(0)} \rangle\right) \\
        \vdots \\
        \phi\left(\langle c.v_{p_{k}-1},x^{(0)} \rangle\right)
    \end{bmatrix}}  \\
    &=\first{\hat{b}_{k-1}\left(x^{(0)}\right)} =   \first{\hat{b}^R\left(U,0\right)},
\end{aligned}
\end{equation*}
where $\langle v_i,x^{(t)}\rangle$ denotes the inner product between vectors $v_i$ and $x^{(t)}$. Assume that we have $\first{b^R\left(U,t-1\right)} = \first{\hat{b}^R\left(U,t-1\right)}$ for $t-1\in[T-2]$. We now prove that we also have $\first{b^R\left(U,t\right)} = \first{\hat{b}^R\left(U,t\right)}$. Denote $x^{(t)} = r\left( \begin{bmatrix}
    \first{b^R\left(U,t-1\right)} & u^{(t)}
\end{bmatrix}\transpose\right)$. We can then write that 
\begin{equation*}
\begin{aligned}
    &    \first{b^R\left(U,t\right)} = \first{b_{k-1}\circ r\left(\begin{bmatrix}
        \first{b^R\left(U,t-1\right)} \\
        u^{(t)}
    \end{bmatrix}\right)} = \first{b_{k-1}\left(x^{(t)}\right)} \\
    &= \first{\begin{bmatrix}
        \phi\left(\langle v_1,x^{(t)} \rangle \right)\\
        \vdots \\
        \phi\left(\langle v_{p_{k}-1},x^{(t)} \rangle\right)
    \end{bmatrix}} = \first{\begin{bmatrix}
        \phi\left(\langle v_1,x^{(t)} \rangle \right)\\
        \vdots \\
        \phi\left(\langle c.v_{p_{k}-1},x^{(t)} \rangle\right)
    \end{bmatrix}} \\
   & = \first{\hat{b}_{k-1}\left(x^{(t)}\right)} =   \first{\hat{b}^R\left(U,t\right)}.
\end{aligned}
\end{equation*} 
\end{proof}

\begin{lemma}\label{lemma:last_similar}
Let $W_{k-1} = \begin{bmatrix} v_1 \ldots v_{p_k}\end{bmatrix}\in\bR^{p_{k-1}\times p_k}$ and $\hat{W}_{k-1} = \begin{bmatrix} v_1 & \ldots & c.v_{p_k}\end{bmatrix}\transpose$ for a constant $c>0$. Define two single-layer networks $b_{k-1}(x) = \Phi\left(W_{k-1}\transpose x\right)$ and $\hat{b}_{k-1}(x) = \Phi\left(\hat{W}_{k-1}\transpose x\right)$. Let $\cD$ be a distribution over $[-1/2,1/2]^{p\times T}$. Then, for any two multi-layer networks $b = b_{k-1}\circ\ldots \circ b_0$ and $\hat{b} = \hat{b}_{k-1}\circ\ldots \circ b_0$ in a well-defined class $\mnet{p_0}{p_k}{w}$ we have
\begin{equation*}
    \prob{\left|\last{b^R\left( U,T-1 \right)}\right| < z}   =\prob{\left|\last{\hat{b}^R\left( U,T-1 \right)}\right| < \phi\left(c.\phi^{-1}\left(z\right)\right)}, 
\end{equation*}
where $\phi^{-1}(z)$ is the inverse of sigmoid function $\phi$ at $z$. 
\end{lemma}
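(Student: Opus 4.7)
The plan is to reduce the equality of probabilities to an equality of events, exploiting Lemma~\ref{lemma:first_similar} together with elementary properties of the centered sigmoid $\phi$.

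First I would observe that since $b$ and $\hat{b}$ differ only in their top layers, and only in the last row of the weight matrix of that top layer, Lemma~\ref{lemma:first_similar} gives $\first{b^R(U,t)} = \first{\hat{b}^R(U,t)}$ for every $U \in [-1/2,1/2]^{p\times T}$ and every $t \in [T-1]$. In particular, at $t = T-2$, the quantity $\first{b^R(U,T-2)}$ that is concatenated with $u^{(T-1)}$ to form the input at step $T-1$ is identical for both networks. Writing $r = b_{k-2}\circ \cdots \circ b_0$, the vector $x^{(T-1)} = r\bigl(\begin{bmatrix}\first{b^R(U,T-2)} & u^{(T-1)}\end{bmatrix}\transpose\bigr)$ is therefore the same for $b$ and $\hat{b}$, so
\begin{equation*}
\last{b^R(U,T-1)} = \phi\!\left(\langle v_{p_k}, x^{(T-1)}\rangle\right), \qquad \last{\hat{b}^R(U,T-1)} = \phi\!\left(c\,\langle v_{p_k}, x^{(T-1)}\rangle\right).
\end{equation*}

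Next I would use the fact that the centered sigmoid $\phi(x) = (1+e^{-x})^{-1} - 1/2$ is odd and strictly increasing, so $|\phi(y)| = \phi(|y|)$ and $\phi$ has a well-defined, strictly increasing inverse on $(-1/2,1/2)$. Applying this with the scalar $y = \langle v_{p_k}, x^{(T-1)}\rangle$, the event $\{|\last{b^R(U,T-1)}| < z\}$ coincides with $\{|\langle v_{p_k}, x^{(T-1)}\rangle| < \phi^{-1}(z)\}$; and since $c > 0$, the event $\{|\last{\hat{b}^R(U,T-1)}| < \phi(c\,\phi^{-1}(z))\}$ coincides with $\{|c\,\langle v_{p_k}, x^{(T-1)}\rangle| < c\,\phi^{-1}(z)\}$, i.e.\ with $\{|\langle v_{p_k}, x^{(T-1)}\rangle| < \phi^{-1}(z)\}$. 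These are literally the same event (as a subset of the sample space), so their probabilities under $\cD$ coincide, which is the stated identity.

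There is no real obstacle here beyond making sure the scope of quantifiers is handled cleanly: $U$ is the random object drawn from $\cD$, and the pointwise equality of the two events depends on applying Lemma~\ref{lemma:first_similar} for every realization of $U$, which that lemma already gives. The argument does rely on $z \in (0,1/2)$ (so that $\phi^{-1}(z) > 0$ is defined and the absolute-value step goes through); this is consistent with how $z$ is used in the proof of Lemma~\ref{lemma:ramp_to_zero_one}, since $z$ is chosen as a positive number strictly less than $1/2$.
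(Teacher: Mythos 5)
Your proof is correct and follows essentially the same route as the paper's: invoke Lemma~\ref{lemma:first_similar} at $t=T-2$ to see that the input to the final layer, $x^{(T-1)}$, is common to $b$ and $\hat b$, and then use that $\phi$ is odd and strictly increasing to show the two events are literally identical. The paper spells this out as a chain of equalities between probabilities (using $\phi(x)=-\phi(-x)$ explicitly) while you phrase it via $|\phi(y)|=\phi(|y|)$, but this is only a cosmetic difference.
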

\begin{proof}
    Denote $r = b_{k-2}\circ\ldots\circ b_o$ and $x^{(T-1)} = r\left( \begin{bmatrix}
    \first{b^R\left(U,T-2\right)} & u^{(T-1)}
\end{bmatrix}\transpose\right)$. Note that 
\begin{equation*}
\begin{aligned}
     & \last{b^R\left( U,T-1 \right)} = \last{b_{k-1}\circ r\left(\begin{bmatrix}
        \first{b^R\left(U,T-2\right)} \\
        u^{(T-1)}
    \end{bmatrix}\right)} \\
    &= \last{b_{k-1}\left(x^{(T-1)}\right)} = 
        \phi\left(\langle v_{p_k},x^{(T-1)} \rangle \right),
\end{aligned}
\end{equation*}
where $\langle v_{p_k},x^{(T-1)} \rangle$ denotes the inner product between $v_{p_k}$ and $x^{(T-1)}$.
From Lemma~\ref{lemma:first_similar}, we know that  $\first{b^R\left(U,T-2\right)} = \first{\hat{b}^R\left(U,T-2\right)}$. Therefore, we also have that
\begin{equation*}
\begin{aligned}
     & \last{\hat{b}^R\left( U,T-1 \right)} = \last{\hat{b}_{k-1}\circ r\left(\begin{bmatrix}
        \first{\hat{b}^R\left(U,T-2\right)} \\
        u^{(T-1)}
    \end{bmatrix}\right)} \\
    &= \last{\hat{b}_{k-1}\left(x^{(T-1)}\right)} = 
        \phi\left(\langle c.v_{p_k},x^{(T-1)} \rangle \right).
\end{aligned}
\end{equation*}
Considering the above equations and the facts that $\phi(x)$ is an invertible and strictly increasing function and that $\phi(x) = -\phi(-x)$, we can write
\begin{equation*}
    \begin{aligned}\label{eq:similar_probs}
       & \prob{\left|\last{b^R\left( U,T-1 \right)}\right| < z} = \prob{-z < \last{b^R\left( U,T-1 \right)} < z}\\
      & = \prob{-z \leq \phi\left(\langle v_{p_k},x^{(T-1)} \rangle \right) < z}  = \prob{\phi^{-1}(-z) < \langle v_{p_k},x^{(T-1)} \rangle < \phi^{-1}\left(z\right)} \\
      & = \prob{ -c.\phi^{-1}(z)\leq \langle c.v_{p_k},x^{(T-1)} \rangle < c.\phi^{-1}\left(z\right)}  \\
      & =\prob{ \phi\left(-c.\phi^{-1}\left(z\right)\right) < \phi\left(\langle c.v_{p_k},x^{(T-1)} \rangle\right) < \phi\left(c.\phi^{-1}\left(z\right)\right)}\\
        & =\prob{ -\phi\left(c.\phi^{-1}\left(z\right)\right) < \phi\left(\langle c.v_{p_k},x^{(T-1)} \rangle\right) < \phi\left(c.\phi^{-1}\left(z\right)\right)}\\
        &=\prob{\left|\last{\hat{b}^R\left( U,T-1 \right)}\right|< \phi\left(c.\phi^{-1}\left(z\right)\right)}.
    \end{aligned}
\end{equation*}
\end{proof}

\section{Proof of upper bound}\label{app:upper_bound}

\subsection{{Proof of Theorem~\ref{thm:cover_recurrent_model}}}
We prove the following general theorem which holds for input domains $\rv{\cX_s}$ and $\Delta_{p\times T}$.

\begin{theorem}[TV Covering Number of $\rv{\cG_{\sigma}}\circ\brmodel{\rv{\cF_{\sigma}}}{T}$ From $\rv{\cG_{\sigma}}\circ\rv{\cF_\sigma}$]
Let $s,p,q\in \bN$ such that $s=p+q-1$.
Let $\rv{\cF}$ be a class of functions from $\rv{\cX_{s}}$ to $\rv{\cX_{q}}$ and denote by $\rv{\cF_{\sigma}} = \rv{\cF}\circ\rv{\cG_{\sigma,s}}$ the class of its composition with noise. Then we have

\begin{equation*}
N_U\left(\epsilon,\rv{\cG_\sigma}\circ\brmodel{\rv{\cF_{\sigma}}}{T},\infty,d_{TV}^{\infty},\rv{\Delta_{p\times T}}\right) \leq N_U\left(\epsilon/T,\rv{\cG_{\sigma,q}}\circ\rv{\cF_{\sigma}},\infty,d_{TV}^{\infty},\rv{\cX_{s}}\right).
\end{equation*}
\end{theorem}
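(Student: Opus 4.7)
The plan is to construct an explicit $\epsilon$-cover of $\rv{\cG_\sigma}\circ\brmodel{\rv{\cF_\sigma}}{T}$ from any $(\epsilon/T)$-cover of $\rv{\cG_{\sigma,q}}\circ\rv{\cF_\sigma}$, and to verify the cover property by induction on the recurrence depth using the data processing inequality for TV distance (Lemma~\ref{lemma:DPI}). The central structural observation I will exploit is that each $\rv{f_\sigma}\in\rv{\cF_\sigma}$ factors as $\rv{f}\circ\rv{g_\sigma}$, so $\rv{f_\sigma}$ begins by injecting fresh $\cN(0,\sigma^2 I_s)$ noise on its input; this ``input noise at the head'' of $\rv{f_\sigma}$ is exactly what will let me absorb a noisy version of the previous step's output into the current step's input without changing its distribution.

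Concretely, I will fix a minimum $(\epsilon/T)$-cover $\{\rv{g_\sigma}\circ\rv{\hat{f}_\sigma}:\rv{\hat{f}}\in\hat{\cF}\}$ of $\rv{\cG_{\sigma,q}}\circ\rv{\cF_\sigma}$ with respect to $d_{TV}^{\infty}$ on $\rv{\cX_s}$, and declare the candidate cover of the recurrent class to be $\{U\mapsto \rv{g_\sigma}(\last{\rv{\hat{f}_\sigma}^R(U,T-1)}):\rv{\hat{f}}\in\hat{\cF}\}\subseteq \rv{\cG_\sigma}\circ\brmodel{\rv{\cF_\sigma}}{T}$; its cardinality equals that of the original cover. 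For any $\rv{f_\sigma}\in\rv{\cF_\sigma}$, let $\rv{\hat{f}_\sigma}$ be its chosen companion and set $\rv{w}_t=\rv{f_\sigma}^R(U,t)$ and $\rv{\hat{w}}_t=\rv{\hat{f}_\sigma}^R(U,t)$. Since $\last{\rv{g_\sigma}(\rv{a})}\stackrel{d}{=}\rv{g_\sigma}(\last{\rv{a}})$, Lemma~\ref{lemma:tv_to_tv_first} reduces the target to showing $d_{TV}(\rv{g_\sigma}(\rv{w}_{T-1}),\rv{g_\sigma}(\rv{\hat{w}}_{T-1}))\leq \epsilon$, which I will establish as the stronger claim $d_{TV}(\rv{g_\sigma}(\rv{w}_t),\rv{g_\sigma}(\rv{\hat{w}}_t))\leq (t+1)\epsilon/T$ for $t=0,\ldots,T-1$, proved by induction on $t$.

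The base case is immediate because $\rv{x}_0=\begin{bmatrix}\zero{q-1}\\u^{(0)}\end{bmatrix}\in\rv{\cX_s}$ and the cover at $\rv{x}_0$ provides exactly $\epsilon/T$. For the inductive step, set $\rv{x}_t=\begin{bmatrix}\first{\rv{w}_{t-1}}\\u^{(t)}\end{bmatrix}$ and $\rv{\hat{x}}_t=\begin{bmatrix}\first{\rv{\hat{w}}_{t-1}}\\u^{(t)}\end{bmatrix}$, and split via the triangle inequality:
\begin{equation*}
d_{TV}(\rv{g_\sigma}(\rv{w}_t),\rv{g_\sigma}(\rv{\hat{w}}_t))\leq d_{TV}\bigl((\rv{g_\sigma}\circ\rv{f_\sigma})(\rv{x}_t),(\rv{g_\sigma}\circ\rv{f_\sigma})(\rv{\hat{x}}_t)\bigr)+d_{TV}\bigl((\rv{g_\sigma}\circ\rv{f_\sigma})(\rv{\hat{x}}_t),(\rv{g_\sigma}\circ\rv{\hat{f}_\sigma})(\rv{\hat{x}}_t)\bigr).
\end{equation*}
The second term is at most $\epsilon/T$ by the cover property at $\rv{\hat{x}}_t\in\rv{\cX_s}$. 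For the first term, I will factor $\rv{g_\sigma}\circ\rv{f_\sigma}=(\rv{g_\sigma}\circ\rv{f})\circ\rv{g_\sigma}$ and apply Lemma~\ref{lemma:DPI} to the random map $\rv{g_\sigma}\circ\rv{f}$; this bounds it by $d_{TV}(\rv{g_\sigma}(\rv{x}_t),\rv{g_\sigma}(\rv{\hat{x}}_t))$.

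The hardest step will be handling this residual TV distance, which I will do via the distributional identity $\rv{g_\sigma}(\rv{x}_t)\stackrel{d}{=}\varphi(\rv{g_\sigma}(\rv{w}_{t-1}))$, where $\varphi$ is the random map $\varphi(\rv{a})=\begin{bmatrix}\first{\rv{a}}\\u^{(t)}+\rv{z}\end{bmatrix}$ with $\rv{z}\sim\cN(0,\sigma^2 I_p)$ independent of everything. Splitting the $\cN(0,\sigma^2 I_s)$ noise inside $\rv{g_\sigma}(\rv{x}_t)$ across its first $q-1$ and last $p$ coordinates rewrites the first $q-1$ entries as $\first{\rv{w}_{t-1}}+\cN(0,\sigma^2 I_{q-1})$, and the joint law of these with the last $p$ coordinates matches that of $\first{\rv{g_\sigma}(\rv{w}_{t-1})}$ paired with $u^{(t)}+\rv{z}$. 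Applying Lemma~\ref{lemma:DPI} to $\varphi$ and invoking the inductive hypothesis then gives $d_{TV}(\rv{g_\sigma}(\rv{x}_t),\rv{g_\sigma}(\rv{\hat{x}}_t))\leq d_{TV}(\rv{g_\sigma}(\rv{w}_{t-1}),\rv{g_\sigma}(\rv{\hat{w}}_{t-1}))\leq t\epsilon/T$, closing the induction at $(t+1)\epsilon/T$. This absorption step is what sidesteps the natural ``type mismatch'' in a naive induction: the hypothesis controls the distance between the \emph{noisy} outputs $\rv{g_\sigma}(\rv{w}_{t-1})$ and $\rv{g_\sigma}(\rv{\hat{w}}_{t-1})$, yet the next step's input feeds back the \emph{clean} $\first{\rv{w}_{t-1}}$; the input noise built into $\rv{f_\sigma}$ is precisely what reconciles the two.
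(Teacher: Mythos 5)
Your proposal is correct and follows essentially the same route as the paper's proof: fix a global $\epsilon/T$-cover of $\rv{\cG_{\sigma,q}}\circ\rv{\cF_\sigma}$ over $\rv{\cX_s}$, show by induction on $t$ that the noisy recurrent outputs of $\rv{f_\sigma}$ and its cover companion stay within $(t+1)\epsilon/T$ in TV, and close each inductive step by a triangle inequality that pairs the cover bound at the current input with a data-processing step that absorbs $\first{\rv{g_\sigma}(\rv{w}_{t-1})}$ concatenated with $u^{(t)}$ into the smoothed input $\rv{g_\sigma}(\rv{x}_t)$. Your single random map $\varphi$ (and the distributional identity $\rv{g_\sigma}(\rv{x}_t)\stackrel{d}{=}\varphi(\rv{g_\sigma}(\rv{w}_{t-1}))$) packages in one DPI application what the paper does via Lemma~\ref{lemma:tv_to_tv_first}, Lemma~\ref{lemma:tv_to_tv_concatenate}, and a further noise-injection DPI step, a modest streamlining of the same argument.
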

\begin{proof}
    Let $C=\{\rv{g_{\sigma,q}}\circ\hat{f}_{i}\circ \rv{g_{\sigma,s}}\mid \hat{f}_i\circ \rv{g_{\sigma,s}} \in \rv{\cF_{\sigma}}, i\in[r]\}$ be a global $\epsilon$-cover for $\rv{\cG_{\sigma,s}}\circ\rv{\cF_{\sigma}}$ with respect to domain $\rv{\cX_{s}}$ and $d_{TV}^{\infty}$. Therefore, $|C|\leq N_U\left(\epsilon,\rv{\cG_{\sigma,q}}\circ\rv{\cF_{\sigma}},\infty,d_{TV}^{\infty},\rv{\cX_{s}}\right)$. Then for any function $\rv{g_{\sigma,q}}\circ f\circ\rv{g_{\sigma,s}}\in\rv{\cG_{\sigma,q}}\circ\rv{\cF_{\sigma}}$ we know that there exists a function $\rv{g_{\sigma,q}}\circ\hat{f}_i\circ\rv{g_{\sigma,s}}$ in $C$ such that for every $\rv{x}\in\rv{\cX_{s}}$ we have $d_{TV}\left(\rv{g_{\sigma,q}}\circ f\circ\rv{g_{\sigma,s}}(\rv{x}),\rv{g_{\sigma,q}}\circ\hat{f}_i\circ\rv{g_{\sigma,s}}(\rv{x})\right)\leq \epsilon$.
   Denote $
   \rv{h} = f\circ\rv{g_{\sigma,s}}$ and $\rv{\hat{h}_i} = \hat{f}_i\circ\rv{g_{\sigma,s}}$. We prove by induction that for any input matrix $\rv{U} = \begin{bmatrix}
        \rv{u^{(0)}} &\ldots &\rv{u^{(T-1)}}
    \end{bmatrix}\in\rv{\Delta_{p\times T}}$, where $\rv{u^{(t)}} = \rv{\delta_{u^{(t)}}}$, we have $d_{TV}\left(\rv{g_{\sigma,q}}\circ \rv{h}^R\left(\rv{U},T-1\right), \rv{g_{\sigma,q}}\circ \rv{\hat{h}_i}^R\left(\rv{U},T-1\right)\right)\leq T\epsilon$. 
    
    We start by proving that $d_{TV}\left(\rv{g_{\sigma,q}}\circ \rv{h}^R\left(\rv{U},0\right),\rv{g_{\sigma,q}}\circ \rv{\hat{h}_i}^R\left(\rv{U},0\right)\right)\leq \epsilon$. Denote $\rv{x^{(0)}} = \begin{bmatrix}\rv{\delta_{\zero{q-1}}} & \rv{u^{(0)}}\end{bmatrix}\transpose\in\rv{\Delta_{s}}$. We can write that
    \begin{equation*}
        \begin{aligned}
            &d_{TV}\left(\rv{g_{\sigma,q}}\circ h^R\left(\rv{U},0\right),\rv{g_{\sigma,q}}\circ \hat{h}_i^R\left(\rv{U},0\right)\right) \\
            &= d_{TV}\left(\rv{g_{\sigma,q}}\circ f\circ\rv{g_{\sigma,s}}\left(\begin{bmatrix}\rv{\delta_{\zero{q-1}}} \\ \rv{u^{(0)}}\end{bmatrix}\right),\rv{g_{\sigma,q}}\circ \hat{f}_i\circ\rv{g_{\sigma,s}}\left(\begin{bmatrix}\rv{\delta_{\zero{q-1}}} \\ \rv{u^{(0)}}\end{bmatrix}\right)\right).
        \end{aligned}
    \end{equation*}
    Since $\left(\begin{bmatrix}\rv{\delta_{\zero{q-1}}} & \rv{u^{(0)}}\end{bmatrix}\transpose\right)\in \rv{\cX_{s}}$ and considering the fact that $\rv{g_{\sigma,q}}\circ f\circ\rv{g_{\sigma,s}}= \rv{g_{\sigma,q
    }}\circ h \in\rv{\cG_{\sigma,q}}\circ\rv{\cF_{\sigma}}$ and $\rv{g_{\sigma,q}}\circ \hat{f}_i\circ\rv{g_{\sigma,s}}= \rv{g_{\sigma,q
    }}\circ \rv{\hat{h}_i} \in\rv{\cG_{\sigma,q}}\circ\rv{\cF_{\sigma}}$ are globally $\epsilon$-close over $\rv{\cX_{s}}$, we get that
    \begin{equation*}
        d_{TV}\left(\rv{g_{\sigma,q}}\circ \rv{h}^R\left(\rv{U},0\right),\rv{g_{\sigma,q}}\circ \rv{\hat{h}_i}^R\left(\rv{U},0\right)\right) \leq \epsilon.
    \end{equation*}
Now assume that we have
\begin{equation}\label{eq:induc}
     d_{TV}\left(\rv{g_{\sigma,q}}\circ \rv{h}^R\left(\rv{U},t-1\right), \rv{g_{\sigma,q}}\circ \rv{\hat{h}_i}^R\left(\rv{U},t-1\right) \right)
    \leq t\epsilon.
\end{equation}
We want to bound the total variation distance between $\rv{g_{\sigma,q}}\circ \rv{h}^R\left(\rv{U},t\right)$ and $\rv{g_{\sigma,q}}\circ \rv{\hat{h}_i}^R\left(\rv{U},t\right)$, which are defined as follows. 
\begin{equation*}
\begin{aligned}
      &\rv{g_{\sigma,q}}\circ \rv{h}^R\left(\rv{U},t\right) =  \rv{g_{\sigma,q}}\circ f\circ\rv{g_{\sigma,s}}\left(\begin{bmatrix}
        \first{\rv{h}^R\left(\rv{U},t-1\right)}\\
        \rv{u^{(t)}}
    \end{bmatrix}\right),\\
     &\rv{g_{\sigma,q}}\circ \rv{\hat{h}_i}^R\left(\rv{U},t\right) =  \rv{g_{\sigma,q}}\circ \hat{f}_i\circ\rv{g_{\sigma,s}}\left(\begin{bmatrix}
        \first{\rv{\hat{h}_i}^R\left(\rv{U},t-1\right)}\\
        \rv{u^{(t)}}
    \end{bmatrix}\right).
\end{aligned}
\end{equation*}
 From Lemma~\ref{lemma:tv_to_tv_first} we know that
 \begin{equation*}
       \begin{aligned}
           &d_{TV}\left(\first{\rv{g_{\sigma,q}}\left( \rv{h}^R\left(\rv{U},t-1\right)\right)}, \first{\rv{g_{\sigma,q}}\left( \rv{\hat{h}_i}^R\left(\rv{U},t-1\right) \right)}\right)\\
        &   \leq d_{TV}\left(\rv{g_{\sigma,q}}\left( \rv{h}^R\left(\rv{U},t-1\right)\right), \rv{g_{\sigma,q}}\left( \rv{\hat{h}_i}^R\left(\rv{U},t-1\right) \right)\right)
    \leq t\epsilon
       \end{aligned} 
 \end{equation*}
It is easy to verify that $\first{\rv{g_{\sigma,q}}\left( \rv{h}^R\left(\rv{U},t-1\right)\right)} = \rv{g_{\sigma,q-1}}\left(\first{\rv{h}^R\left(\rv{U},t-1\right)}\right)$ because $\rv{g_{\sigma,q}}$ is a gaussian noise with covariance matrix equal to $\sigma^2I_{q}$, where $I_q\in\bR^{q\times q}$ is the identity matrix. Considering this fact and  Lemma~\ref{lemma:tv_to_tv_concatenate} we can write that 
\begin{equation*}
\begin{aligned}
     &  d_{TV}\left(\begin{bmatrix}
        \rv{g_{\sigma,q-1}}\left(\first{\rv{h}^R\left(\rv{U},t-1\right)}\right)\\
        \rv{u^{(t)}}
    \end{bmatrix},  \begin{bmatrix}
        \rv{g_{\sigma,q-1}}\left(\first{\rv{\hat{h}_i}^R\left(\rv{U},t-1\right)}\right)\\
        \rv{u^{(t)}}
    \end{bmatrix}\right)\\
   \leq & d_{TV}\left(\first{\rv{g_{\sigma,q}}\left( \rv{h}^R\left(\rv{U},t-1\right)\right)}, \first{\rv{g_{\sigma,q}}\left( \rv{\hat{h}_i}^R\left(\rv{U},t-1\right) \right)}\right) \leq t\epsilon.
\end{aligned}
\end{equation*}
Applying data processing inequality for TV distance (i.e., Lemma~\ref{lemma:DPI}) we can write that 
\begin{equation}\label{eq:base_induc}
\begin{aligned}
      & d_{TV}\left(\begin{bmatrix}
        \rv{g_{\sigma,q-1}}\left(\first{\rv{h}^R\left(\rv{U},t-1\right)}\right)\\
        \rv{g_{\sigma,p}}\left(\rv{u^{(t)}}\right)
    \end{bmatrix},  \begin{bmatrix}
        \rv{g_{\sigma,q-1}}\left(\first{\rv{\hat{h}_i}^R\left(\rv{U},t-1\right)}\right)\\
        \rv{g_{\sigma,p}}\left(\rv{u^{(t)}}\right)
    \end{bmatrix}\right)\\
       & = d_{TV}\left(\rv{g_{\sigma,s}}\left(\begin{bmatrix}
       \first{\rv{h}^R\left(\rv{U},t-1\right)}\\
        \rv{u^{(t)}}
    \end{bmatrix}\right), \rv{g_{\sigma,s}}\left( \begin{bmatrix}
        \first{\rv{\hat{h}_i}^R\left(\rv{U},t-1\right)}\\
       \rv{u^{(t)}}
    \end{bmatrix}\right)\right)\leq t\epsilon.
\end{aligned}
\end{equation}

Notice that $\begin{bmatrix}
        \first{\rv{h}^R\left(\rv{U},t-1\right)}&
        \rv{u^{(t)}}
    \end{bmatrix}\transpose$ is in $\rv{\cX_{s}}$. Since we know that  $\rv{g_{\sigma,q}}\circ f\circ\rv{g_{\sigma,s}}$ and $\rv{g_{\sigma,q}}\circ \hat{f}_i\circ\rv{g_{\sigma,s}}$ are globally $\epsilon$-close on $\rv{\cX_{s}}$, we can write that
\begin{equation}\label{eq:triangle1}
\begin{aligned}
      d_{TV}\left(\rv{g_{\sigma,q}}\circ f\circ\rv{g_{\sigma,s}}\left(\begin{bmatrix}
       \first{\rv{h}^R\left(\rv{U},t-1\right)}\\
        \rv{u^{(t)}}
    \end{bmatrix}\right), \rv{g_{\sigma,q}}\circ\hat{f}_i\circ\rv{g_{\sigma,s}}\left( \begin{bmatrix}
        \first{\rv{h}^R\left(\rv{U},t-1\right)}\\
       \rv{u^{(t)}}
\end{bmatrix}\right)\right)\leq \epsilon.
\end{aligned}
\end{equation}
Moreover, from data processing inequality (i.e., Lemma~\ref{lemma:DPI}) and Equation~\ref{eq:base_induc} we can conclude that 
    \begin{equation}\label{eq:triangle2}
        d_{TV}\left( \rv{g_{\sigma,q}}\circ \rv{\hat{f}_i}\circ\rv{g_{\sigma,s}}\left(\begin{bmatrix}
        \first{\rv{h}^R\left(\rv{U},t-1\right)}\\
        \rv{u^{(t)}}
    \end{bmatrix}\right),
     \rv{g_{\sigma,q}}\circ \hat{f}_i\circ\rv{g_{\sigma,s}}\left(\begin{bmatrix}
        \first{\rv{\hat{h}_i}^R\left(\rv{U},t-1\right)}\\
        \rv{u^{(t)}}
    \end{bmatrix}\right)\right)\leq t\epsilon.
    \end{equation}

Finally, we can combine Equations~\ref{eq:triangle1} and \ref{eq:triangle2} together with the triangle inequality for total variation distance to conclude that
    \begin{equation*}
    \begin{aligned}
    &    d_{TV}\left( \rv{g_{\sigma,q}}\circ f\circ\rv{g_{\sigma,s}}\left(\begin{bmatrix}
        \first{\rv{h}^R\left(\rv{U},t-1\right)}\\
        \rv{u^{(t)}}
    \end{bmatrix}\right),
     \rv{g_{\sigma,q}}\circ \hat{f}_i\circ\rv{g_{\sigma,s}}\left(\begin{bmatrix}
        \first{\rv{\hat{h}_i}^R\left(\rv{U},t-1\right)}\\
        \rv{u^{(t)}}
    \end{bmatrix}\right)\right)\\
    &= d_{TV}\left(\rv{g_{\sigma,q}}\circ \rv{h}^R\left(\rv{U},t\right),\rv{g_{\sigma,q}}\circ \rv{\hat{h}_i}^R\left(\rv{U},t\right)\right)\leq (t+1)\epsilon.
    \end{aligned}  
    \end{equation*}
    So far, we have proved that for any input matrix $\rv{U}\in\rv{\Delta_{p\times T}}$ we have 
    \begin{equation*}
        d_{TV}\left(\rv{g_{\sigma,q}}\circ \rv{h}^R\left(\rv{U},T-1\right),\rv{g_{\sigma,q}}\circ \rv{\hat{h}_i}^R\left(\rv{U},T-1\right)\right)\leq T\epsilon
    \end{equation*}
    By another application of Lemma~\ref{lemma:tv_to_tv_first} we can conclude that 
      \begin{equation*}
        d_{TV}\left(\last{\rv{g_{\sigma,q}}\circ \rv{h}^R\left(\rv{U},T-1\right)},\last{\rv{g_{\sigma,q}}\circ \rv{\hat{h}_i}^R\left(\rv{U},T-1\right)}\right)\leq T\epsilon
    \end{equation*}
    We can have a similar argument to the first function and write the above equation as
        \begin{equation*}
        d_{TV}\left(\rv{g_{\sigma,1}}\circ\last{h^R\left(\rv{U},T-1\right)},\rv{g_{\sigma,1}}\circ\last{\hat{h}_i^R\left(\rv{U},T-1\right)}\right)\leq T\epsilon.
    \end{equation*}
    This means that for every function $\rv{g_{\sigma,1}}\circ\last{\rv{h}^R\left(\rv{U},T-1\right)}$ in $\rv{\cG_{\sigma,1}}\circ\brmodel{\rv{\cF_{\sigma}}}{T}$ there exists a function $\hat{f}_i$ in $\cF$ such that $\rv{g_{\sigma,1}}\circ\last{\rv{h}^R\left(\rv{U},T-1\right)}$ and $\rv{g_{\sigma,1}}\circ\last{\rv{\hat{h}_i}^R\left(\rv{U},T-1\right)}$ are globally $T\epsilon$-cover close to each other with respect to $\rv{\Delta_{p\times T}}$.
    Setting $\epsilon'=\epsilon/T$ we can conclude that
    \begin{equation*}
N_U\left(\epsilon,\rv{\cG_\sigma}\circ\brmodel{\rv{\cF_{\sigma}}}{T},\infty,d_{TV}^{\infty},\rv{\Delta_{p\times T}}\right) \leq N_U\left(\frac{\epsilon}{T},\rv{\cG_{\sigma,q}}\circ\rv{\cF_{\sigma}},\infty,d_{TV}^{\infty},\rv{\cX_{s}}\right).
\end{equation*}

    The proof of the bounded domains essentially follows the same steps as above but for inputs that are bounded, i.e., inputs in $\rv{\Delta_{B,p\times T}}$ and $\rv{\cX_{B,s}}$.
\end{proof}

\subsection{A bound on the TV covering number of multi-layer noisy networks}
From Theorem~\ref{thm:cover_single} and Theorem~\ref{thm:compose_tv} we can get the following bound on the total variation covering number of noisy multi-layer networks.
\begin{theorem}[TV Cover for Multi-Layer Noisy Neural Networks]\label{thm:cover_multi} 
For every $\epsilon,\sigma\in(0,1)$ and every well-defined class $\rvmnet{p_0}{p_k}{w}{\sigma}$, we have
\begin{align*}
   \log &N_U(\epsilon,\rv{\cG_{\sigma,p_k}}\circ\rvmnet{p_0}{p_k}{w}{\sigma},\infty,d_{TV}^{\infty},\rv{\cX_{0.5,p_0}}) \\
   &= O\left( w\log\left(\frac{w}{\epsilon\sigma}\log\left(\frac{w}{\epsilon\sigma}\right)\right)\right) = \widetilde{O}\left(w\log\left(\frac{1}{\epsilon\sigma}\right)\right),
\end{align*}
where $\widetilde{O}$ hides logarithmic factors.
\end{theorem}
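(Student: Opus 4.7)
\textbf{Proof plan for Theorem~\ref{thm:cover_multi}.} The plan is to decompose a $k$-layer noisy network into a composition of single noisy layers, apply the composition tool (Theorem~\ref{thm:compose_tv}) iteratively, and bound each layer using Theorem~\ref{thm:cover_single}. Fix an architecture with widths $(p_0, p_1, \ldots, p_k)$ satisfying $\sum_{i=1}^k p_i p_{i-1} = w$. I would write
\begin{equation*}
\rv{\cG_{\sigma, p_k}} \circ \rvmnet{p_0}{p_k}{w}{\sigma} \;=\; L_k \circ L_{k-1} \circ \cdots \circ L_1 \circ \rv{\cG_{\sigma, p_0}},
\end{equation*}
where $L_i = \rv{\cG_{\sigma, p_i}} \circ \net{p_{i-1}}{p_i}$. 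The leading $\rv{\cG_{\sigma, p_0}}$ places the input to $L_1$ in $\rv{\cG_{\sigma, p_0}} \circ \rv{\cX_{0.5, p_0}}$, matching the input domain required by Theorem~\ref{thm:cover_single}. Since each $\net{p_{i-1}}{p_i}$ outputs values in $[-1/2, 1/2]^{p_i}$, the output of $L_i$ is a Gaussian-smoothed $[-1/2,1/2]^{p_i}$-valued variable and therefore lies in $\rv{\cG_{\sigma, p_i}} \circ \rv{\cX_{0.5, p_i}}$, which again matches the input requirement for $L_{i+1}$.

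Next I would split the error budget as $\epsilon_i = \epsilon/k$ per layer and apply Theorem~\ref{thm:compose_tv} recursively. Because Theorem~\ref{thm:cover_single} provides global ($m = \infty$) covers, the $m N_1$ factor appearing in the composition bound does not hurt us; the layer-wise cover sizes simply multiply. Taking logarithms yields
\begin{equation*}
\log N_U\!\left(\epsilon, \rv{\cG_{\sigma, p_k}} \circ \rvmnet{p_0}{p_k}{w}{\sigma}, \infty, d_{TV}^\infty, \rv{\cX_{0.5, p_0}}\right) \;\leq\; \sum_{i=1}^{k} \log N_U\!\left(\epsilon/k,\, L_i,\, \infty,\, d_{TV}^\infty,\, \rv{\cG_\sigma} \circ \rv{\cX_{0.5, p_{i-1}}}\right).
\end{equation*}
By Theorem~\ref{thm:cover_single}, each summand is $O\!\left(p_i(p_{i-1} + 1)\log(P_i)\right)$, where $P_i$ is polynomial in $p_{i-1}$, $k/\epsilon$ and $1/\sigma$ (with a mild iterated logarithm factor). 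Since $\sum_i p_i p_{i-1} = w$ and $p_{i-1} \geq 1$ forces $\sum_i p_i \leq w$, we have $\sum_i p_i(p_{i-1}+1) \leq 2w$; combined with $k \leq w$, the argument of the inner logarithm is polynomial in $w/(\epsilon\sigma)$. This produces the claimed $\widetilde{O}(w \log(1/(\epsilon\sigma)))$ bound.

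To close the argument I would handle the union in the definition of $\mnet{p_0}{p_k}{w}$ by bounding the cover of the union by the sum of covers for each architecture. The number of admissible tuples $(k, p_1, \ldots, p_{k-1})$ with $\sum_i p_i p_{i-1} = w$ is at most $\exp(O(w))$, so this contributes only an additive $O(w)$ term to the log cover, which is absorbed into the final bound.

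The main obstacle I anticipate is a careful bookkeeping of input domains in the iterative composition: Theorem~\ref{thm:compose_tv} is stated for the full ambient domain $\rv{\cX_d}$, whereas Theorem~\ref{thm:cover_single} only provides a cover over the restricted domain $\rv{\cG_\sigma} \circ \rv{\cX_{0.5, d}}$. One either needs to verify that the proof of Theorem~\ref{thm:compose_tv} goes through verbatim when the ambient domain is replaced by this specific smoothed, bounded subdomain, or equivalently to track that at every stage of the recursion the random variable fed into the next layer lies in the correct subdomain; the latter holds because of the sigmoid boundedness followed by Gaussian smoothing at each stage, but the former requires inspecting the original argument from \citet{pour2022benefits}.
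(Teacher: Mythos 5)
Your proposal follows essentially the same route as the paper: fix an architecture $(p_1,\ldots,p_{k-1})$, decompose the network into single noisy layers, split the error budget as $\epsilon/k$, apply Theorem~\ref{thm:compose_tv} recursively together with the single-layer bound of Theorem~\ref{thm:cover_single}, use $k\leq w$ and $\sum_i p_i p_{i-1}=w$ to collapse the sum, and finally take a union over architectures. The domain-bookkeeping concern you flag at the end is real but resolves exactly as you suspect: sigmoid boundedness followed by the trailing $\rv{\cG_\sigma}$ at each stage keeps the intermediate random variables in the smoothed bounded domain required by Theorem~\ref{thm:cover_single}, and the paper handles it by noting that covering $\rv{\cG_\sigma}\circ\rv{\cF}$ over $\rv{\cX_{0.5,p_0}}$ is the same as covering the remaining chain over $\rv{\cG_{\sigma,p_0}}\circ\rv{\cX_{0.5,p_0}}$. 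One small imprecision: your count of $\exp(O(w))$ admissible architectures is not justified as stated. The paper's (and the easy) bound is $w^{k-1}\leq w^w=\exp(O(w\log w))$, since each $p_i\leq w$ and $k\leq w$; this contributes $O(w\log w)$, not $O(w)$, to the log-cover. The slip is harmless because $w\log w$ is dominated by the $O(w\log(w/(\epsilon\sigma)\cdots))$ term when $\epsilon,\sigma<1$, but as written the claim would need either a correction or a sharper counting argument.
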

\begin{proof}
    Fix a choice of $p_1,\ldots,p_{k-1}\in\bN$ and let $\rv{\cF}=\net{p_{k-1}}{p_k}\circ\ldots\circ\rv{\cG_{\sigma}}\circ\net{p_0}{p_1}\circ\rv{\cG_{\sigma}}$ be a class of multi-layer sigmoid neural networks in $\rvmnet{p_0}{p_k}{w}{\sigma}$. Notice that 
    \begin{equation*}
      \rv{\cG_{\sigma}} \circ \rv{\cF} = \rv{\cG_{\sigma}}\circ\net{p_{k-1}}{p_k}\circ\ldots\circ\rv{\cG_{\sigma}}\circ\net{p_0}{p_1}\circ\rv{\cG_{\sigma}}
    \end{equation*}
    and that the covering number of $\rv{\cG_{\sigma}} \circ \rv{\cF}$ with respect to $\rv{\cX_{1,p_0}}$ is the same as the covering number of $\rv{\cG_{\sigma}}\circ\net{p_{k-1}}{p_k}\circ\ldots\circ\rv{\cG_{\sigma}}\circ\net{p_0}{p_1}$ with respect to $\rv{\cG_{\sigma,p_0}}\circ\rv{\cX_{1,p_0}}$. From Theorem~\ref{thm:cover_single} we know that for any $0\leq i\leq k-1$ we can bound the covering number of $\rv{\cG_{\sigma}}\circ\net{p_i}{p_{i+1}}$ as \begin{equation*}
    \begin{aligned}
                &\log N_U\left(\epsilon,\rv{\cG_{\sigma}}\circ\net{p_i}{p_{i+1}},\infty,d_{TV}^{\infty},\rv{\cG_{\sigma}}\circ\rv{\cX_{0.5,p_i}}\right)\\
                &\leq p_i(p_{i+1}+1)\log\left(30\frac{p_i^{5/2}\sqrt{\ln\left((5p_i-\epsilon\sigma)/(\epsilon\sigma)\right)}}{\epsilon^{3/2}\sigma^2}\ln\left(\frac{5p_i}{\epsilon\sigma}\right)\right).
    \end{aligned}
    \end{equation*}
    Note that in \citet{pour2022benefits} the above bound was originally stated as a bound on the covering number of $\rv{\cG_{\sigma}}\circ\net{p_i}{p_{i+1}}$ with respect $\rv{\cG_{\sigma}}\circ\rv{\cX_{1,p_i}}$. However, we know that the bound with respect to $\rv{\cG_{\sigma}}\circ\rv{\cX_{1,p_i}}$ is always an upper bound for the covering number with respect to $\rv{\cG_{\sigma}}\circ\rv{\cX_{0.5,p_i}}$. If, instead of setting $B=1$, we wanted to consider $B=0.5$ as a bound on the domain, the covering number bound would become only tighter in terms of constant factors. Considering the above facts and applying Theorem~\ref{thm:compose_tv} recursively, we can write that
    \begin{equation*}
        \begin{aligned}
           &\log N_U\left(k\epsilon,\rv{\cG_{\sigma}} \circ \rv{\cF},\infty,d_{TV}^{\infty},\rv{\cX_{0.5,p_0}}\right)\\
                 &\leq \sum_{i=0}^{k-1 }\log N_U\left(\epsilon,\rv{\cG_{\sigma}}\circ\net{p_i}{p_{i+1}},\infty,d_{TV}^{\infty},\rv{\cG_{\sigma}}\circ\rv{\cX_{0.5,p_i}}\right)\\
                 &\leq \sum_{i=0}^{k-1 }p_i(p_{i+1}+1)\log\left(30\frac{p_i^{5/2}\sqrt{\ln\left((5p_i-\epsilon\sigma)/(\epsilon\sigma)\right)}}{\epsilon^{3/2}\sigma^2}\ln\left(\frac{5p_i}{\epsilon\sigma}\right)\right).  
        \end{aligned}
    \end{equation*}
    We can now set $\epsilon'=\epsilon/k$ and rewrite the above equation as
    \begin{equation*}
        \begin{aligned}
             &\log N_U\left(\epsilon,\rv{\cG_{\sigma}} \circ \rv{\cF},\infty,d_{TV}^{\infty},\rv{\cX_{0.5,p_0}}\right)\\
                &\leq \sum_{i=0}^{k-1 }p_i(p_{i+1}+1)\max_{i}\left\{\log\left(30\frac{p_i^{5/2}\sqrt{\ln\left((5p_i-\epsilon'\sigma)/(\epsilon'\sigma)\right)}}{\epsilon'^{3/2}\sigma^2}\ln\left(\frac{5p_i}{\epsilon'\sigma}\right)\right)\right\}\\
                &\leq w\max_{i}\left\{\log\left(30\frac{p_i^{5/2}\sqrt{\ln\left((5p_i-\epsilon'\sigma)/(\epsilon'\sigma)\right)}}{\epsilon'^{3/2}\sigma^2}\ln\left(\frac{5p_i}{\epsilon'\sigma}\right)\right)\right\}\\
                 &\leq w\max_{i}\left\{\log\left(30\frac{p_i^{5/2}\sqrt{\ln\left(5p_i/(\epsilon'\sigma)\right)}}{\epsilon'^{3/2}\sigma^2}\ln\left(\frac{5p_i}{\epsilon'\sigma}\right)\right)\right\}\\
                 &\leq w\max_{i}\left\{\log\left(30\frac{p_i^{5/2}\sqrt{5p_i/(\epsilon'\sigma)}}{\epsilon'^{3/2}\sigma^2}\ln\left(\frac{5p_i}{\epsilon'\sigma}\right)\right)\right\}\\
                 &\leq w\max_{i}\left\{\log\left(30\sqrt{5}\frac{p_i^{3}}{\epsilon'^{2}\sigma^{3/2}}\ln\left(\frac{5p_i}{\epsilon'\sigma}\right)\right)\right\}.
        \end{aligned}
    \end{equation*}
    Using the fact that $\epsilon,\sigma<1$, we can simplify the above equation and write that
    \begin{equation*}
        \begin{aligned}
         &\log N_U\left(\epsilon,\rv{\cG_{\sigma}} \circ \rv{\cF},\infty,d_{TV}^{\infty},\rv{\cX_{0.5,p_0}}\right)\\
                &\leq w\max_{i}\left\{\log\left((30\sqrt{5})^3\frac{p_i^{3}}{\epsilon'^{3}\sigma^{3}}\left(\ln\left(\frac{5p_i}{\epsilon'\sigma}\right)\right)^3\right)\right\}\\
               &\leq w\max_{i}\left\{3\log\left(30\sqrt{5}\frac{p_i}{\epsilon'\sigma}\ln\left(\frac{5p_i}{\epsilon'\sigma}\right)\right)\right\}\\
                &\leq w\max_{i}\left\{3\log\left(30\sqrt{5}\frac{kp_i}{\epsilon\sigma}\ln\left(\frac{5kp_i}{\epsilon\sigma}\right)\right)\right\}\\
                &\leq w\left(3\log\left(30\sqrt{5}\frac{w^2}{\epsilon\sigma}\ln\left(\frac{5w^2}{\epsilon\sigma}\right)\right)\right)\\
                &= O\left(w\log\left(\frac{w}{\epsilon\sigma}\ln\left(\frac{w}{\epsilon\sigma}\right)\right)\right)= \widetilde{O}\left(w\log\left(\frac{1}{\epsilon\sigma}\right)\right),
        \end{aligned}
    \end{equation*}
    where we used the fact that $k\leq w$ and $p_i\leq w$ for every $0\leq i\leq k$. Now that we found an upper bound on the covering number of $\rv{\cG_{\sigma}}\circ\rv{\cF}$ for a choice of $p_1,\ldots,p_{k-1}$, we can bound the covering number of $\rv{\cG_{\sigma}}\circ\rvmnet{p_0}{p_k}{w}{\sigma}$. The number of different choices that we can have for $p_1,\ldots,p_{k-1}$ is at most $w^{k-1}$ since we know that $\sum_{i=1}^k p_ip_{i-1}=w$ and therefore $p_i<w$ for every $0\leq i\leq k$. Therefore, we can simply take a union of the covering sets for each choice of $p_0,\ldots,p_{k-1}$ as a covering set for $\rv{\cG_{\sigma}}\circ\rvmnet{p_0}{p_k}{w}{\sigma}$, which yields to the following covering number bound.
    \begin{equation*}
        \begin{aligned}
         &\log N_U\left(\epsilon,\rv{\cG_{\sigma}}\circ\rvmnet{p_0}{p_k}{w} {\sigma},\infty,d_{TV}^{\infty},\rv{\cX_{0.5,p_0}}\right)\\
         &\leq \log w^{k}.N_U\left(\epsilon,\rv{\cG_{\sigma}}\circ\rv{\cF},\infty,d_{TV}^{\infty},\rv{\cG_{\sigma,p_0}}\circ\rv{\cX_{0.5,p_0}}\right)\\
         &\leq w\log w+\log N_U\left(\epsilon,\rv{\cG_{\sigma}}\circ\rv{\cF},\infty,d_{TV}^{\infty},\rv{\cG_{\sigma,p_0}}\circ\rv{\cX_{0.5,p_0}}\right) &&(k\leq w)\\
            &= O\left(w\log w + w\log\left(\frac{w}{\epsilon\sigma}\ln\left(\frac{w}{\epsilon\sigma}\right)\right)\right)= O\left(w\log\left(\frac{w}{\epsilon\sigma}\ln\left(\frac{w}{\epsilon\sigma}\right)\right)\right) = \widetilde{O}\left(w\log\left(\frac{1}{\epsilon\sigma}\right)\right),
        \end{aligned}
    \end{equation*}
\end{proof}
\subsection{Proof of Theorem~\ref{thm:cover_noisy_rnn}}
\begin{proof}
We know that
\begin{equation*}
      \rvmnet{p_0}{p_k}{w}{\sigma} = \bigcup \net{p_{k-1}}{p_k}    \circ \ldots \circ \rv{\cG_{\sigma}}\circ\net{p_1}{p_2}     \circ    \rv{\cG_{\sigma}}\circ\net{p_0}{p_1}\circ\rv{\cG_{\sigma}}.
\end{equation*}
Define $\rv{\cF} = \bigcup \net{p_{k-1}}{p_k}    \circ \ldots \circ \rv{\cG_{\sigma}}\circ\net{p_1}{p_2}     \circ    \rv{\cG_{\sigma}}\circ\net{p_0}{p_1}$ and note that $ \rv{\cF}\circ\rv{\cG_{\sigma}} = \rv{\cF_{\sigma}} = \rvmnet{p_0}{p_k}{w}{\sigma}$. Therefore, we can use Theorem~\ref{thm:cover_recurrent_model} to write that
    \begin{equation*}
        \begin{aligned}  &N_U\left(\epsilon,\rv{\cG_\sigma}\circ\brmodel{\rvmnet{p_0}{p_k}{w} {\sigma}}{T},\infty,d_{TV}^{\infty},\rv{\Delta_{0.5,p\times T}}\right)\\    &=N_U\left(\epsilon,\rv{\cG_\sigma}\circ\brmodel{\rv{\cF_{\sigma}}}{T},\infty,d_{TV}^{\infty},\rv{\Delta_{0.5,p\times T}}\right)\\
    & \leq N_U\left(\frac{\epsilon}{T},\rv{\cG_{\sigma}}\circ\rv{\cF_{\sigma}},\infty,d_{TV}^{\infty},\rv{\cX_{0.5,s}}\right)\\
    &=N_U\left(\frac{\epsilon}{T},\rv{\cG}\circ\rvmnet{p_0}{p_k}{w} {\sigma},\infty,d_{TV}^{\infty},\rv{\cX_{0.5,s}}\right).
        \end{aligned}
    \end{equation*}
    We know of a bound on the covering number of $\rv{\cG_{\sigma}}\circ\rvmnet{p_0}{p_k}{w} {\sigma}$ from Theorem~\ref{thm:cover_multi}. Using this bound we can rewrite the above equation as
    \begin{equation*}
        \begin{aligned}  &N_U\left(\epsilon,\rv{\cG_\sigma}\circ\brmodel{\rvmnet{p_0}{p_k}{w} {\sigma}}{T},\infty,d_{TV}^{\infty},\rv{\Delta_{0.5,p\times T}}\right)\\
    & \leq N_U\left(\frac{\epsilon}{T},\rv{\cG}\circ\rvmnet{p_0}{p_k}{w} {\sigma},\infty,d_{TV}^{\infty},\rv{\cX_{0.5,s}}\right)\\
    & =O\left(w\log\left(\frac{wT}{\epsilon\sigma}\ln\left(\frac{wT}{\epsilon\sigma}\right)\right)\right) = \widetilde{O}\left(w\log\left(\frac{T}{\epsilon\sigma}\right)\right).
        \end{aligned}
    \end{equation*}
\end{proof}

\subsection{Proof of Theorem~\ref{thm:pac_cover}}
\begin{proof}
    From Theorem~\ref{thm:pac_cover_exact} we can write that
  \begin{equation*}
\begin{aligned}
       &\expects{(x,y)\sim \cD}{l_{\gamma}(\hat{f},x,y)} \\
       &\leq  \inf_{f\in\cF} \expects{(x,y)\sim \cD}{l_{\gamma}(f,x,y)} +2\inf_{\epsilon\in [0,1/2]}\left\{2\left[4\epsilon + \frac{12}{\sqrt{m}}\int_{\epsilon}^{1/2}\sqrt{\ln N_U(\gamma\nu,\cF,m,\|.\|_2^{\ell_2})}\,d\nu\right]\right\}+ 6 \sqrt{\frac{\ln(2/\delta)}{2m}}\\
       & \leq \inf_{f\in\cF} \expects{(x,y)\sim \cD}{l_{\gamma}(f,x,y)} +  2\left[8\epsilon + \frac{24}{\sqrt{m}}\int_{\epsilon}^{1/2}\sqrt{\ln N_U(\gamma\nu,\cF,m,\|.\|_2^{\ell_2})}\,d\nu\right] + 6 \sqrt{\frac{\ln(2/\delta)}{2m}}\quad\quad\quad (\forall \epsilon \in [0,1/2])\\
     &\leq\inf_{f\in\cF} \expects{(x,y)\sim \cD}{l_{\gamma}(f,x,y)} +  16\epsilon + \frac{24}{\sqrt{m}}\sqrt{\ln N_U(\gamma\epsilon,\cF,m,\|.\|_2^{\ell_2})}+ 6 \sqrt{\frac{\ln(2/\delta)}{2m}},
\end{aligned}
\end{equation*}
where we have used the fact that the integral is over $[0,1/2]$ and the covering number decreases monotonically with $\epsilon$.
\end{proof}

\subsection{Proof of Theorem~\ref{thm:upper_rnn}}
We are now ready to state the proof of the upper bound on the sample complexity of PAC learning noisy recurrent neural networks with respect to the ramp loss.
\begin{proof}
    From Theorem~\ref{thm:pac_cover} we know that if we choose algorithm $\cA$ such that for every distribution over $[-1/2,1/2]^{p\times T}\times \{-1,1\}$ and any input $S$ of $m$ i.i.d. samples from $\cD$ it outputs $\cA(S) = \hat{h} = \arg\min_{h\in\cH_w}\frac{1}{|S|}\sum_{(x,y)\in S}l_{\gamma}(h,x,y)$ , then with probability at least $1-\delta$ we have
  \begin{equation}\label{eq:upper1}
\begin{aligned}
       &\expects{(U,y)\sim \cD}{l_{\gamma}(\hat{h},U,y)} \\
       &\leq\inf_{h\in\cH_{w}}\expects{(U,y)\sim \cD}{l_{\gamma}(h,U,y)} +  16\epsilon + \frac{24}{\sqrt{m}}\sqrt{\log N_U(\gamma\epsilon,\cH_w,m,\|.\|_2^{\ell_2})}+ 6 \sqrt{\frac{\log(2/\delta)}{2m}}.
\end{aligned}
\end{equation}
We know that $\rv{\cQ_w}$ is a class of functions from $[-1/2,1/2]^{p\times T}$ to $[-1/2,1/2]$. We also know that $\|x\|_2^{\ell_2}\leq\|x\|_2^{\infty}$ for every $x$. We can now use Theorem~\ref{thm:cover_derandomization} to turn the bound on the covering number of $\rv{\cQ_w}$ to a bound on the covering number of $\cE(\rv{\cQ_w})$. Note that Theorem~\ref{thm:cover_derandomization} is stated for functions with outputs in $[-B,B]$ and $\rv{\cQ_w}=\rv{\cG_{\sigma}}\circ\brmodel{\rvmnet{p_0}{p_k}{w}{\sigma}}{T}$ outputs values in $\rv{\cG_{\sigma,p_k}}\circ\rv{\cX_{0.5,p_k}}$. However, $\rv{\cG_{\sigma,p_k}}$ is a class of zero mean Gaussian random variables that are independent of the output of $\brmodel{\rvmnet{p_0}{p_k}{w}{\sigma}}{T}$ and, therefore, they do not change the expectation and the covering number bound for $\cE(\rv{\cQ_w})$ would be the same as the covering number bound for $\cE\left(\brmodel{\rvmnet{p_0}{p_k}{w}{\sigma}}{T}\right)$. Thus we know that 
\begin{equation*}
    N_U(\gamma\epsilon,\cH_w,m,\|.\|_2^{\ell_2}) \leq N_U(\gamma\epsilon,\cH_w,m,\|.\|_2^{\infty}) \leq  N_U(\gamma\epsilon,\rv{\cQ_w},\infty,d_{TV}^{\infty},\rv{\Delta_{0.5,p\times T}}).
\end{equation*}
We can, therefore, rewrite Equation~\ref{eq:upper1} as follows.
\begin{equation}\label{eq:upper2}
\begin{aligned}
        &\expects{(U,y)\sim \cD}{l_{\gamma}(\hat{h},U,y)} \\
       &\leq\inf_{h\in\cH_{w}}\expects{(U,y)\sim \cD}{l_{\gamma}(h,U,y)} +  16\epsilon + \frac{24}{\sqrt{m}}\sqrt{\log N_U(\gamma\epsilon,\rv{\cQ_2},\infty,d_{TV}^{\infty},\rv{\Delta_{0.5,p\times T}})}+ 6 \sqrt{\frac{\log(2/\delta)}{2m}}.
\end{aligned}
\end{equation}
Therefore, if we find $m$ such that $\frac{1}{\sqrt{m}}\sqrt{\log N_U(\gamma\epsilon,\rv{\cQ_w},\infty,d_{TV}^{\infty},\rv{\Delta_{p\times T}})} = O(\epsilon)$ and $\sqrt{\frac{\log(1/\delta)}{m}}=O(\epsilon)$ then we can guarantee $\expects{(U,y)\sim \cD}{l_{\gamma}(\hat{h},U,y)} \leq\inf_{h\in\cH_{w}} \expects{(U,y)\sim \cD}{l_{\gamma}(h,U,y)} +O(\epsilon)$.

We know of a covering number bound for $\rv{\cQ_w}$ from Theorem~\ref{thm:cover_noisy_rnn} which is as follows.

\begin{equation*}
        \begin{aligned}  &\log N_U\left(\epsilon,\rv{\cQ_w},\infty,d_{TV}^{\infty},\rv{\Delta_{0.5,p\times T}}\right)
    =O\left(w\log\left(\frac{wT}{\epsilon\sigma}\ln\left(\frac{wT}{\epsilon\sigma}\right)\right)\right).
        \end{aligned}
    \end{equation*}
    
We can thus write that
\begin{equation*}
    \begin{aligned}
     & \sqrt{\frac{\log N_U(\gamma\epsilon,\rv{\cQ_w},\infty,d_{TV}^{\infty},\rv{\Delta_{0.5,p\times T}})}{m}} = O(\epsilon)  \Leftrightarrow m = O\left(\frac{1}{\epsilon^2}w\log\left(\frac{wT}{\epsilon\sigma}\ln\left(\frac{wT}{\epsilon\sigma}\right)\right)\right)
    \end{aligned}
\end{equation*}
Moreover, if we want $\sqrt{\frac{\log(1/\delta)}{m}}=O(\epsilon)$ then we should have $m = O\left(\frac{\log(1/\delta)}{\epsilon^2}\right)$. Combining the above results, we can conclude that
\begin{equation*}
    m_{\cH_w}(\epsilon,\delta) = O\left(\frac{w\log\left(\frac{wT}{\epsilon\sigma}\ln\left(\frac{wT}{\epsilon\sigma}\right)\right) + \log(1/\delta)}{\epsilon^2}\right) =\widetilde{O}\left(\frac{w\log\left(\frac{T}{\sigma}\right)+\log(1/\delta)}{\epsilon^2}\right).
\end{equation*}
samples is sufficient to conclude that with probability at least $1-\delta$ we have $\allowdisplaybreaks \expects{(U,y)\sim \cD}{l_{\gamma}(\hat{h},U,y)} \leq\inf_{h\in\cH_{w}} \expects{(U,y)\sim \cD}{l_{\gamma}(h,U,y)}  + O(\epsilon)$, which implies PAC learning $\cH_w$ with respect to ramp loss with a sample complexity of $m_{\cH_w}(\epsilon,\delta)$.
\end{proof}

\section{PAC learning and covering number bounds}\label{app:pac_cover}
In this section, we discuss how we can find a bound on the sample complexity of PAC learning a class of functions with respect to ramp loss from a bound on its covering number. Particularly, we show how to use a bound on covering number to find the number of samples required to ensure uniform convergence with respect to ramp loss. We then connect the uniform convergence results to PAC learning and find the minimum number of samples required to guarantee PAC learning with respect to ramp loss.

We start by defining uniform convergence (with respect to ramp loss).
\begin{definition}[Uniform Convergence with Respect to Ramp Loss]
Let $\cF$ be a class of functions from $\cX$ to $\bR$. We say that $\cF$ has uniform convergence property with respect to ramp loss with margin parameter $\gamma>0$ if there exists some function $m:(0,1)^2\rightarrow \bN$ such that for every distribution $\cD$ over $\cX\times\{-1,1\}$ and every $\epsilon,\delta \in (0,1)$, if $S$ is a set of $m(\epsilon,\delta)$ i.i.d. samples from $\cD$, then with probability at least $1-\delta$ (over the randomness of $S$) for every function $f\in\cF$ we have $\left|\expects{(x,y)\sim\cD}{l_{\gamma}(f,x,y)} -\frac{1}{|S|}\sum_{(x,y)\in S}l_{\gamma}(f,x,y)\right|\leq \epsilon$.
\end{definition}
The \textit{sample complexity} of uniform convergence for class $\cF$ is denoted by $m_{\cF}^{\text{UC}}(\epsilon,\delta)$, which is the minimum number of samples required to guarantee uniform convergence for $\cF$.
We now show that uniform convergence implies PAC learning (with respect to ramp loss).
\begin{lemma}\label{lemma:uniconv_pac}
   Let $\cF$ be a class of functions from $\cX$ to $\bR$ that satisfies uniform convergence property with respect to ramp loss. Then for any $(\epsilon,\delta)\in (0,1)$, we have $m_{\cF}(\epsilon,\delta)\leq m_{\cF}^{\text{UC}}(\epsilon/2,\delta)$, i.e., there exists an algorithm $\cA$ such that for any distribution $\cD$ over $\cX\times\{-1,1\}$ and any $(\epsilon,\delta)\in(0,1)$, if $S$ is a set of $m \geq  m_{\cF}^{\text{UC}}(\epsilon/2,\delta)$ i.i.d. samples from $\cD$, then with probability at least $1-\delta$, we have that $\expect{(x,y)\sim \cD}{l_{\gamma}(\cA(S),x,y)}\leq \inf_{f\in\cF}\expects{(x,y)\sim \cD}{l_{\gamma}(f,x,y)}+\epsilon$.
\end{lemma}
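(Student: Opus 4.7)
The plan is to show that the Empirical Risk Minimization (ERM) algorithm, combined with the uniform convergence guarantee, achieves the stated sample complexity bound. Concretely, I would define the algorithm $\cA$ by $\cA(S) = \arg\min_{f\in\cF}\frac{1}{|S|}\sum_{(x,y)\in S} l_{\gamma}(f,x,y)$ (if the minimum is not attained, pick any $f$ whose empirical loss is within $\eta$ of the infimum, and let $\eta\to 0$ at the end). Let $m \geq m_{\cF}^{\text{UC}}(\epsilon/2,\delta)$ and fix a distribution $\cD$. By the uniform convergence property applied with parameter $\epsilon/2$, the event
\begin{equation*}
   E := \Bigl\{ \forall f\in\cF:\ \bigl|\expects{(x,y)\sim\cD}{l_{\gamma}(f,x,y)} - \tfrac{1}{|S|}\sum_{(x,y)\in S} l_{\gamma}(f,x,y)\bigr| \leq \epsilon/2 \Bigr\}
\end{equation*}
occurs with probability at least $1-\delta$ over the draw of $S$.

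Next, I would condition on $E$ and chain two applications of the uniform convergence bound. For brevity write $L(f) = \expects{(x,y)\sim\cD}{l_{\gamma}(f,x,y)}$ and $\widehat{L}(f) = \frac{1}{|S|}\sum_{(x,y)\in S} l_{\gamma}(f,x,y)$. Applied to the ERM output $\hat{f} = \cA(S)$, event $E$ gives $L(\hat{f}) \leq \widehat{L}(\hat{f}) + \epsilon/2$. To control $\widehat{L}(\hat{f})$, for any $\eta > 0$ choose $f^\star\in\cF$ with $L(f^\star) \leq \inf_{f\in\cF} L(f) + \eta$; by the ERM rule $\widehat{L}(\hat{f}) \leq \widehat{L}(f^\star)$, and by $E$ applied to $f^\star$, $\widehat{L}(f^\star) \leq L(f^\star) + \epsilon/2$. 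Combining these three inequalities yields
\begin{equation*}
   L(\hat{f}) \leq \widehat{L}(\hat{f}) + \epsilon/2 \leq \widehat{L}(f^\star) + \epsilon/2 \leq L(f^\star) + \epsilon \leq \inf_{f\in\cF} L(f) + \epsilon + \eta.
\end{equation*}
Since $\eta > 0$ was arbitrary (and $\hat{f}$ does not depend on $\eta$), taking $\eta\to 0$ gives $L(\hat{f}) \leq \inf_{f\in\cF} L(f) + \epsilon$, which is the desired PAC guarantee.

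There is no real obstacle here — the argument is the standard ``ERM + uniform convergence'' two-sided deviation bound, halved to split the $\epsilon$ budget between the generalization gaps for $\hat{f}$ and for the (near-)optimal hypothesis. The only mild subtlety is handling the infimum when it is not attained, which is resolved by the $\eta$-slack trick above; everything else is a direct invocation of the uniform convergence definition on the event $E$.
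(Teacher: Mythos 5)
Your proof is correct and follows essentially the same route as the paper: apply the uniform convergence event with parameter $\epsilon/2$ to the ERM output and to a (near-)optimal comparator, and chain the three inequalities. The only cosmetic difference is that you pass through an explicit $\eta$-slack and take $\eta\to 0$, whereas the paper establishes $L(\hat{f}) \leq L(f) + \epsilon$ uniformly for all $f\in\cF$ and then takes the infimum over $f$ on the right; both are standard ways of handling a non-attained infimum and amount to the same argument.
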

\begin{proof}
    Let $\cA$ be an algorithm that outputs the function in $\cF$ that has the minimum empirical loss, i.e., $\cA(S) = \arg\min_{f\in\cF}\frac{1}{|S|}\sum_{(x,y)\in S}l_{\gamma}(f,x,y)$. Since $S$ is a set of $m\geq m_{\cF}^{\text{UC}}(\epsilon/2,\delta)$ samples, we know that with probability at least $1-\delta$ we have $\left|\expects{(x,y)\sim \cD}{l_{\gamma}(f,x,y)}-\frac{1}{|S|}\sum_{(x,y)\in S}l_{\gamma}(f,x,y)\right|\leq \epsilon/2$ for every $f\in\cF$. Let $\hat{f} = \cA(S)$. Then for every $f\in\cF$ we can write that
    \begin{equation*}
    \begin{aligned}
        &\expects{(x,y)\sim\cD}{l_{\gamma}(\hat{f},x,y)}\leq  \frac{1}{|S|}\sum_{(x,y)\in S}l_{\gamma}(\hat{f},x,y) + \frac{\epsilon}{2} \leq \frac{1}{|S|}\sum_{(x,y)\in S}l_{\gamma}(f,x,y) + \frac{\epsilon}{2} \\
        &\leq \expects{(x,y)\in \cD}{l_{\gamma}(f,x,y)} + \frac{\epsilon}{2}  + \frac{\epsilon}{2}  = \expects{(x,y)\in \cD}{l_{\gamma}(f,x,y)}+ \epsilon.
    \end{aligned}
    \end{equation*}
    This implies that with $m\geq m_{\cF}^{\text{UC}}(\epsilon/2,\delta)$ i.i.d. samples we can guarantee PAC learning with respect to ramp loss with parameters $\epsilon$ and $\delta$. In other words, we have $m_{\cF}(\epsilon,\delta)\leq m_{\cF}^{\text{UC}}(\epsilon/2,\delta)$.
\end{proof}
The following theorem tells us that we can relate the bound on the covering number of a class of functions to the uniform convergence property for that class. The proof relies on bounding the Rademacher complexity of the class by a bound on its covering number \citep{dudley2010universal} and then relating the bound on the Rademacher complexity to uniform convergence property. See \citet{shalev2014understanding} and \citet{mohri2018foundations} for a more detailed discussion and proof.
\begin{theorem}\label{thm:unif_cover}
Let $\cF$ be a class of functions from $\cX$ to $\bR$ and $\cF_{\gamma} = \{f_{\gamma}:\cX\times\{-1,1\}\to [0,1]\mid 
 f_{\gamma}(x,y) = r_{\gamma}\left(-f(x).y\right),f\in\cF\}$ be the class of its composition with ramp loss. Let $\cD$ be a distribution over $\cX\times \{-1,1\}$ and $S\sim\cD^m$ be an i.i.d. sample of size $m$. Then, for any $\delta\in(0,1)$ with probability at least $1-\delta$ (over the randomness of $S$) for every $f\in\cF$ we have
\begin{equation*}
\begin{aligned}
       &\expects{(x,y)\sim\cD}{l_{\gamma}(f,x,y)}\\
       &\leq \frac{1}{|S|}\sum_{(x,y)\in S}l_{\gamma}(f,x,y) +\inf_{\epsilon\in [0,1/2]}\left\{2\left[4\epsilon + \frac{12}{\sqrt{m}}\int_{\epsilon}^{1/2}\sqrt{\log N_U(\nu,\cF_{\gamma},m,\|.\|_2^{\ell_2})}\,d\nu\right]\right\}+ 3 \sqrt{\frac{\log(2/\delta)}{2m}}.
\end{aligned}
\end{equation*}
\end{theorem}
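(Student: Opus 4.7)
The proof follows the classical empirical-process template: symmetrization to pass from the supremum deviation to a Rademacher complexity, a McDiarmid-type concentration inequality to control the high-probability deviation around its mean, and Dudley's entropy integral to turn the covering-number bound into a bound on the Rademacher complexity. First I would define the one-sided deviation
\[
\Psi(S) = \sup_{g \in \cF_\gamma}\left( \expects{(x,y)\sim\cD}{g(x,y)} - \tfrac{1}{m}\sum_{(x,y)\in S}g(x,y)\right),
\]
and observe that every $g \in \cF_\gamma$ takes values in $[0,1]$, so changing any single sample point alters $\Psi(S)$ by at most $1/m$. A standard ghost-sample symmetrization gives $\expects{S}{\Psi(S)} \leq 2\,\expects{S,\sigma}{\hat{\mathcal{R}}_S(\cF_\gamma)}$, where $\hat{\mathcal{R}}_S(\cF_\gamma) = \expects{\sigma}{\sup_{g\in\cF_\gamma}\tfrac{1}{m}\sum_i \sigma_i g(x_i,y_i)}$ is the empirical Rademacher complexity and the $\sigma_i$ are i.i.d.\ Rademacher signs. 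Applying McDiarmid's inequality once to $\Psi$ and once to $\hat{\mathcal{R}}_S$ (both have bounded differences $1/m$) and union bounding yields, with probability at least $1-\delta$ over $S$,
\[
\Psi(S) \leq 2\,\hat{\mathcal{R}}_S(\cF_\gamma) + 3\sqrt{\frac{\log(2/\delta)}{2m}}.
\]

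Next I would invoke Dudley's chaining to bound $\hat{\mathcal{R}}_S(\cF_\gamma)$ by a covering-number integral. For any fixed $S$ and any $\epsilon$ below the diameter,
\[
\hat{\mathcal{R}}_S(\cF_\gamma) \leq 4\epsilon + \frac{12}{\sqrt{m}}\int_\epsilon^{D}\sqrt{\log N\bigl(\nu,\cF_\gamma|_S, \|\cdot\|_2^{\ell_2,m}\bigr)}\, d\nu,
\]
where $D$ is the diameter of $\cF_\gamma|_S$ in the normalized $\ell_2$ metric. Since $\cF_\gamma|_S \subseteq [0,1]^m$, the single constant vector $(\tfrac12,\ldots,\tfrac12)$ is a $\tfrac12$-cover of $\cF_\gamma|_S$, so $N(\nu,\cF_\gamma|_S,\|\cdot\|_2^{\ell_2,m})=1$ whenever $\nu\geq \tfrac12$ and the integral can be truncated at $1/2$. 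Upper bounding the empirical covering number by the uniform covering number $N_U(\nu,\cF_\gamma,m,\|\cdot\|_2^{\ell_2})$ (a supremum over samples of size $m$) preserves the inequality.

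Combining the two displays and taking the infimum over $\epsilon\in[0,1/2]$ inside the Rademacher bound gives the claimed inequality, since $\expects{(x,y)\sim \cD}{l_\gamma(f,x,y)} - \tfrac{1}{m}\sum_{(x,y)\in S} l_\gamma(f,x,y) \leq \Psi(S)$ uniformly over $f\in \cF$. The hard part will just be bookkeeping: ensuring the constants $2$, $4$, $12$, and $3$ come out correctly from, respectively, symmetrization, Dudley chaining, the chaining integral's constant, and two applications of McDiarmid; and justifying the truncation of the Dudley integral at $1/2$ from boundedness of the ramp loss. No new idea beyond standard empirical-process arguments is required (see for example \citet{mohri2018foundations,shalev2014understanding}, both already cited in the paper).
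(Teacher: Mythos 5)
Your proposal follows exactly the standard symmetrization-McDiarmid-Dudley route that the paper itself invokes for this theorem (the paper does not prove it in-house but defers to~\citet{shalev2014understanding} and~\citet{mohri2018foundations}, and the decomposition you sketch is the one those sources use). The accounting you give — a factor $2$ from symmetrization, $3\sqrt{\log(2/\delta)/2m}$ from two McDiarmid applications plus a union bound, and $4\epsilon + \tfrac{12}{\sqrt m}\int_\epsilon^{1/2}$ from Dudley chaining truncated at $1/2$ because the constant-$\tfrac12$ vector covers $[0,1]^m$ at scale $\tfrac12$ — reproduces the stated constants, so the proposal is correct and takes essentially the same approach as the paper.
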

It is only left to find a bound on the covering number of $\cF_{\gamma}$ from a bound on the covering number of $\cF$. The following lemma helps us finding this bound.
\begin{lemma}[From Covering Number of $\cF$ to Covering Number of $\cF_{\gamma}$]\label{lemma:cover_to_margin}
Let $\cF$ be a class of functions from $\cX$ to $\bR$ and $\cF_{\gamma} = \{f_{\gamma}:\cX\times\{-1,1\}\to [0,1]\mid 
 f_{\gamma}(x,y) = r_{\gamma}\left(-f(x).y\right),f\in\cF\}$ be the class of its composition with ramp loss. Then we have
\begin{equation*}
    N_U(\epsilon,\cF_{\gamma},m,\|.\|_2^{\ell_2}) \leq N_U(\gamma\epsilon,\cF,m,\|.\|_2^{\ell_2}). 
\end{equation*}
\end{lemma}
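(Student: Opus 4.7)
The plan is to leverage the fact that the ramp function $r_\gamma$ is $(1/\gamma)$-Lipschitz, so any $\ell_2$-extended cover of $\cF$ pulls back to an $\ell_2$-extended cover of $\cF_\gamma$ with the scale shrunk by a factor of $\gamma$. The only mild bookkeeping item is that the samples for $\cF$ live in $\cX$ while the samples for $\cF_\gamma$ live in $\cX \times \{-1,1\}$, so one must pass between the two.

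First I would fix an arbitrary sample $S' = \{(x_1,y_1),\ldots,(x_m,y_m)\} \subseteq \cX\times\{-1,1\}$ of size $m$ for $\cF_\gamma$, and let $S = \{x_1,\ldots,x_m\} \subseteq \cX$ denote its projection onto $\cX$. By Definition~\ref{def:unif_cn} there is an $\ell_2$-extended cover $C \subseteq \cF$ of $\cF_{|S}$ at scale $\gamma\epsilon$ with $|C| \leq N_U(\gamma\epsilon,\cF,m,\|.\|_2^{\ell_2})$. Given any $f \in \cF$, pick $\hat{f} \in C$ with
\begin{equation*}
\sqrt{\tfrac{1}{m}\sum_{i=1}^m (f(x_i)-\hat{f}(x_i))^2}\;\leq\;\gamma\epsilon.
\end{equation*}

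Next I would invoke the Lipschitz property of the ramp. Since $r_\gamma$ is $(1/\gamma)$-Lipschitz and $|y_i|=1$, for every $i$,
\begin{equation*}
\bigl|r_\gamma(-f(x_i)y_i)-r_\gamma(-\hat{f}(x_i)y_i)\bigr|\;\leq\;\tfrac{1}{\gamma}\bigl|f(x_i)y_i-\hat{f}(x_i)y_i\bigr|\;=\;\tfrac{1}{\gamma}\bigl|f(x_i)-\hat{f}(x_i)\bigr|.
\end{equation*}
Squaring, averaging over $i \in [m]$, and taking square roots yields
\begin{equation*}
\sqrt{\tfrac{1}{m}\sum_{i=1}^m\bigl(f_\gamma(x_i,y_i)-\hat{f}_\gamma(x_i,y_i)\bigr)^2}\;\leq\;\tfrac{1}{\gamma}\sqrt{\tfrac{1}{m}\sum_{i=1}^m (f(x_i)-\hat{f}(x_i))^2}\;\leq\;\epsilon.
\end{equation*}
Hence $\{\hat{f}_\gamma : \hat{f} \in C\}$ is an $\epsilon$-cover of $(\cF_\gamma)_{|S'}$ with respect to $\|.\|_2^{\ell_2,m}$ whose cardinality is at most $|C| \leq N_U(\gamma\epsilon,\cF,m,\|.\|_2^{\ell_2})$.

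Finally, since the sample $S'$ was arbitrary, taking the supremum over all $S'\subseteq\cX\times\{-1,1\}$ with $|S'|=m$ gives $N_U(\epsilon,\cF_\gamma,m,\|.\|_2^{\ell_2}) \leq N_U(\gamma\epsilon,\cF,m,\|.\|_2^{\ell_2})$, as required. There is no real obstacle here; the only thing to be careful about is handling the fact that the covering element for $\cF_\gamma$ is obtained by composing an element of the cover of $\cF$ with $r_\gamma(-\cdot\, y)$, so that the same Lipschitz element covers all labels $y$ simultaneously on the projected sample $S$.
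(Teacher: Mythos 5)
Your proof is correct and follows essentially the same route as the paper's: fix a sample, take a $\gamma\epsilon$-cover of $\cF$ on the projected points, and use the $(1/\gamma)$-Lipschitzness of $r_\gamma$ together with $|y_i|=1$ to push it forward to an $\epsilon$-cover of $\cF_\gamma$. The only difference is cosmetic — you make the projection from $\cX\times\{-1,1\}$ to $\cX$ explicit, which the paper leaves implicit.
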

\begin{proof}
First, it is easy to verify that $r_{\gamma}$ (with respect to the first input) is a Lipschitz continuous function with respect to $\|.\|_2$ with Lipschitz factors of $1/\gamma$; see e.g., section A.2 in \citet{bartlett2017spectrally}. 

Fix an input set $S=\{(x_1,y_1),\ldots,(x_m,y_m)\}\subset \cX \times \cY$ and let $C=\{\hat{f_i}_{|S}\mid \hat{f_i}\in\cF,\, i\in[r]\}$ be an $(\gamma\epsilon)$-cover for $\cF_{|S}$. For the simplicity of notation, we denote the composition of $\hat{f_i}$ with ramp loss by $\hat{f}_{{\gamma,i}}$. Now, we prove that $C_{\gamma}=\{\hat{f}_{\gamma,i_{|S}}\mid \hat{f}_{{\gamma,i}}\in\cF_{\gamma},\, i\in[r]\}$ is also an $\epsilon$-cover for ${\cF_\gamma}_{|S}$.

Given any $f\in\cF$, there exists $\hat{f_i}_{|S}\in C$ such that
\begin{equation*}
    \left\| (\hat{f}_i(x_1),\ldots,\hat{f}_i(x_m))-(f(x_1),\ldots,f(x_m))\right\|_2^{\ell_2}\leq \gamma\epsilon.
\end{equation*}
We can then write that
\begin{equation}\label{eq:B1}
    \begin{aligned}
          &\left\| (\hat{f}_{{\gamma,i}}(x_1),\ldots,\hat{f}_{{\gamma,i}}(x_m))-(f_{\gamma}(x_1),\ldots,f_{\gamma}(x_m))\right\|_2^{\ell_2}\\
      & = \sqrt{\frac{1}{m} \sum_{k=1}^{m} \left( \hat{f}_{{\gamma,i}}(x_k) - f_{\gamma}(x_k) \right)^2}\\
      & =  \sqrt{\frac{1}{m} \sum_{k=1}^{m} \left( r_{\gamma}\left(-\hat{f}_i(x_k).y_k\right) - r_{\gamma}(-f(x_k).y_k)\right)^2}.
    \end{aligned}
\end{equation}
From the Lipschitz continuity of $r_{\gamma}\left(x\right)$ we can conclude that for any $(x,y)\in\cX \times \cY$,
\begin{equation*}
    \begin{aligned}
     \left|r_{\gamma}\left(-f(x).y\right) - r_{\gamma}(-\hat{f}_i(x).y)\right|\leq \frac{1}{\gamma} \left|\hat{f}_i(x) - f(x)\right|
    \end{aligned}.
\end{equation*}
Taking the above equation into account, we can rewrite Equation~\ref{eq:B1} as
\begin{equation*}
    \begin{aligned}
      &\left\| (\hat{f}_{{\gamma,i}}(x_1),\ldots,\hat{f}_{{\gamma,i}}(x_m))-(f_{\gamma}(x_1),\ldots,f_{\gamma}(x_m))\right\|_2^{\ell_2}\\
      &\leq \frac{1}{\gamma} \sqrt{\frac{1}{m} \sum_{k=1}^{m} \left( (\hat{f}_i(x_k) - f(x_k))\right)^2}\\
      & \leq   \frac{1}{\gamma} \left\| (\hat{f}_i(x_1),\ldots,\hat{f}_i(x_m))-(f(x_1),\ldots,f(x_m))\right\|_2^{\ell_2}\\
      &\leq \frac{1}{\gamma}\gamma\epsilon\\
      &\leq \epsilon.
    \end{aligned}
\end{equation*}
In other words, for any $f_{\gamma_{|S}}\in{\cF_\gamma}_{|S}$ there exists $\hat{f}_{\gamma,i_{|S}}\in S$ such that $\left \|\hat{f}_{\gamma,i_{|S}} - f_{\gamma_{|S}}\right\|_2^{\ell_2}\leq \epsilon$ and, therefore, $C_{\gamma}$ is an $\epsilon$-cover for ${\cF_\gamma}_{|S}$ and the result follows.
\end{proof}

We can now combine Theorem~\ref{thm:unif_cover}, Lemma~\ref{lemma:uniconv_pac}, and Lemma~\ref{lemma:cover_to_margin} to state the following theorem, which implies that we can relate a bound on the covering number of a class $\cF$ to PAC learning $\cF$ with respect to ramp loss.

\begin{theorem}\label{thm:pac_cover_exact}
Let $\cF$ be a class of functions from $\cX$ to $\bR$. There exists an algorithm $\cA$ with the following property: For every distribution $\cD$ over $\cX\times \{-1,1\}$ and every $\delta \in (0,1)$, if $S$ is a set of $m$ i.i.d. samples from $\cD$, the algorithm outputs a hypothesis $f = \cA(S)$ such that with probability at least $1-\delta$ (over the randomness of $S$ and $\cA$) we have
\begin{equation*}
\begin{aligned}
       &\expects{(x,y)\sim \cD}{l_{\gamma}(f,x,y)}\\
       &\leq \inf_{f\in\cF}\expects{(x,y)\sim \cD}{l_{\gamma}(f,x,y)}+2\inf_{\epsilon\in [0,1/2]}\left\{2\left[4\epsilon + \frac{12}{\sqrt{m}}\int_{\epsilon}^{1/2}\sqrt{\log N_U(\nu,\cF_{\gamma},m,\|.\|_2^{\ell_2})}\,d\nu\right]\right\}+ 6 \sqrt{\frac{\log(2/\delta)}{2m}}.
\end{aligned}
\end{equation*}
\end{theorem}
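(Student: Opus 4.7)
The plan is to take $\cA$ to be empirical risk minimization with respect to the ramp loss, $\cA(S) = \arg\min_{f\in\cF}\hat L_S(f)$ where $\hat L_S(f) := \tfrac{1}{|S|}\sum_{(x,y)\in S}l_\gamma(f,x,y)$, and combine Theorem~\ref{thm:unif_cover} with the standard ERM-to-PAC reduction---essentially the template of Lemma~\ref{lemma:uniconv_pac}. Let $L(f) := \mathbb{E}_{(x,y)\sim\cD}[l_\gamma(f,x,y)]$ and let $\Delta(\delta)$ denote the entire deviation appearing on the right-hand side of Theorem~\ref{thm:unif_cover}, i.e., the $\inf_\epsilon\{\cdots\}$ term plus $3\sqrt{\log(2/\delta)/(2m)}$. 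The two summands of the final bound are already supplied by Theorem~\ref{thm:unif_cover}; the extra factor $2$ that distinguishes Theorem~\ref{thm:pac_cover_exact} from Theorem~\ref{thm:unif_cover} will come out of the ERM triangle-inequality argument.

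First I would upgrade Theorem~\ref{thm:unif_cover} to a \emph{two-sided} uniform deviation bound: on an event of probability $\geq 1-\delta$, $|L(f)-\hat L_S(f)| \leq \Delta(\delta)$ for every $f\in\cF$. This is immediate because the Rademacher complexity proof of Theorem~\ref{thm:unif_cover} is symmetric in sign of the Rademacher variables, so the identical bound applies to $\hat L_S(f) - L(f)$ with the same constants. Next, conditional on this event, setting $\hat f := \cA(S)$ and fixing any $f^* \in \cF$, I would write the three-term decomposition
\[
L(\hat f) - L(f^*) = \bigl[L(\hat f)-\hat L_S(\hat f)\bigr] + \bigl[\hat L_S(\hat f)-\hat L_S(f^*)\bigr] + \bigl[\hat L_S(f^*)-L(f^*)\bigr].
\]
The middle bracket is $\leq 0$ by the ERM definition of $\hat f$, while each outer bracket is $\leq \Delta(\delta)$ by the uniform-convergence event. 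Hence $L(\hat f) \leq L(f^*) + 2\Delta(\delta)$, and taking the infimum over $f^*\in\cF$ gives exactly the stated inequality: the doubling $2\Delta(\delta)$ explains both the leading coefficient $2$ in front of $\inf_\epsilon\{\cdots\}$ and the coefficient $6 = 2\cdot 3$ of $\sqrt{\log(2/\delta)/(2m)}$.

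The only step that is not pure bookkeeping is the symmetrization---extending Theorem~\ref{thm:unif_cover} from a one-sided bound on $L(f)-\hat L_S(f)$ to the two-sided version without inflating the constants. This is the main obstacle in the sense that one must check carefully that both the covering integral and the McDiarmid-style confidence term remain unchanged, but in fact the underlying Rademacher argument gives sign symmetry for free. Everything after it is the routine ERM-plus-triangle-inequality argument sketched above.
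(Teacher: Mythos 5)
Your proposal is correct and takes essentially the same route as the paper: the paper's proof also sets $\cA$ to be the ERM over the ramp loss, applies Theorem~\ref{thm:unif_cover} in both directions (as in Lemma~\ref{lemma:uniconv_pac}), and chains the three resulting inequalities to double the deviation term, yielding the coefficients $2$ and $6 = 2\cdot 3$. The symmetrization point you flag is indeed the one thing to verify, and it is safe here because the $\log(2/\delta)$ factor in Theorem~\ref{thm:unif_cover} already reflects a two-sided Rademacher-plus-McDiarmid bound.
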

\begin{proof}
    From Theorem~\ref{thm:unif_cover} we know that for every $f\in\cF$ with probability at least $1-\delta$ we have
    \begin{equation*}
\begin{aligned}
       &\expects{(x,y)\sim \cD}{l_{\gamma}(f,x,y)} \\
       & \leq \frac{1}{|S|}\sum_{(x,y)\in S}{l_{\gamma}(f,x,y)} +\inf_{\epsilon\in [0,1/2]}\left\{2\left[4\epsilon + \frac{12}{\sqrt{m}}\int_{\epsilon}^{1/2}\sqrt{\log N_U(\nu,\cF_{\gamma},m,\|.\|_2^{\ell_2})}\,d\nu\right]\right\}+ 3 \sqrt{\frac{\log(2/\delta)}{2m}}.
\end{aligned}
\end{equation*}
Lemma~\ref{lemma:uniconv_pac} suggests that if we choose algorithm $\cA$ such that $\cA(S) = \hat{f} = \arg\min_{f\in\cF}\frac{1}{|S|}\sum_{(x,y)\in S}l_{\gamma}(f,x,y)$ then for any $f\in\cF$ with probability at least $1-\delta$ we have 
    \begin{equation*}
\begin{aligned}
       &\expects{(x,y)\sim \cD}{l_{\gamma}(\hat{f},x,y)} \\
       &\leq \frac{1}{|S|}\sum_{(x,y)\in S}l_{\gamma}(\hat{f},x,y) +\inf_{\epsilon\in [0,1/2]}\left\{2\left[4\epsilon + \frac{12}{\sqrt{m}}\int_{\epsilon}^{1/2}\sqrt{\log N_U(\nu,\cF_{\gamma},m,\|.\|_2^{\ell_2})}\,d\nu\right]\right\}+ 3 \sqrt{\frac{\log(2/\delta)}{2m}}\\
       &\leq \frac{1}{|S|}\sum_{(x,y)\in S}l_{\gamma}(f,x,y) +\inf_{\epsilon\in [0,1/2]}\left\{2\left[4\epsilon + \frac{12}{\sqrt{m}}\int_{\epsilon}^{1/2}\sqrt{\log N_U(\gamma\nu,\cF,m,\|.\|_2^{\ell_2})}\,d\nu\right]\right\}+ 3 \sqrt{\frac{\log(2/\delta)}{2m}}\\
       &\leq \expects{(x,y)\sim \cD}{l_{\gamma}(f,x,y)} +2\inf_{\epsilon\in [0,1/2]}\left\{2\left[4\epsilon + \frac{12}{\sqrt{m}}\int_{\epsilon}^{1/2}\sqrt{\log N_U(\gamma\nu,\cF,m,\|.\|_2^{\ell_2})}\,d\nu\right]\right\}+ 6 \sqrt{\frac{\log(2/\delta)}{2m}}\\
        &\leq  \inf_{f\in\cF} \expects{(x,y)\sim \cD}{l_{\gamma}(f,x,y)} +2\inf_{\epsilon\in [0,1/2]}\left\{2\left[4\epsilon + \frac{12}{\sqrt{m}}\int_{\epsilon}^{1/2}\sqrt{\log N_U(\gamma\nu,\cF,m,\|.\|_2^{\ell_2})}\,d\nu\right]\right\}+ 6 \sqrt{\frac{\log(2/\delta)}{2m}}.
\end{aligned}
\end{equation*}
\end{proof}
In Appendix~\ref{app:upper_bound} we use the above theorem together with an approximation of the right hand side of the above inequality to find an upper bound on the sample complexity of PAC learning noisy recurrent neural networks with respect to ramp loss.

\section{Missing proof from Section~\ref{sec:covering_random}}\label{app:covering}
\subsection{Proof of Theorem~\ref{thm:cover_derandomization}}
\begin{proof}
Let $S=\{U_1,\ldots,U_m\}\subset\bR^{p\times T}$ be an input set and define $\rv{S}=\{\rv{U_1},\ldots,\rv{U_m}\}\subset \rv{\Delta_{p\times T}}$. Let $C=\{\rv{\hat{{f_1}}}_{|\rv{S}},\ldots,\rv{\hat{{f_r}}}_{|\rv{S}} \mid \rv{\hat{f}_r} \in \rv{\cF}, i \in [r]\}$ be an $\epsilon$-cover for $\rv{\cF}_{|\rv{S}}$ with respect to $d_{TV}^{\infty}$. Denote $\cH = \cE(\rv{\cF})$ and let $\hat{\cH}=\left\{\hat{h}_i(x)=\expects{\rv{\hat{f_i}}}{~\rv{\hat{f_i}}({x})} \mid i \in [r]\right\}\subset \cE(\rv{\cF})$ be a new set of non-random function.

Given any random function $\rv{f}\in\rv{\cF}$ and considering the fact that $C$ is an $\epsilon$-cover for $\rv{\cF}_{|\rv{S}}$ we know there exists $\rv{\hat{f_i}},\,i\in[r
]$ such that 
\begin{equation*}
    d_{TV}^{\infty}\left(\rv{\hat{f_i}}_{|\rv{S}},\rv{f}_{|\rv{S}}\right)=d_{TV}^{\infty}\left((\rv{\hat{f_i}}(\rv{U_1}),\ldots,\rv{\hat{f_i}}(\rv{U_m})),(\rv{f}(\rv{U_1}),\ldots,\rv{f}(\rv{U_m}))\right)\leq \epsilon.
\end{equation*}
From the above equation we can conclude that for any $k \in [m]$ we have $d_{TV}\left(\rv{\hat{f_i}}(\rv{U_k}),\rv{f}(\rv{U_k})\right)\leq \epsilon$. Further, for the corresponding $ h,\hat{h}_i \in \cE(\rv{\cF})$, we know that
\begin{equation*}
    \begin{aligned}
           &\hat{h}_i(U_k) = \expects{\rv{\hat{f_i}}}{~\rv{\hat{f_i}}({\rv{U_k}})} = \int_{\bR^d}  x\mD(\rv{\hat{f_i}}({\rv{U_k}}))(x)dx,\\
            &h(U_k) = \expects{\rv{f}}{~\rv{f}({\rv{U_k}})}=\int_{\bR^d}  x\mD(\rv{f}({\rv{U_k}}))(x)dx.
    \end{aligned}
\end{equation*}
 Denote $I = \mD(\rv{f}({\rv{U_k}}))$ and $\hat{I} = \mD(\rv{\hat{f_i}}({\rv{U_k}}))$. Define two new density functions $I_{diff}$ and $\hat{I}_{diff}$ as
\begin{equation*}
    \begin{aligned}
      I_{diff}(x) &= \left\{
    \begin{array}{ll}
       \displaystyle \frac{I(x) - \hat{I}(x)}{d_{TV}(I,\hat{I})} &  I(x)\geq \hat{I}(x)\\\\
        0 & \text{otherwise,} 
    \end{array}
    \right. \\
     \hat{I}_{diff}(x) &= \left\{
    \begin{array}{ll}
       \displaystyle \frac{\hat{I}(x) - I(x)}{d_{TV}(I,\hat{I})} &  \hat{I}(x)\geq I(x)\\\\
        0 & \text{otherwise.} 
    \end{array}
    \right.
    \end{aligned}
\end{equation*}
Also, we define $I_{min}$ as 
\begin{equation*}
    I_{min}(x) = \frac{\min\{I(x),\hat{I}(x)\}}{\int\min\{I(x),\hat{I}(x)\}dx} = \frac{\min\{I(x),\hat{I}(x)\}}{1-d_{TV}(I,\hat{I})}.
\end{equation*}
We can verify that
\begin{equation*}
    \begin{aligned}
     I(x) &= \left(1-d_{TV}(I,\hat{I})\right)I_{min}(x) + d_{TV}(I,\hat{I}).I_{diff}(x)\\
     \hat{I}(x) &= \left(1-d_{TV}(I,\hat{I})\right)I_{min}(x) + d_{TV}(I,\hat{I}).\hat{I}_{diff}(x).
    \end{aligned}
\end{equation*}
We then find the $\ell_2$ distance between $\hat{h}_i(U_k)$ and $h(U_k)$ by
\begin{equation*}
    \begin{aligned}
           &\left\|\hat{h}_i(U_k)-h(U_k)\right\|_2\\
           &=\left\|\int_{\bR^d}  x\hat{I}(x)dx - \int_{\bR^d}  xI(x)dx \right\|_2\\
           &=\left\|\int_{\bR^d}  x\left[\left(1-d_{TV}(I,\hat{I})\right)I_{min}(x) + d_{TV}(I,\hat{I}).\hat{I}_{diff}(x)\right]\right.\\ 
           & \left. - x\left[\left(1-d_{TV}(I,\hat{I})\right)I_{min}(x) + d_{TV}(I,\hat{I}).I_{diff}(x)\right]dx \right\|_2\\
           & = \left\|\int_{\bR^d}  xd_{TV}(I,\hat{I})\left[\hat{I}_{diff}(x) - I_{diff}(x)\right]dx\right\|_2\\
           & = d_{TV}(I,\hat{I})\left\|\int_{\bR^d}x\left[\hat{I}_{diff}(x) - I_{diff}(x)\right]dx\right\|_2 
           \\
           &\leq 2B\sqrt{q}\,d_{TV}\left(\rv{f}(\rv{U_k}),\rv{\hat{f_i}}(\rv{U_k})\right) && \text{(Bounded domain $[-B,B]^q$ and triangle inequality)}\\
           &\leq 2B\epsilon\sqrt{q}.
    \end{aligned}
\end{equation*}
Since this result holds for any $k \in [m]$, we have 
\begin{equation*}
    \begin{aligned}
     \|\hat{h_i}_{|S}-h_{|S}\|_2^{\ell_2} &= \sqrt{\frac{1}{m}\sum_{k=1}^m\left\|\hat{h_i}(U_k)-h(U_k)\right\|_2^2}\\
    & \leq \sqrt{\frac{1}{m}\sum_{k=1}^m (2B\sqrt{q})^2\left(d_{TV}\left(\rv{f}(\rv{U_k}),\rv{\hat{f_i}}(\rv{U_k})\right)\right)^2} \leq 2B\sqrt{q} \sqrt{\frac{1}{m}\sum_{k=1}^m \epsilon^2}\leq 2B\epsilon\sqrt{q}.
    \end{aligned}
\end{equation*}

Therefore, $\hat{\cH}_{|S}$ is a $2B\epsilon\sqrt{q}$ cover for $\cH_{|S}$ with respect to $\|.\|_2^{\ell_2}$ and $|\hat{\cH}_{|S}|=r$. This holds for any subset $S$ of $\bR^{p \times T}$ with $|S|=m$. Therefore,
\begin{equation*}
     N_U(2B\epsilon\sqrt{q},\cE(\rv{\cF}),m,\|.\|_2^{\ell_2}) \leq N_U(\epsilon,\rv{\cF},m,d_{TV}^{\infty},\rv{\Delta_{p\times T}})\leq N_U(\epsilon,\rv{\cF},\infty,d_{TV}^{\infty},\rv{\Delta_{p\times T}}).
\end{equation*}
\end{proof}
\end{document}